%% file: arxiv.tex
\documentclass[11pt,a4paper]{article}

\usepackage[dvipdfmx]{graphicx}
\usepackage{amsmath}
\usepackage{amssymb}
\usepackage{authblk}
\usepackage{bm}
\usepackage{caption}
\usepackage{color}

\usepackage[pagebackref=true,breaklinks=true,colorlinks=true,bookmarks=false]{hyperref}
\usepackage{mathtools}
\usepackage[numbers]{natbib}
\usepackage{stmaryrd}
\usepackage{tgtermes}

\usepackage[utf8]{inputenc}
\usepackage{url}      
\usepackage{booktabs}     
\usepackage{amsfonts}      
\usepackage{nicefrac}   
\usepackage{microtype}   
\usepackage{xcolor}    
\usepackage{fontawesome5}
\definecolor{linkblue}{HTML}{2563EB}
\hypersetup{urlcolor=linkblue}

\usepackage{amsmath,amssymb,amsthm,mathtools}
\usepackage{enumitem}
\usepackage{graphicx}
\usepackage{subcaption}
\usepackage{caption}
\usepackage{algpseudocode}
\usepackage{tcolorbox}

\newtheorem{theorem}{Theorem}
\newtheorem{lemma}{Lemma}
\newtheorem{corollary}{Corollary}
\newtheorem{proposition}{Proposition}

\theoremstyle{plain}
\newtheorem{assumption}{Assumption}

\theoremstyle{plain}
\newtheorem{remark}{Remark}

\DeclareMathOperator{\Var}{Var}
\newcommand{\var}[2][]{\Var_{#1}\left[#2\right]}

\DeclareMathOperator{\Tr}{Tr}
\newcommand{\op}{2}
\DeclareMathOperator{\expectation}{\mathbb{E}}
\newcommand{\e}[2][]{\expectation_{#1}\left[#2\right]}
\newcommand{\prob}[1]{\Prob\left(#1\right)}
\DeclareMathOperator{\cO}{\mathcal{O}}
\newcommand{\norm}[1]{\left\|#1\right\|}
\newcommand{\fnorm}[1]{\left\|#1\right\|_F}
\newcommand{\snorm}[1]{\left\|#1\right\|_2}
\newcommand{\opnorm}[1]{\left\|#1\right\|_{2}}
\newcommand{\inner}[2]{\langle #1,#2 \rangle}
\newcommand{\Kheadline}{50}
\renewcommand{\thefootnote}{\fnsymbol{footnote}}

\input{preamble.tex}

\crefname{assumption}{Assumption}{assumptions}
\Crefname{assumption}{Assumption}{Assumptions}
\crefname{fact}{fact}{facts}
\Crefname{fact}{Fact}{Facts}

\usepackage[toc,page,header]{appendix}
\usepackage{minitoc}

\renewcommand \thepart{}
\renewcommand \partname{}

\renewcommand*{\backrefalt}[4]{%
\ifcase #1 %
No citations.%
\or
(cited on p.~#2).%
\else
(cited on pp.~#2).%
\fi
}

\usepackage[top=30truemm,bottom=30truemm,left=25truemm,right=25truemm]{geometry}

\title{Denoise First, Orthogonalize Later: Understanding Momentum in Muon via Spectral Filtering}
\author[1,2]{Xianliang Li\thanks{Equal contribution.}}
\author[3,2]{Zihan Zhang\protect\footnotemark[1]}
\author[4]{Weiyang Liu}
\author[1,2,5,6]{Han Bao\thanks{Corresponding author.}}
\affil[1]{The Institute of Statistical Mathematics}
\affil[2]{The Graduate Institute for Advanced Studies, SOKENDAI}
\affil[3]{National Institute of Informatics}
\affil[4]{The Chinese University of Hong Kong}
\affil[5]{Tohoku University}
\affil[6]{RIKEN AIP}

\date{}

\begin{document}
\maketitle

\renewcommand{\thefootnote}{\arabic{footnote}}
\setcounter{footnote}{0}

\doparttoc
\faketableofcontents

{
\vspace{-12.5mm}
\begin{center}
    \fontsize{11pt}{13pt}\selectfont
    \faGlobe~Project page: \href{https://yinleung.com/denoise-ortho/}{\texttt{yinleung.com/denoise-ortho}}
    \vspace{3mm}
\end{center}
}

\begin{abstract}
Muon has recently demonstrated strong empirical performance in large language model training, but the theoretical role of momentum in Muon remains unclear. Existing analyses of Muon either remove momentum to study spectral updates in isolation, or retain momentum without explaining why it improves empirical performance. Our work bridges this gap by showing momentum in Muon acts as a spectral filter. Under a structured signal-plus-perturbation gradient model, we prove that momentum suppresses perturbations while preserving the dominant signal, thereby enlarging the spectral gap between them. This enlarged gap stabilizes the singular subspaces of the matrix passed to Muon's orthogonalization step, making the resulting update more reliable. We further show that applying momentum before orthogonalization achieves \emph{provably stronger} alignment with the signal component of the gradient than either reversing this order or simply removing momentum. Experiments across diverse tasks, including LLM pretraining, support our theoretical analysis. More broadly, our theory offers a starting point for understanding the benefits of momentum in other matrix-based optimizers.
\end{abstract}

\section{Introduction}
\label{sec:intro}

Modern neural networks, especially large language models (LLMs), typically consist of billions of parameters and require substantial computational resources for training. As the model scale continues to grow, optimizer design has become increasingly important for efficient large-scale learning. In LLM training, adaptive first-order, coordinate-wise methods such as Adam~\citep{kingma2015adam} and AdamW~\citep{loshchilov2018decoupled} remain the de~facto choice. More recently, matrix-based optimizers such as Shampoo~\citep{gupta2018shampoo}, SOAP~\citep{vyas2025soap}, and Scion~\citep{pethick2025training} have received growing interest as alternatives that exploit matrix structure in gradient updates. Among these optimizers, Muon~\citep{jordan2024muon}, which approximately orthogonalizes the momentum matrix using a Newton--Schulz iteration, has shown promising empirical performance in LLM pretraining~\citep{liu2025muonscalablellmtraining}, and recent optimizer benchmarks further identify Muon, along with matrix-based methods more broadly, as a strong  alternative to AdamW~\citep{semenov2025benchmarking,wen2025fantastic}.

A second long-standing thread in optimizer design is the role of momentum. Classical analyses establish accelerated rates for momentum on smooth and strongly convex problems~\citep{polyak1964momentum,nesterov1983accelerated,sutskever2013on}. \citet{cutkosky2020momentum} prove that momentum eliminates the need for large batches in normalized SGD, which is the vector-side analog of Muon's polar update in the sense that both retain only the direction of the gradient. The practical picture in deep learning is, however, more nuanced. \citet{wang2024marginal} show that at small learning rates and high gradient noise, momentum offers only marginal benefit, suggesting that its provable acceleration sits in the deterministic (or large-learning-rate) regime rather than in stochastic noise reduction per~se. More recent work~\citep{li2025momentum} reframes momentum in the frequency domain as a low-pass filter on the gradient stream, suggesting that the relevant mechanism may be filtering rather than acceleration. Despite these developments, neither Muon's strong empirical performance nor the role momentum plays inside it is well understood.

In practice, Muon's pretraining performance is closely tied to momentum --- see \cref{fig:end2end}, where we see a clear trend that Polar-only pipeline (i.e., Muon without momentum) underperforms in pretraining --- yet existing theory has not successfully characterized the role of momentum inside Muon.
Substantial recent work analyzes Muon-style polar updates from the aspects of its spectral property, implicit bias, and feature learning, but \emph{without momentum}~\citep{ma2026preconditioning,su2025isotropic,davis2025spectral,li2026muon,wang2025muon,kim2026sharp}.
By construction, these analyses cannot explain momentum's benefit.
A thorough review of these works is deferred to \Cref{sec:related}.
More recently,
\citet{chen2026muon} establish that Muon converges to the spectrally-constrained minimizer, though its convergence rate deteriorates with heavier momentum.
\citet{kovalev2025understanding} shows convergence of Muon to a stationary point but heavier momentum brings no further improvement.
\citet{shulgin2026beyond} analyze Muon under an inexact LMO that models the Newton--Schulz approximation error, yet its theoretical bound deteriorates in the heavy-momentum regime.
\citet{fan2025implicitbiasspectraldescent} establish implicit bias of Muon with momentum, but still its convergence rate blows up with too heavy momentum.
On the whole, theory tells us that heavy momentum brings no benefit --- and may even degrade convergence --- inside Muon, but this obviously contradicts what we observe in practice.

\begin{figure}[htbp]
\centering
\begin{subfigure}[t]{0.49\linewidth}
  \centering
  \includegraphics[width=\linewidth]{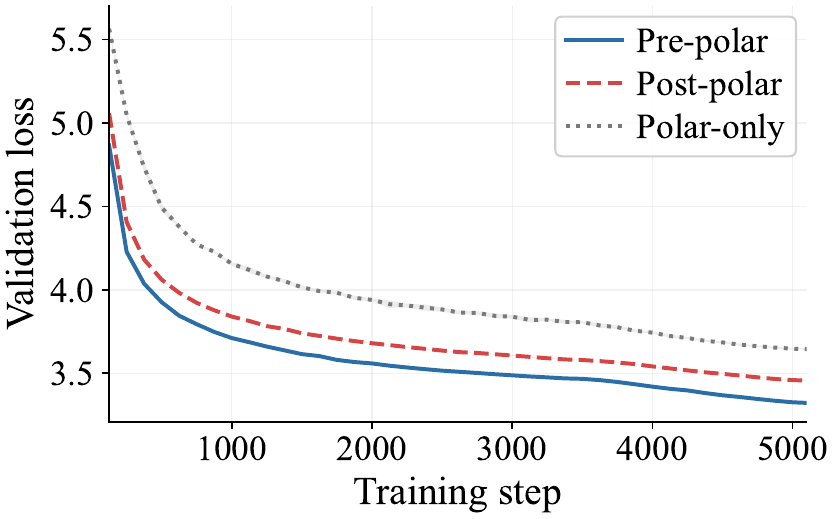}
  \caption{NanoGPT.}
  \label{fig:end2end-nanogpt}
\end{subfigure}
\hfill
\begin{subfigure}[t]{0.49\linewidth}
  \centering
  \includegraphics[width=\linewidth]{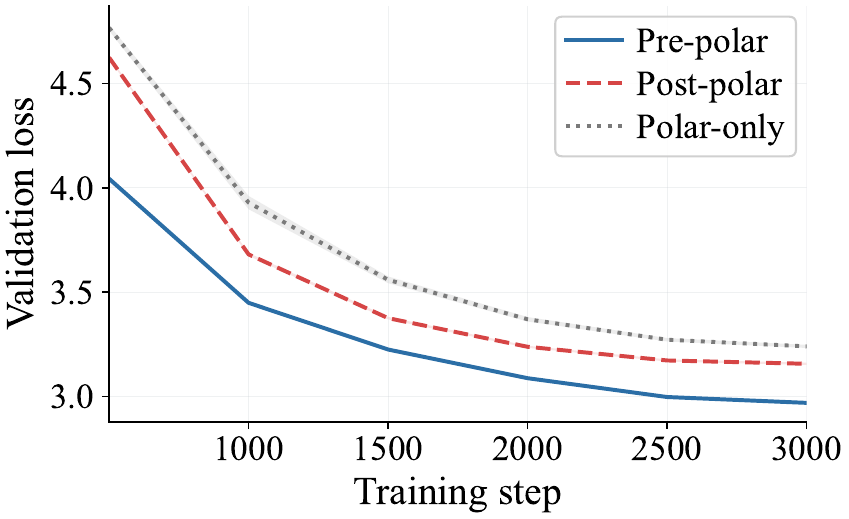}
  \caption{LLaMA 350M.}
  \label{fig:end2end-llama350m}
\end{subfigure}
\caption{End-to-end validation loss comparisons across (a) NanoGPT training and (b) LLaMA 350M training. The Muon Pre-polar pipeline outperforms Post-polar and Polar-only pipelines. The full experimental settings are in \Cref{app:exp-tasks}.}
\label{fig:end2end}
\end{figure}

The relationship between Muon's polar update and momentum is more nuanced.
Whereas Orthogonal-SGDM~\citep{tuddenham2022orthogonalising}, proposed prior to Muon, orthogonalizes each per-step gradient \emph{before} momentum smoothing, Muon applies the polar update \emph{after} momentum smoothing, significantly outperforming particularly on language model pretraining~\citep{jordan2024muon}.
Therefore, Muon's success encourages us to closely look at (i) why momentum smoothing matters practically, and (ii) why the polar factor should be put after momentum smoothing.
To this end, we compare three Muon pipelines for the weight update $\Delta W_t$ at iteration $t$, each combining a \emph{momentum buffer} $M_t=\beta M_{t-1}+(1-\beta)G_t$ and the polar factor $\cO(X)=UV^\top$ (from the thin SVD $X=U\Sigma V^\top$) on the gradient stream $\{G_t\}$ in different orders:
\begin{align}
\text{Pre-polar:}&\quad \Delta W_t\propto\cO(M_t),
\label{eq:forward}\\
\text{Post-polar:}&\quad \widetilde M_t=\beta\widetilde M_{t-1}+(1-\beta)\cO(G_t),\quad \Delta W_t\propto\widetilde M_t, \quad \text{and}
\label{eq:reversed}\\
\text{Polar-only:}&\quad \Delta W_t\propto\cO(G_t).
\label{eq:momefree}
\end{align}
Pre-polar update~\cref{eq:forward} is the standard Muon update~\citep{jordan2024muon}. Post-polar update~\cref{eq:reversed} formalizes the Orthogonal-SGDM variant. Polar-only update~\cref{eq:momefree} removes momentum entirely as a no-momentum baseline.
\Cref{fig:end2end} shows that Pre-polar dominates both Post-polar and Polar-only throughout training.
Two conclusions follow.
First, Pre-polar and Post-polar pipelines use the same two operations in opposite orders, so any gap between them must come from the ordering itself, because momentum and orthogonalization do not commute as numerical operations. Second, the gap from Pre-polar to Polar-only shows that momentum plays a role that orthogonalization alone cannot fulfill: momentum is an essential component of Muon, not a cosmetic acceleration trick.
This raises the central question of our interest:

\begin{quote}\emph{What role does momentum play on the gradient stream, and why does placing it before orthogonalization matter inside Muon?}\end{quote}

This paper supplies the missing puzzle of Muon theory. Momentum first acts as a spectral filter on the gradient stream. Inside Muon, this filtering must happen before orthogonalization. The polar factor preserves singular directions but replaces all singular values with ones, so when it acts directly on a noisy gradient it erases the amplitude gap between signal and noise directions, and we shall never distinguish signal and noise even after undergoing the subsequent momentum. When momentum acts first, it filters the gradient stream to preserve the persistent signal component and attenuate the perturbation, so the polar factor operates on a matrix whose singular-value structure already separates signal from noise. The guiding principle is simple: \emph{denoise first, then orthogonalize}.

\paragraph{Contributions.}
\begin{enumerate}[leftmargin=*]
\item \textbf{Spectral gap and singular-subspace reliability (\Cref{thm:spectral-gap} and \Cref{cor:direction}).} Under a structured signal-plus-perturbation gradient model with a coherent signal, a bounded variance mean-zero orthogonal (BVMZOS) perturbation, and the signal persistence assumption, momentum solely opens a spectral gap between the $r$-th and $(r{+}1)$-th singular values that grows with the momentum window size $T$, and aligns the top-$r$ singular subspaces of $M_t$ with those of the coherent signal. This benefits Pre-polar because the subsequent orthogonalization operates on a reliable momentum matrix rather than on the raw gradient matrix.

\item \textbf{Pre-polar dominance and quantitative separation (\Cref{thm:recovery,thm:separation}).} For any fixed gradient signal matrix and any absolutely continuous perturbation distribution, Pre-polar pipeline provably dominates both Post-polar and Polar-only pipelines in expected signal alignment once the momentum window size $T$ is sufficiently large. In the rank-1 spiked Gaussian model under low Signal-to-Noise Ratio (SNR) regime, this separation is quantitative: Polar-only alignment vanishes as the matrix size $m\to\infty$, while Pre-polar continues to recover the signal.

\item \textbf{Empirical validation across diverse tasks.} Synthetic spiked-perturbation models, CIFAR-10 and NanoGPT stationary and trajectory probes across layers and training checkpoints, and end-to-end NanoGPT and LLaMA 350M training support every prediction of \Cref{thm:spectral-gap}, \Cref{cor:direction}, and \Cref{thm:recovery}.
\end{enumerate}

\section{Preliminary and Analysis Setup}
\label{sec:setup}

\paragraph{Notation.}
Let $G_t\in\Rbb^{m\times n}$ ($m\ge n$) denote the gradient matrix at iteration $t$.
For the thin SVD decomposition $X=U\Sigma V^\top$ with orthogonal matrices $U\in\Rbb^{m\times n}$ and $V\in\Rbb^{n\times n}$, the polar factor is $\cO(X) \coloneqq UV^\top$.
We write $\sigma_k(X)$ for the $k$-th largest singular value, $\hat u_k(X)$, $\hat v_k(X)$ for the corresponding singular vectors, $\inner{A}{B}_F \coloneqq \mathrm{tr}(A^\top B)$ for the Frobenius inner product, $\fnorm{A}\coloneq \sqrt{\operatorname{tr}(A^\top A)} = \sqrt{\sum_{i, j} A^2_{ij}}$ for the Frobenius norm, and $\snorm{\cdot}$ for the operator norm.

\paragraph{Momentum buffer.}
For the momentum coefficient $\beta\in[0,1)$, we define the \emph{momentum buffer} via the recursion
\begin{equation}
M_t=\beta M_{t-1}+(1-\beta)G_t,
\label{eq:ema}
\end{equation}
with the standard zero initialization $M_0=0$. The recursion unrolls to a convolution of the gradient stream as
\[
M_t=\sum_{s=0}^{t-1} h_s\, G_{t-s} \quad \text{with the momentum kernel $h_s\coloneqq(1-\beta)\beta^s$.}
\]
The form of~\cref{eq:ema} is the EMA normalization~\citep{gardner1985exponential} of the more general first-order recursion $M_t=\beta M_{t-1}+\gamma G_t$ with scalar $\gamma>0$. Because the polar factor is scale-invariant, Pre-polar update $\cO(M_t)$ depends only on the decay $\beta$, so the EMA buffer~\cref{eq:ema} and the Polyak/heavy-ball momentum ($\gamma=1$)~\citep{sutskever2013on} yield the same update after being passed to the polar factor $\cO(\cdot)$. \Cref{app:gain-invariance} records the derivation, and \Cref{app:initialization} discusses arbitrary initialization. Nesterov momentum, common in Muon implementations, applies the polar factor not to the momentum buffer $M_t$ in~\cref{eq:ema} but to a linear combination $\nu\,M_t + \kappa\,G_t$ of the $M_t$ and the current gradient. This is not covered by our analysis. We record its kernel weights and effective sample size for completeness in \Cref{app:nesterov}.

\paragraph{Effective window size.} Based on the momentum buffer~\cref{eq:ema}, the analysis uses the \emph{effective window size}
\begin{equation}
\label{eq:effective-window-size}
T \coloneqq \frac{1}{1-\beta},
\end{equation} 
of the momentum kernel $(1-\beta)\beta^s$~\citep{ghosh2026understanding}. As momentum becomes heavier, gradient smoothing is effective across longer horizon $T$.


\paragraph{Gradient model.}
\label{sec:assumptions}
Our results study the momentum buffer~\cref{eq:ema} applied to a gradient stream with the following \emph{signal-plus-perturbation} structure.

\begin{assumption}[Structured gradient decomposition]
\label{ass:decomp}
For each gradient-update step $t$,
\begin{equation}
G_t = G_t^{\mathrm{sig}} + \Xi_t,
\label{eq:decomp}
\end{equation}
where the following two conditions are satisfied:
\begin{enumerate}[label=(\alph*),leftmargin=*]
\item \textbf{(Coherent signal.)}
$G_t^{\mathrm{sig}}=\sum_{k=1}^r\lambda_k(t)u_kv_k^\top$,
where $\{u_k\}_{k=1}^r\subset\Rbb^m$ and $\{v_k\}_{k=1}^r\subset\Rbb^n$ are \emph{temporally coherent} (i.e., deterministic and time-invariant) orthonormal families, and the \emph{signed coordinate} $\lambda_k(t)$ of $G_t^{\mathrm{sig}}$ along the $k$-th signal direction $u_kv_k^\top$ is a real-valued random variable on the same probability space as the perturbation matrix $\Xi_t$ (introduced below), with finite second moment.

\item \textbf{(Bounded variance mean-zero orthogonal perturbation)}
$\{\Xi_t\}_{t\ge 0}$ is an $m\times n$ matrix-valued \emph{bounded variance mean-zero orthogonal sequence (BVMZOS)}
: there exists $\eta>0$ such that for every $t\ge 0$,
\[
\e{\Xi_t} = 0, \qquad \text{and}\qquad \e{\fnorm{\Xi_t}^2}\le \eta.
\]
Moreover, for any $t_1, t_2\ge 0$ with $t_1\neq t_2$,
\[
\e{\inner{\Xi_{t_1}}{\Xi_{t_2}}_F} = 0.
\]
\end{enumerate}
\end{assumption}

\Cref{ass:decomp}(b) is sufficiently general to cover time-dependent noise and finite-variance heavy-tailed noise, going beyond the standard i.i.d. sub-Gaussian setting.
The rank-$r$ decomposition can in fact address more general cases by absorbing higher-rank yet marginal components into the perturbation $\Xi_t$ at the cutoff parameter $r$.
Empirical and theoretical evidence suggests that neural network gradients maintain effectively low rank~\citep{gur2018gradient,ba2022high}, and on real neural network gradients $r$ can be regarded as an effective rank where the spectrum has its dominant spikes~\citep{mousavi2023gradient,braun2026spectral}.

\begin{remark}[Matrix sensing example]
\label{rem:components}
While we introduce \Cref{ass:decomp} as a theoretical proxy to practically observed gradient streams in large-scale training, it is naturally satisfied by a standard matrix-sensing example.
In this setting, the signal component $G_t^\mathrm{sig}$ corresponds to the full-batch gradient lying in a time-invariant low-rank subspace, while the BVMZOS perturbation $\Xi_t$ corresponds to the mini-batch sampling noise.
\\
Consider the regression model
\[
y=\inner{W^*}{X}_F+\epsilon,
\]
where $W^*\in\Rbb^{m\times n}$ is a fixed but unknown matrix, $X\in\operatorname{span}(\{u_kv_k^\top\}_{k=1}^r)$ is the sensing matrix assumed to be low-rank, and $\epsilon$ is mean-zero noise.
Write
\(X_i=\sum_{k=1}^r\mu_{i,k}u_kv_k^\top\)
with some $\mu_{i,k}\in\Rbb$.
At the parameter $W=W_t$,
the least-squares loss
\[
\mathcal{L}(W)  \coloneqq  \frac{1}{2N}\sum_{i=1}^N(y_i-\inner{W}{X_i}_F)^2
\]
has gradient
\[
\nabla\mathcal L(W_t)
= \sum_{k=1}^r\lambda_k(t)u_kv_k^\top,
\]
where the signed coordinate is $\lambda_k(t) \coloneqq \frac1N\sum_{i=1}^N \mu_{i,k}(\inner{W_t}{X_i}_F-y_i)$.
Thus the full-batch gradient has exactly the coherent-signal structure. If the full-batch gradient is replaced with an unbiased mini-batch estimator,
the mini-batch sampling noise satisfies \Cref{ass:decomp}(b), provided that the sampling is independently conditional on the past.

\end{remark}

\paragraph{Signal persistence.}

In addition to the temporal coherency in \Cref{ass:decomp},
the signed coordinates $\lambda_k(t)$ are assumed to weakly drift in time.
We measure this drift relative to the \emph{root-mean-squared (RMS) coefficient scale}
$\lambda_k \coloneqq \sqrt{\frac{1}{|\mathcal{I}|}\sum_{t\in \mathcal{I}}\e{\lambda_k(t)^2}},$
where $\mathcal{I}$ is a fixed deterministic interval where all our subsequent discussion takes place,
and the expectation is taken over the randomness of the perturbation matrix $\set{\Xi_t}$.
This is a finite, deterministic scalar by \Cref{ass:decomp}(a).
Without loss of generality, we index the coherent signal directions so that $\lambda_1\ge\lambda_2\ge\cdots\ge\lambda_r>0$. We use the threshold $t\ge c_0 T$ throughout, where $c_0>0$ is a fixed constant chosen so that $\beta^{c_0 T}$ is negligible (\Cref{app:initialization}).

The quantitative statement is given as follows:

\begin{assumption}[Signal persistence]
\label{ass:persistence}
Define the momentum-filtered signed coordinate
\[
\bar\lambda_k(t)  \coloneqq  (1-\beta)\sum_{\tau\ge 0}\beta^\tau\,\lambda_k(t-\tau).
\]
There exists $c_{\mathrm{sig}}\in(0,1]$, irrespective of $\beta$ or $T=1/(1-\beta)$, such that for all $t\ge c_0 T$ and $k = 1,\ldots, r$,
\begin{equation}
|\bar\lambda_k(t)|\ge c_{\mathrm{sig}}\lambda_k.
\label{eq:persistence}
\end{equation}
\end{assumption}

This says the momentum-filtered coordinate $|\bar\lambda_k(t)|$ is sufficiently bounded away from zero, 
even if the unsmoothed $\lambda_k(t)$ could be close to zero.
If the signed coordinate admits a specific form such as $\lambda_k(t)=\bar\mu_k+\xi_k(t)$ with $|\bar\mu_k|>0$ and zero-mean sub-Gaussian $\xi_k(t)$, \Cref{ass:persistence} holds with high probability for large enough window $T$ (\Cref{app:persistence}).

\section{Filtering Effect of Momentum: Spectral Gap and Subspace Reliability}
\label{sec:spectral-gap}

Before investigating the interaction between momentum and the polar factor, we present our first analysis focusing on the benefit of momentum in signal recovery, throughout this section.
The full proofs of the statements in this section are deferred to \Cref{app:proof-thm1-cor-1}.

\paragraph{Amplitude recovery (in theory).}
As the effective window size $T = 1/(1-\beta)$ grows, the momentum buffer $M_t$ averages out the perturbations while preserving the coherent signal $G_t^{\mathrm{sig}}$.
\Cref{thm:spectral-gap} below quantifies this: the tail singular values decay as $(2T-1)^{-1/4}$, creating a spectral gap with a larger effective window size $T$.

\begin{theorem}[Spectral gap of the momentum buffer]
\label{thm:spectral-gap}
Under \Cref{ass:decomp,ass:persistence}, for $m\ge n$, with probability of at least $1-(2T-1)^{-\frac12}$, we have
\begin{align}
\sigma_k(M_t)
\ge
c_{\mathrm{sig}}\lambda_k
- \frac{{\sqrt{\eta}}}{(2T-1)^{\frac14}},
\quad k=1,\ldots,r, \text{~~~and~~~}
\sigma_{r+1}(M_t)
\le \frac{{\sqrt{\eta}}}{(2T-1)^{\frac14}}.
\label{eq:noise-bound}
\end{align}
\end{theorem}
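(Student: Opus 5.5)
Split the momentum buffer as $M_t = S_t + N_t$, where
\[
S_t \coloneqq \sum_{s\ge 0} h_s G^{\mathrm{sig}}_{t-s} = \sum_{k=1}^r \bar\lambda_k(t)\, u_k v_k^\top
\quad\text{and}\quad
N_t \coloneqq \sum_{s\ge 0} h_s \Xi_{t-s}.
\]
We take $t\ge c_0T$, so the truncation from zero initialization is negligible (\Cref{app:initialization}) and the sums can be treated as running over all $s\ge0$. By \Cref{ass:decomp}(a), the families $\{u_k\}$ and $\{v_k\}$ are orthonormal and time-invariant. Hence $S_t$ is already written in SVD form, up to signs and reordering: its singular values are $|\bar\lambda_k(t)|$ for $k\le r$, and zero for $k>r$. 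By \Cref{ass:persistence}, the singular values satisfy $|\bar\lambda_k(t)|\ge c_{\mathrm{sig}}\lambda_k$.

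Weyl's inequality, $|\sigma_k(S_t+N_t)-\sigma_k(S_t)|\le \snorm{N_t}$, then reduces both claims of \Cref{thm:spectral-gap} to showing
\[
\snorm{N_t}\le \sqrt{\eta}\,(2T-1)^{-1/4}
\]
with the stated probability. Two details need care in this step:
\begin{itemize}
\item The lower bound compares $\sigma_k(M_t)$ against the $k$-th largest of the $|\bar\lambda_j(t)|$, not against $|\bar\lambda_k(t)|$ itself. Since the ordering of the $|\bar\lambda_j(t)|$ may differ from that of the $\lambda_j$, we need a small min–max argument: the $k$-th largest value is at least $\min_{j\le k}c_{\mathrm{sig}}\lambda_j = c_{\mathrm{sig}}\lambda_k$.
\item For the upper bound, $\sigma_{r+1}(S_t)=0$ because $S_t$ has rank at most $r$.
\end{itemize}

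For the noise, bound the operator norm by the Frobenius norm and compute the second moment using the orthogonality in \Cref{ass:decomp}(b). All cross terms $\e{\inner{\Xi_{t-s}}{\Xi_{t-s'}}_F}$ with $s\ne s'$ vanish, so
\[
\e{\fnorm{N_t}^2}=\sum_s h_s^2\,\e{\fnorm{\Xi_{t-s}}^2}\le \eta(1-\beta)^2\sum_s\beta^{2s}
=\eta\,\frac{1-\beta}{1+\beta}=\frac{\eta}{2T-1},
\]
using $T=1/(1-\beta)$, which gives $(1-\beta)/(1+\beta)=1/(2T-1)$. Markov's inequality applied to $\fnorm{N_t}^2$ at the threshold $\eta(2T-1)^{-1/2}$ then gives
\[
\Pr\!\left(\fnorm{N_t}^2 > \eta\,(2T-1)^{-1/2}\right)\le \frac{\eta/(2T-1)}{\eta\,(2T-1)^{-1/2}}=(2T-1)^{-1/2}.
\]
On the complementary event, $\snorm{N_t}\le\fnorm{N_t}\le \sqrt{\eta}\,(2T-1)^{-1/4}$, which is exactly the rate and the failure probability in the statement. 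This balanced choice of threshold explains the $1/4$ exponent.

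The main obstacle is justifying the infinite-sum manipulations. First, $N_t$ must be well-defined in $L^2$; this follows because $\sum_s h_s^2<\infty$ and the terms are orthogonal, so the partial sums are Cauchy in $L^2$. Second, under exact zero initialization the sum stops at $s=t-1$, which only shrinks $\sum h_s^2$, so the variance bound still holds. The signal part under finite truncation is covered by the persistence assumption, which is stated for the $\beta$-filtered coordinates at $t\ge c_0T$. All remaining steps are routine.
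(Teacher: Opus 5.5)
Your proposal is correct and follows essentially the same route as the paper. You split $M_t$ into the filtered signal $\sum_k\bar\lambda_k(t)u_kv_k^\top$ plus the filtered perturbation (which the paper calls $S_t$, not your $N_t$). You then control the singular values via Courant--Fischer, packaged as Weyl's inequality; your ordering argument $\min_{j\le k}c_{\mathrm{sig}}\lambda_j=c_{\mathrm{sig}}\lambda_k$ is exactly what the paper obtains by testing on $\operatorname{span}\{v_1,\ldots,v_k\}$. Finally you use $\snorm{\cdot}\le\fnorm{\cdot}$ with $\e{\fnorm{\cdot}^2}\le\eta/(2T-1)$ and Chebyshev/Markov at $u=(2T-1)^{1/4}$, so both bounds hold on that single event with the stated probability.
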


Hence, the spectral gap widens with a larger effective window size $T$, or equivalently, with a heavier momentum $\beta$.
The perturbation floor $O((2T-1)^{-1/4})$ in \cref{eq:noise-bound} monotonically decays.
While this behavior is conceptually intuitive --- momentum smoothening out noise perturbations --- this spectral gap decay behavior aligns well with the practical benefit of momentum at near-1 $\beta$, in stark contrast to the standard optimization analysis whose theoretical guarantees deteriorate with large~$\beta$~\citep{chen2026muon,kovalev2025understanding,fan2025implicitbiasspectraldescent,shulgin2026beyond}.

\begin{figure}[htbp]
  \centering
  \begin{subfigure}[t]{0.32\linewidth}
    \centering
    \includegraphics[width=\linewidth]{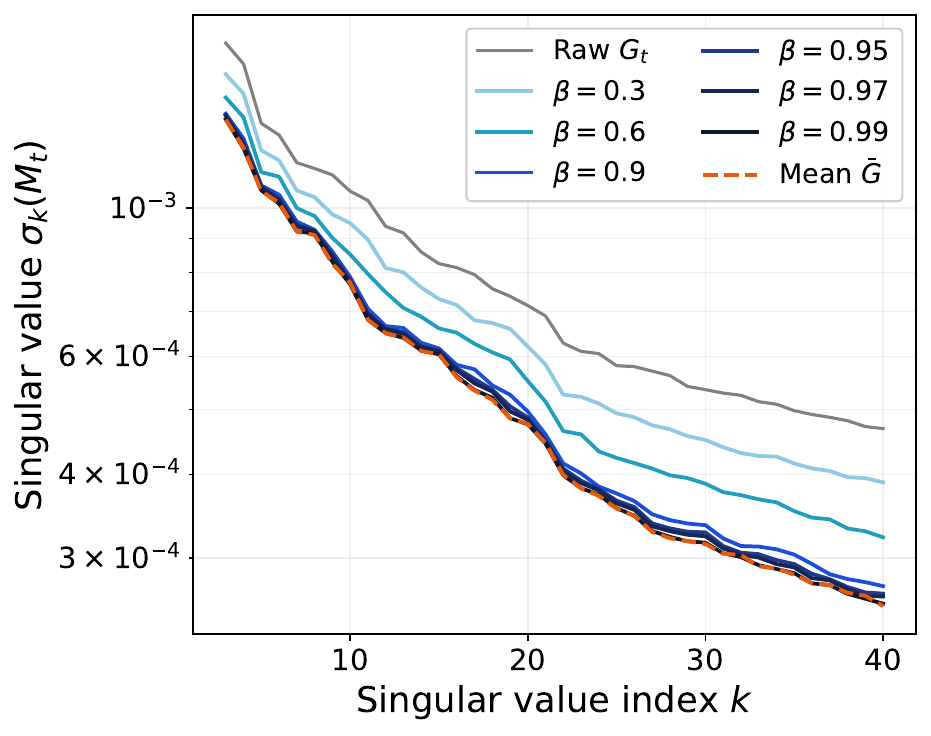}
    \caption{Filtered singular value spectra.}
    \label{fig:ch1-frozen-spectrum}
  \end{subfigure}
  \hfill
  \begin{subfigure}[t]{0.32\linewidth}
    \centering
    \includegraphics[width=\linewidth]{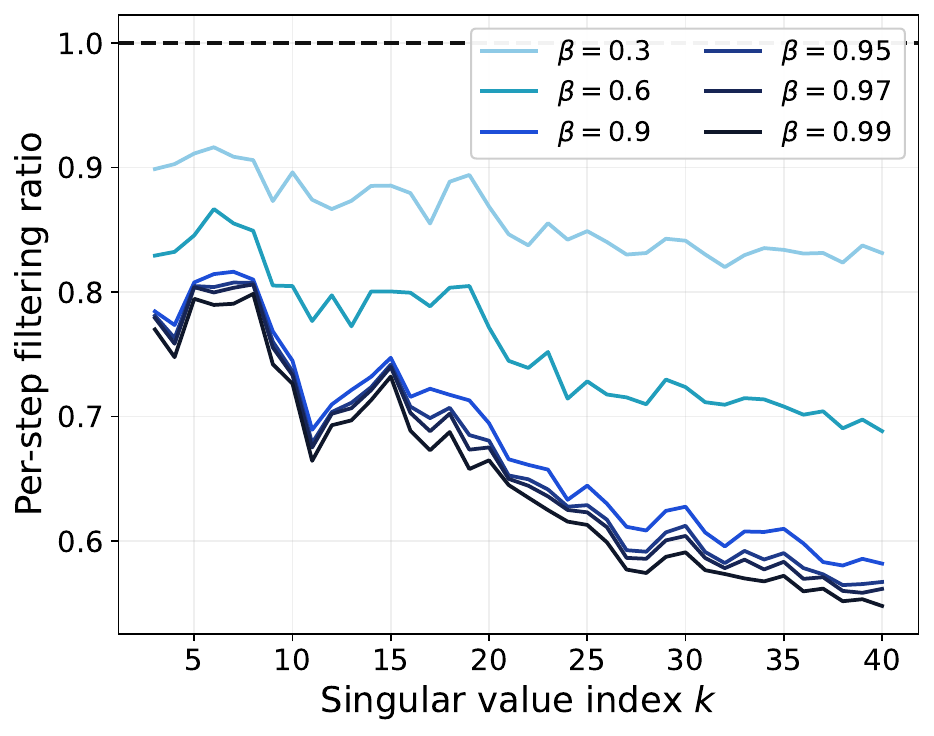}
    \caption{Per-step filtering ratio.}
    \label{fig:ch1-frozen-ratio}
  \end{subfigure}
  \hfill
  \begin{subfigure}[t]{0.32\linewidth}
    \centering
    \includegraphics[width=\linewidth]{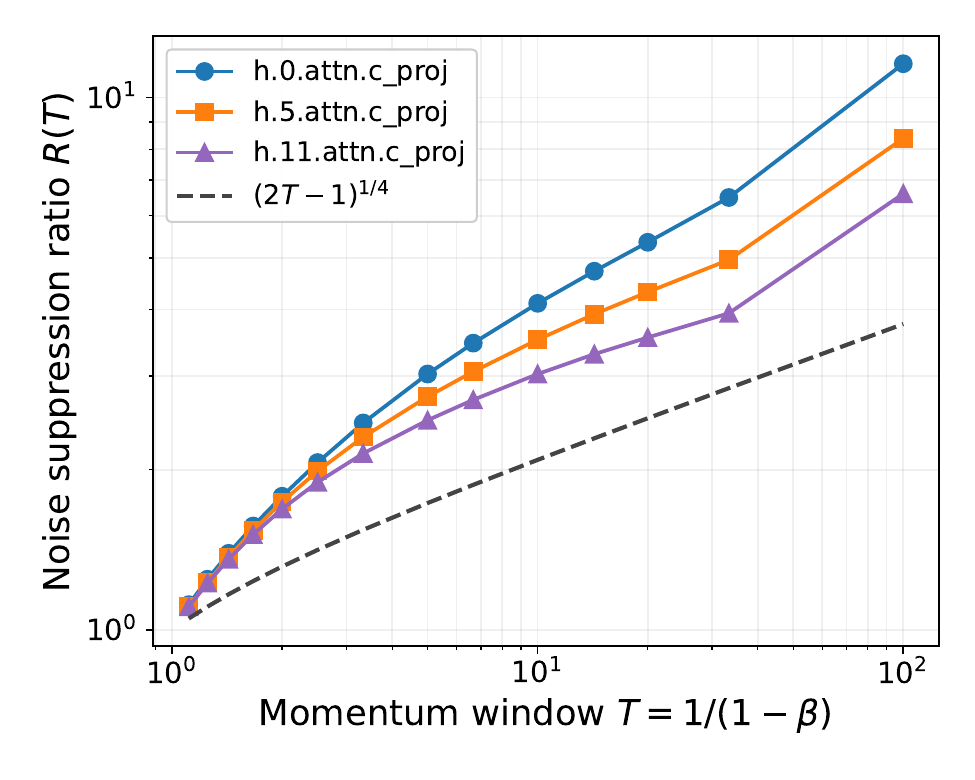}
    \caption{Noise suppression ratio $R(T)$.}
    \label{fig:ch1-frozen-edge}
  \end{subfigure}
  \caption{Spectral filtering visualization. (a) Filtered momentum singular value spectra (blue), the raw gradient spectrum (grey), and the mean-gradient spectrum (dashed orange) on layer \texttt{h.0}. (b) Per-step filtering ratio on \texttt{h.0}. (c) Noise-suppression ratio $R(T)$ on each layer \texttt{h.0}, \texttt{h.5}, and \texttt{h.11} ($K=500$) versus momentum window size $T = 1/(1-\beta)$, with the dashed $(2T-1)^{1/4}$ floor.}
  \label{fig:ch1-frozen-t1-validation}
\end{figure}

\paragraph{Amplitude recovery (in simulation).}
We next use numerical simulations to verify that the momentum buffer indeed creates the spectral gap implied by \Cref{thm:spectral-gap}, beyond the stylized gradient model.
To do so, we use the \emph{stationary probe} by collecting $K$ mini-batch gradients on a saved checkpoint with all model weights held fixed, and applying the momentum pipelines to the gradient buffer in collection order.
Holding the weights fixed removes the non-stationarity caused by parameter updates, so the probe isolates the stochastic gradient stream around a realistic training checkpoint.
In this controlled setting, the coherent signal is a well-defined, time-invariant target, just as in the stylized model.
At the same time, unlike the stylized model, the gradients are generated by a real learning task and therefore retain the structure of an actual neural-network training problem.
Here, we use the NanoGPT stationary probe at training step 3000 (out of 5100 total steps) on the attention output projection layers. The full settings appear in \Cref{app:exp-setup}, with synthetic, CIFAR-10, and additional NanoGPT layer and checkpoint results in \Cref{app:thm1-supplement}.

\Cref{thm:spectral-gap} contains two predictions: signal preservation in the \emph{head} ($k\le r$), $\sigma_k(M_t)\ge c_{\mathrm{sig}}\lambda_k - \sqrt{\eta}\,(2T-1)^{-1/4}$, which keeps the leading singular values pinned near the signal scale $c_{\mathrm{sig}}\lambda_k$, and perturbation attenuation in the \emph{tail} ($k>r$), $\sigma_{r+1}(M_t)\le \sqrt{\eta}\,(2T-1)^{-1/4}$, which drives the tail to zero at rate $T^{-1/4}$.
We validate these on real gradients via two empirical metrics.
Let $M_K^{(\beta)}$ denote the momentum buffer~\cref{eq:ema} with decay $\beta$ after collecting a buffer of $K$ gradients, and $G_K$ the last raw gradient in that buffer.
The \emph{per-step filtering ratio} $\mathrm{Filt}_k(\beta) \coloneqq \sigma_k(M_K^{(\beta)})/\sigma_k(G_K)$ is the index-wise attenuation between the raw gradient $G_K$ and the buffer $M_K^{(\beta)}$.
\Cref{thm:spectral-gap} predicts that signal preservation keeps $\mathrm{Filt}_k(\beta)$ near $1$ at the head ($k\le r$), while perturbation is suppressed as $\beta$ gets closer to $1$.

The \emph{noise-suppression ratio} $R(T) \coloneqq \|G_K - \bar G\|_{\op}/\|M_K^{(\beta)} - \bar G\|_{\op}$, with the in-buffer mean gradient $\bar G \coloneqq \frac{1}{K}\sum_{t=1}^{K} G_t$ as the reference of the coherent gradient signal, is the ratio of the raw gradient's perturbation to the perturbation remaining in the momentum buffer. Under stationarity $\bar G$ approximates the coherent signal, so subtracting it approximately removes the signal from both numerator and denominator and isolates the perturbation.
\Cref{thm:spectral-gap} predicts that this suppression grows with the momentum window size $T$, with $R(T)$ bounded below by the $(2T-1)^{1/4}$ floor.
The full derivation of this argument is deferred to \Cref{app:measurements}.

\Cref{fig:ch1-frozen-spectrum,fig:ch1-frozen-ratio} support \Cref{thm:spectral-gap} via the stationary NanoGPT probe at training step~3000 on \texttt{h.0.attn.c\_proj} layer. They show that as $\beta$ increases, the tail singular values of $M_K^{(\beta)}$ and the tail filtering ratios are suppressed more than the head, opening a gap between head and tail in both panels.
This is the spectral gap predicted by \Cref{thm:spectral-gap}, and it widens with $\beta$.
In \cref{fig:ch1-frozen-spectrum}, the momentum spectrum moves toward the mean-gradient spectrum $\sigma_k(\bar G)$ as $\beta$ increases, with the head reaching $\sigma_k(\bar G)$ ahead of the tail, consistent with \Cref{thm:spectral-gap}.
\Cref{fig:ch1-frozen-edge} shows the noise-suppression ratio $R(T)$ on three attention output projections lies above $(2T-1)^{1/4}$ guide predicted by \Cref{thm:spectral-gap}, serving as a sanity check of our theory.

\paragraph{Subspace recovery (in theory).}
\Cref{thm:spectral-gap} controls singular \emph{values} corresponding to the coherent signal.
Combined with Wedin's $\sin\Theta$ theorem~\citep{wedin1972perturbation}, the spectral gap it opens also controls the singular \emph{subspaces}: the larger the gap, the closer the top-$r$ singular subspaces of $M_t$ are to those of the coherent gradient signal.

\begin{corollary}[Singular-subspace reliability]
\label{cor:direction}
Under the conditions of \Cref{thm:spectral-gap}, with probability of at least $1-(2T-1)^{-\frac12}$, we have
\begin{align}
&\|\sin\Theta(\hat U_t,U)\|_2
\le
\left( c_{\mathrm{sig}}\lambda_r
- \frac{{\sqrt{\eta}}}{(2T-1)^{\frac14}}
\right)^{-1}
\cdot\frac{{\sqrt{\eta}}}{(2T-1)^{\frac14}}
= O(T^{-1/4})
\\
&\|\sin\Theta(\hat V_t,V)\|_2
\le
\left( c_{\mathrm{sig}}\lambda_r
- \frac{{\sqrt{\eta}}}{(2T-1)^{\frac14}}
\right)^{-1}
\cdot\frac{{\sqrt{\eta}}}{(2T-1)^{\frac14}}
= O(T^{-1/4}).
\label{eq:subspace-right}
\end{align}
where $U \coloneqq [u_1,\ldots,u_r]$, $V \coloneqq [v_1,\ldots,v_r]$, and $\hat U_t$, $\hat V_t$ denote the top-$r$ left and right singular subspaces of $M_t$, respectively. The principal angles $0\le\theta_1\le\cdots\le\theta_r\le\pi/2$ between the column spans of $\hat U_t$ and $U$ are defined by $\cos\theta_i \coloneqq \sigma_i(\hat U_t^\top U)$. Stacking them into $\Theta(\hat U_t, U) \coloneqq \mathrm{diag}(\theta_1,\ldots,\theta_r) \in\Rbb^{r\times r}$ and applying $\sin$ entrywise gives $\sin\Theta(\hat U_t,U) = \mathrm{diag}(\sin\theta_1,\ldots,\sin\theta_r)$, whose spectral norm $\|\sin\Theta(\hat U_t, U)\|_2=\sin\theta_r$ is the sine of the largest principal angle.
\end{corollary}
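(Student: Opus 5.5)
The plan is to write the momentum buffer as a perturbation of a rank-$r$ reference matrix and apply Wedin's $\sin\Theta$ theorem on the same high-probability event as \Cref{thm:spectral-gap}. Unrolling the recursion with $t\ge c_0T$, so that initialization effects are negligible, gives
\[
M_t=\bar S_t+\bar\Xi_t,\qquad \bar S_t\coloneqq\sum_{k=1}^r\bar\lambda_k(t)\,u_kv_k^\top,\qquad \bar\Xi_t\coloneqq(1-\beta)\sum_{s\ge0}\beta^s\,\Xi_{t-s}.
\]
Because $\{u_k\}$ and $\{v_k\}$ are orthonormal and time-invariant, $\bar S_t$ has exactly rank $r$. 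Its left and right singular subspaces are $U$ and $V$, and its $r$-th singular value is $\min_k|\bar\lambda_k(t)|\ge c_{\mathrm{sig}}\lambda_r$ by \Cref{ass:persistence} and the ordering $\lambda_1\ge\cdots\ge\lambda_r$. The signs of $\bar\lambda_k(t)$ only flip the signs of the singular vectors, so they do not change the subspaces. One caveat: $\bar S_t$ is random because the $\lambda_k(t)$ are, but Wedin's theorem is purely deterministic and is applied pathwise, so this causes no difficulty.

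Next I would reuse the event from the proof of \Cref{thm:spectral-gap}. There, BVMZOS orthogonality gives $\e{\fnorm{\bar\Xi_t}^2}\le(1-\beta)^2\sum_s\beta^{2s}\eta=\eta/(2T-1)$. Chebyshev/Markov at level $\sqrt\eta(2T-1)^{-1/4}$ then shows that $\snorm{\bar\Xi_t}\le\fnorm{\bar\Xi_t}\le\sqrt{\eta}(2T-1)^{-1/4}=:\varepsilon$ with probability at least $1-(2T-1)^{-1/2}$. On this event Weyl's inequality gives $\sigma_r(M_t)\ge c_{\mathrm{sig}}\lambda_r-\varepsilon$. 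Meanwhile $\sigma_{r+1}(\bar S_t)=0$.

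Finally I would apply Wedin's theorem, with $\bar S_t$ as the unperturbed matrix and $M_t$ as the perturbed one. In the form that separates $\sigma_r(M_t)$ from $\sigma_{r+1}(\bar S_t)=0$, the gap is $\delta=\sigma_r(M_t)-0\ge c_{\mathrm{sig}}\lambda_r-\varepsilon$. Wedin then bounds $\max\{\|\sin\Theta(\hat U_t,U)\|_2,\|\sin\Theta(\hat V_t,V)\|_2\}$ by $\max\{\snorm{\bar\Xi_t V},\snorm{\bar\Xi_t^\top U}\}/\delta$. Each numerator is at most $\snorm{\bar\Xi_t}\le\varepsilon$, and this yields both displayed bounds. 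The bound is meaningful only when $c_{\mathrm{sig}}\lambda_r>\varepsilon$, which holds for $T$ large and gives the $O(T^{-1/4})$ rate.

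The main obstacle is matching the exact constants, because standard Wedin statements often carry a factor $\sqrt2$ or use $\max$ over both residuals. To get the clean constant $1$, I would use the version whose gap involves $\sigma_r$ of the perturbed matrix and $\sigma_{r+1}$ of the rank-$r$ matrix, and bound the residuals $\bar\Xi_t V$ and $\bar\Xi_t^\top U$ each by $\snorm{\bar\Xi_t}$ in operator norm. I would also check that the high-probability event is the same one as in \Cref{thm:spectral-gap}, so that the probability does not degrade.
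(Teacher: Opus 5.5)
Your proposal is correct and is essentially the paper's proof: both apply Wedin's $\sin\Theta$ theorem pathwise to $M_t=M_t^{\mathrm{sig}}+S_t$ on the Chebyshev event of \Cref{prop:ema-concentration}, and both bound the gap from below by $c_{\mathrm{sig}}\lambda_r-\sqrt\eta\,(2T-1)^{-1/4}$. The only difference is the form of the gap: you use $\sigma_r(M_t)-\sigma_{r+1}(M_t^{\mathrm{sig}})=\sigma_r(M_t)$, bounded via Weyl, while the paper uses the mirrored gap $\sigma_r(M_t^{\mathrm{sig}})-\sigma_{r+1}(M_t)$, bounded via the tail estimate of \Cref{thm:spectral-gap}. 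One small bookkeeping slip: with your choice of gap, Wedin's residuals are $\bar\Xi_t\hat V_t$ and $\bar\Xi_t^\top\hat U_t$ (the singular vectors of the perturbed matrix), not $\bar\Xi_t V$ and $\bar\Xi_t^\top U$, which belong to the paper's mirrored version; since you bound them by $\snorm{\bar\Xi_t}$ anyway, the conclusion is unaffected.
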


Each $\|\sin\Theta\|_2$ term lies in $[0,1]$ and measures how far a top-$r$ singular subspace of $M_t$ is from the corresponding signal subspace: $0$ when they coincide, $1$ when orthogonal. \Cref{cor:direction} thus says the momentum buffer's leading singular subspaces become reliable as the momentum window size grows, with the deviation vanishing at rate $T^{-1/4}$.

\paragraph{Subspace recovery (in simulation).}

\Cref{fig:ch1-frozen-c1-validation} supports \Cref{cor:direction} at each rank $r\in$ \{1,5,10\} via the stationary NanoGPT probe at training step~3000 on \texttt{h.0.attn.c\_proj} layer. The figure plots two empirical \emph{subspace alignment errors} on real gradients: the left-subspace error $\sin\Theta_U \coloneqq \sin\theta_r(M_K^{(\beta)};\,\bar G)$ and the right-subspace error $\sin\Theta_V \coloneqq \sin\theta_r(M_K^{(\beta)\top};\,\bar G^{\top})$, where $\sin\theta_r(A;\,B) \coloneqq \|\sin\Theta(U_r(A),\,U_r(B))\|_2$. Both measure the principal-angle distance between the top-$r$ singular subspaces of $M_K^{(\beta)}$ and those of $\bar G$ (full definition in \Cref{app:measurements}).

\begin{figure}[htbp]
\centering
\includegraphics[width=\linewidth]{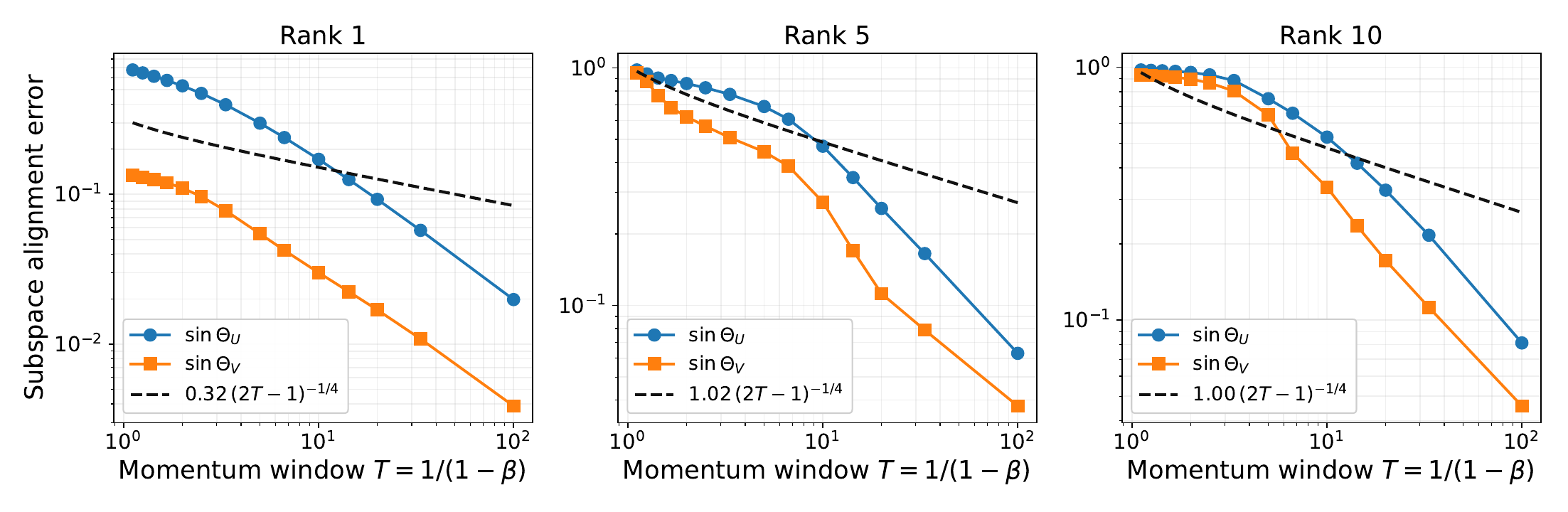}
\caption{Stationary probe subspace alignment error $\sin\Theta_U$ and $\sin\Theta_V$ at ranks $r\in\{1, 5, 10\}$ versus momentum window size $T = 1/(1-\beta)$, with the dashed $c_r\,(2T-1)^{-1/4}$ guide ($c_r$ fitted independently per panel).}
\label{fig:ch1-frozen-c1-validation}
\end{figure}

\section{Noncommutativity of Momentum and Orthogonalization}
\label{sec:spectral-gapII}

\paragraph{Signal-recovery separation (in theory).}
While \Cref{sec:spectral-gap} focuses on the denoising effect arising solely from momentum, 
this section investigates the interaction between momentum and the polar update across the variants: Pre-polar update $\cO(M_t)$, Post-polar update $\widetilde M_t$, and Polar-only $\cO(G_t)$ in \cref{eq:forward,eq:reversed,eq:momefree}.
Under the same stylized gradient model introduced in \Cref{sec:setup}, \Cref{thm:recovery} shows that Pre-polar recovers the signal matrix $G^\mathrm{sig}$ correctly as momentum filters out the BVMZOS perturbation with a sufficiently large effective window.
This signal recovery is not possible for Polar-only and, surprisingly, for Post-polar.
Thus, our main theorem successfully demonstrates the separation between Pre- and Post-polar, explaining the practical superiority of Muon over Orthogonal-SGDM.
\Cref{thm:separation} is a quantitative version of the separation result in \Cref{thm:recovery}, proving that the signal alignment asymptotically vanishes with the matrix size $m\to\infty$ if we do not use Pre-polar.

Before stating the main theorem in this section, we introduce the following stronger assumptions in addition to \Cref{ass:decomp}, which are used in the proof of \Cref{thm:recovery}.

\begin{assumption}[Time-invariant signal component]
\label{ass:time-invariant-signal}
In addition to \Cref{ass:decomp}(a), we assume that the signal component $G^\mathrm{sig}_t$ is time-invariant, i.e., $G^\mathrm{sig}_t = G^\mathrm{sig}$ for all $t$.
\end{assumption}

\begin{assumption}[Pairwise independence of perturbations]
\label{ass:pairwise-independence}
In addition to \Cref{ass:decomp}(b), we assume that the perturbations $\{\Xi_t\}$ are pairwise independent.
As a consequence, for any measurable function $f$ and $g$, and $t_1, t_2\ge 0$ such that $t_1 \neq t_2$,
\[
\e{f(\Xi_{t_1}) g(\Xi_{t_2})} = \e{f(\Xi_{t_1})} \e{g(\Xi_{t_2})}.
\]
\end{assumption}

\begin{theorem}[Pre-polar recovery versus non-denoised baselines]
\label{thm:recovery}
Assuming \Cref{ass:decomp,ass:persistence,ass:time-invariant-signal} and $m\ge n$,
for every choice of fixed $(G^\mathrm{sig},\{\Xi_t\})$ such that the law of $\Xi_t$ is absolutely continuous with respect to the Lebesgue measure,
there exists a constant $C > 0$, independent of the momentum window size $T$, such that the following statements hold all at once.
\begin{enumerate}[label=(\roman*),leftmargin=*]
\item \textbf{Polar-only baseline.}
\[
\e{
\frac{\inner{\cO(G_t)}{G^\mathrm{sig}}_F}{\inner{\cO(G^\mathrm{sig})}{G^\mathrm{sig}}_F}
}
\le 1 - C.
\]

\item \textbf{Post-polar pipeline.}
\[
\e{
\frac{\inner{\tilde M_t}{G^\mathrm{sig}}_F}{\inner{\cO(G^\mathrm{sig})}{G^\mathrm{sig}}_F}
}
\le 1 - C.
\]

If we moreover assume \Cref{ass:pairwise-independence}, then for Post-polar pipeline,
\begin{equation}
\prob{
\frac{\inner{\tilde M_t}{G^\mathrm{sig}}_F}{\inner{\cO(G^\mathrm{sig})}{G^\mathrm{sig}}_F}
\ge 1 - \frac{C}{2}
}
\le
\frac{4}{C^2(2T-1)}.
\end{equation}

\setcounter{enumi}{2}
\item \textbf{Pre-polar pipeline.}
Denote $C' = \frac{2n\sqrt{\eta}}{\inner{\cO(G^\mathrm{sig})}{G^\mathrm{sig}}_F}$.
Then
\begin{equation}
\e{
\frac{\inner{\cO(M_t)}{G^\mathrm{sig}}_F}{\inner{\cO(G^\mathrm{sig})}{G^\mathrm{sig}}_F}
}
\ge 1 - \frac{C'}{\sqrt{2T-1}}.
\label{eq:forward-align}
\end{equation}

Moreover,
\begin{equation}
\prob{
\frac{\inner{\cO(M_t)}{G^\mathrm{sig}}_F}{\inner{\cO(G^\mathrm{sig})}{G^\mathrm{sig}}_F}
\ge 1 - \frac{C'}{(2T-1)^{\frac14}}
}
\ge 1 - \frac{1}{(2T-1)^{\frac14}}.
\label{eq:forward-align-prob}
\end{equation}

\item \textbf{Dominance over both non-Pre-polar baselines.}
There exists $T_0$ such that for all $T\ge T_0$,
\begin{equation}
\e{
\frac{\inner{\cO(M_t)}{G^\mathrm{sig}}_F}{\inner{\cO(G^\mathrm{sig})}{G^\mathrm{sig}}_F}
}
\ge
\max\left\{
\e{
\frac{\inner{\cO(G_t)}{G^\mathrm{sig}}_F}{\inner{\cO(G^\mathrm{sig})}{G^\mathrm{sig}}_F}
}
,
\e{
\frac{\inner{\tilde M_t}{G^\mathrm{sig}}_F}{\inner{\cO(G^\mathrm{sig})}{G^\mathrm{sig}}_F}
}
\right\} + \frac C 2.
\label{eq:ordering}
\end{equation}
If we moreover assume \Cref{ass:pairwise-independence}, then
\begin{equation}
\prob{
\frac{\inner{\cO(M_t)}{G^\mathrm{sig}}_F}{\inner{\cO(G^\mathrm{sig})}{G^\mathrm{sig}}_F}
\ge
\frac{\inner{\tilde M_t}{G^\mathrm{sig}}_F}{\inner{\cO(G^\mathrm{sig})}{G^\mathrm{sig}}_F} + \frac C 4
}
\ge 
1 - \frac{4}{C^2(2T-1)} - \frac{1}{(2T-1)^{\frac14}}.
\end{equation}
\end{enumerate}
\end{theorem}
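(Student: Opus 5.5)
The core identity is that the normalized alignment equals a trace ratio. Since $G^{\mathrm{sig}}=U\Lambda V^\top$ with $\Lambda=\mathrm{diag}(\lambda_1,\dots,\lambda_r)$, we have $\inner{\cO(G^{\mathrm{sig}})}{G^{\mathrm{sig}}}_F=\operatorname{tr}\Lambda=\|G^{\mathrm{sig}}\|_*$. For any matrix $Q$ with $\snorm{Q}\le 1$, von Neumann's trace inequality gives $\inner{Q}{G^{\mathrm{sig}}}_F\le\|G^{\mathrm{sig}}\|_*$, with equality iff $Q$ agrees with $UV^\top$ on the signal subspace. Both $\cO(G_t)$ and $\widetilde M_t$ (a convex combination of partial isometries, so $\snorm{\widetilde M_t}\le 1$) satisfy this. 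Every ratio in the statement is therefore at most $1$, and the whole theorem is about quantifying the deficit.

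\textbf{Baselines (i) and (ii).} Define $\phi(\Xi)\coloneqq 1-\inner{\cO(G^{\mathrm{sig}}+\Xi)}{G^{\mathrm{sig}}}_F/\|G^{\mathrm{sig}}\|_*\ge 0$. I will show $\e{\phi(\Xi_t)}\ge C>0$ by a positivity argument.
\begin{itemize}
\item $\cO$ is well defined and continuous on the full-rank matrices. This set has full Lebesgue measure, so by absolute continuity it has full $\Xi_t$-measure.
\item Equality $\phi=0$ forces $\cO(G^{\mathrm{sig}}+\Xi)^\top U=V$ and $\cO(\cdot)V=U$, which are algebraic constraints on $\Xi$. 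Their zero set is a proper real-analytic subvariety (a generic small $\Xi$ rotates the singular frame), so it has Lebesgue measure zero.
\item Hence $\phi>0$ almost surely and $C_t\coloneqq\e{\phi(\Xi_t)}>0$.
\end{itemize}
The subtlety is uniformity in $t$. Because the statement fixes the law, I will take $C=\inf_t C_t$, arguing positivity via the common law (or the stationary case). Item (i) is then immediate. For item (ii), linearity gives $\e{\inner{\widetilde M_t}{G^{\mathrm{sig}}}_F}=\sum_s h_s\e{\inner{\cO(G_{t-s})}{G^{\mathrm{sig}}}_F}$, the initialization remainder is negligible for $t\ge c_0T$, and the bound $1-C$ follows. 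For the probability bound, the variables $\phi(\Xi_{t-s})$ are pairwise independent under \Cref{ass:pairwise-independence} and bounded in $[0,2]$, so
\[
\Var\Bigl[\textstyle\sum_s h_s\phi(\Xi_{t-s})\Bigr]\le \sum_s h_s^2\cdot O(1)=\frac{O(1)}{2T-1},
\]
using $\sum_s h_s^2=(1-\beta)/(1+\beta)=1/(2T-1)$. Chebyshev's inequality at deviation $C/2$ then gives the stated $4/(C^2(2T-1))$, after absorbing the variance constant (the boundedness constant possibly needs normalizing).

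\textbf{Pre-polar (iii).} Write $M_t=\bar\lambda$-signal plus $N_t\coloneqq\sum_s h_s\Xi_{t-s}$. Under \Cref{ass:time-invariant-signal}, $\sum_s h_s=1$ up to a negligible remainder, so $M_t=G^{\mathrm{sig}}+N_t$. The BVMZOS orthogonality gives $\e{\fnorm{N_t}^2}\le\eta\sum_s h_s^2=\eta/(2T-1)$. The key deterministic lemma is
\[
\inner{\cO(A+N)}{A}_F\ge \|A\|_*-2\|N\|_*,
\]
obtained from $\inner{\cO(A+N)}{A+N}_F=\|A+N\|_*\ge\|A\|_*-\|N\|_*$ together with $|\inner{\cO(\cdot)}{N}_F|\le\|N\|_*$. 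Then $\|N\|_*\le\sqrt n\fnorm{N}$, and Jensen's inequality gives $\e{\fnorm{N}}\le\sqrt{\eta/(2T-1)}$. Dividing by $\|G^{\mathrm{sig}}\|_*$ yields
\[
1-\frac{2\sqrt{n\eta}}{\|G^{\mathrm{sig}}\|_*\sqrt{2T-1}},
\]
which is dominated by the stated $C'$ with $n$ in place of $\sqrt n$. For the high-probability version, Markov's inequality on $\fnorm{N}^2$ gives $\fnorm{N}\le\sqrt\eta(2T-1)^{-1/4}$ with probability at least $1-(2T-1)^{-1/2}$, as in \Cref{thm:spectral-gap}, which is stronger than what is claimed.

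\textbf{Dominance (iv).} Choose $T_0$ so that $C'/\sqrt{2T_0-1}\le C/2$; then (iii) minus (i) or (ii) gives a gap of at least $C/2$. The probability statement is a union bound over the Post-polar event from (ii) and the Pre-polar event from (iii): on the intersection the Pre-polar ratio is at least $1-C/4$ and the Post-polar ratio is below $1-C/2$, once $T$ is large enough that $C'(2T-1)^{-1/4}\le C/4$.

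The main obstacle is step (i): proving that the alignment deficit $\phi$ is almost surely positive, and making the resulting constant $C$ uniform in $t$. If the analytic-variety argument becomes messy, I would fall back on a perturbative argument: for $\Xi$ in an open set of positive probability, the rotation of the singular frame strictly lowers the alignment.
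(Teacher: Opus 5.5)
Your proposal is correct and follows essentially the same route as the paper: almost-sure positivity of the polar-factor alignment deficit for (i)--(ii), which the paper packages as its quantitative polar-gap lemma plus a Lebesgue-null-set remark; linearity and Chebyshev with $\sum_s h_s^2=1/(2T-1)$ under pairwise independence; the same inequality $\inner{\cO(A+N)}{A}_F\ge\|A\|_*-2\|N\|_*$ for (iii); and a choice of $T_0$ plus a union bound for (iv). Your deviations are valid refinements, namely the factor $\sqrt n$ via $\|N\|_*\le\sqrt n\fnorm{N}$, Chebyshev rather than Markov for the high-probability bound, and the explicit extra condition $C'(2T-1)^{-1/4}\le C/4$ that the union bound in (iv) actually needs; and in (ii) you should not ``absorb'' a constant, since $C$ is shared across parts, but none is needed: the normalized ratio lies in $[-1,1]$, so each variance is at most $1$ and Chebyshev yields exactly $4/(C^2(2T-1))$.
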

Thus Pre-polar pipeline strictly dominates both Polar-only and Post-polar pipelines for sufficiently large momentum windows
in expectation.
With the pairwise independence assumption (\Cref{ass:pairwise-independence}), Pre-polar pipeline also dominates Post-polar pipeline in probability.
The proof is deferred to \Cref{app:proof-thm2}.

\paragraph{Signal-recovery separation (in simulation).}
\Cref{fig:frozen-thm2} supports \Cref{thm:recovery} on the same stationary NanoGPT probe at training step~3000 used in \Cref{sec:spectral-gap}, on the \texttt{h.0.attn.c\_proj} layer.
Synthetic, CIFAR-10, and other-layer NanoGPT extensions are provided in \Cref{app:thm2-supplement}.
The figures use two empirical \emph{signal alignment} metrics on real gradients (see \Cref{app:measurements} for details): the rank-5 alignment $\mathrm{Align}_r(A;\,\bar G) \coloneqq \|U_r(\bar G)^\top A\, V_r(\bar G)\|_F / \sqrt{r} \in [0,1]$, which measures how much of $A$ lies in the top-$r$ singular subspaces of $\bar G$, and the full-rank alignment $\mathrm{Align}_{\mathrm{full}}(A;\,\bar G) \coloneqq \langle A,\, \cO(\bar G)\rangle_F / \min(m,n) \in [-1,1]$, the signed Frobenius inner product between $A$ and the polar factor $\cO(\bar G)$, normalized by $\min(m,n)=\|\cO(\bar G)\|_F^2$ so that $\cO(\bar G)$ itself scores $1$. Here $A$ is the placeholder for any of the pipelines $\cO(G_t)$, $\cO(M_t)$, or $\tilde{M}_t$.
\Cref{fig:frozen-beta-sweep} shows the full-rank signal alignment for Pre-polar, Post-polar and Polar-only pipelines versus the momentum coefficient $\beta$.
Only Pre-polar shows the monotonically improving trend in $\beta$, which is consistent with the success probability~\cref{eq:forward-align-prob} increasing in $\beta$.
\Cref{fig:frozen-align-training-r5,fig:frozen-align-training-full} show the same signal alignment at fixed $\beta=0.95$ across different checkpoints, but with rank-5 and full-rank matrices, respectively.
In both cases, Pre-polar is consistently higher than Post-polar and Polar-only pipelines across the different training checkpoints, which again supports Pre-polar's noise-smoothening mechanism.
Note that Pre-polar outperforms especially for steps 4000 and 5000, where training switches into the NanoGPT warmdown phase, i.e., linear learning-rate decay is used over the final 1450/5100 training steps.

\begin{figure}[t]
  \centering
  \begin{subfigure}[t]{0.32\linewidth}
    \centering
    \includegraphics[width=\linewidth]{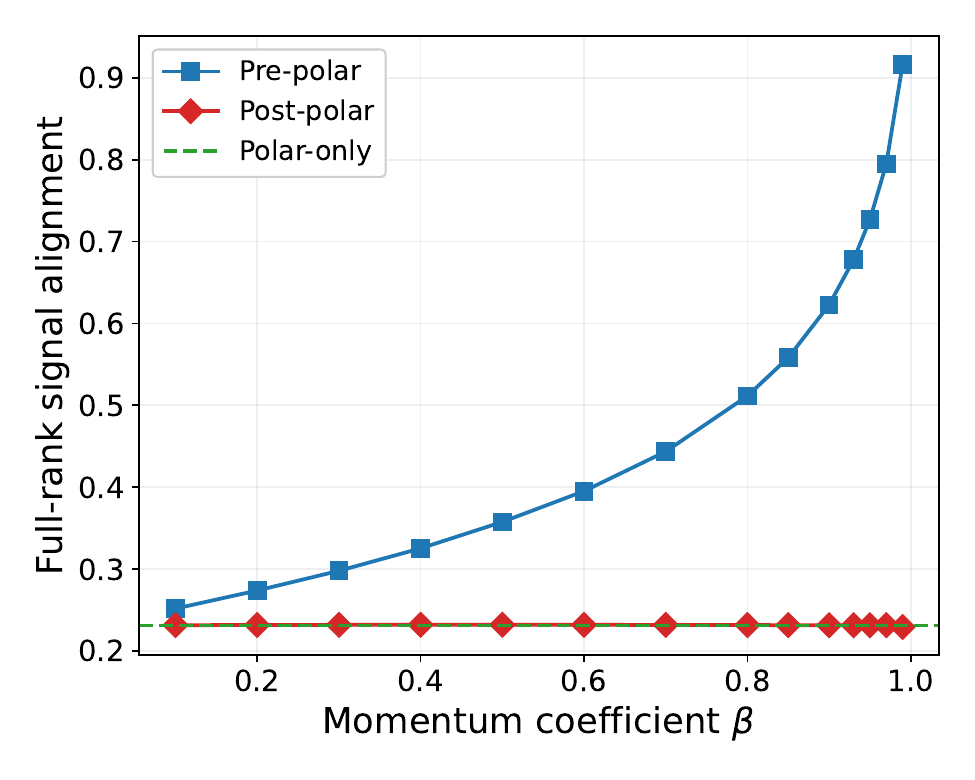}
    \caption{Full-rank alignment vs. $\beta$.}
    \label{fig:frozen-beta-sweep}
  \end{subfigure}\hfill
  \begin{subfigure}[t]{0.32\linewidth}
    \centering
    \includegraphics[width=\linewidth]{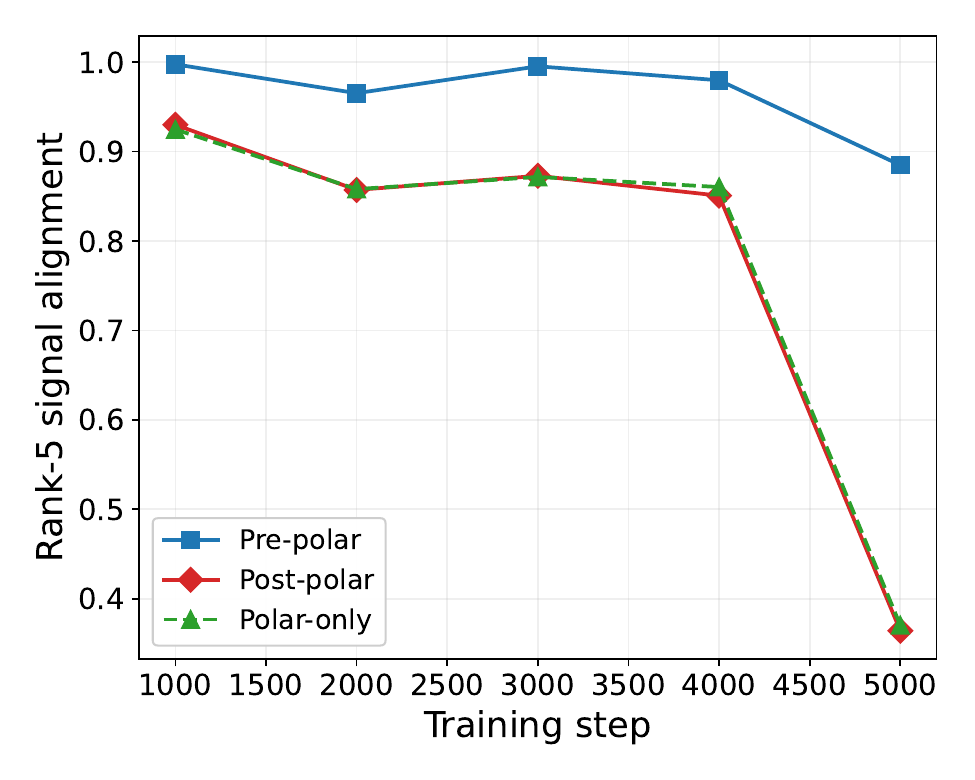}
    \caption{Rank-5 alignment over training.}
    \label{fig:frozen-align-training-r5}
  \end{subfigure}\hfill
  \begin{subfigure}[t]{0.33\linewidth}
    \centering
    \includegraphics[width=0.98\linewidth]{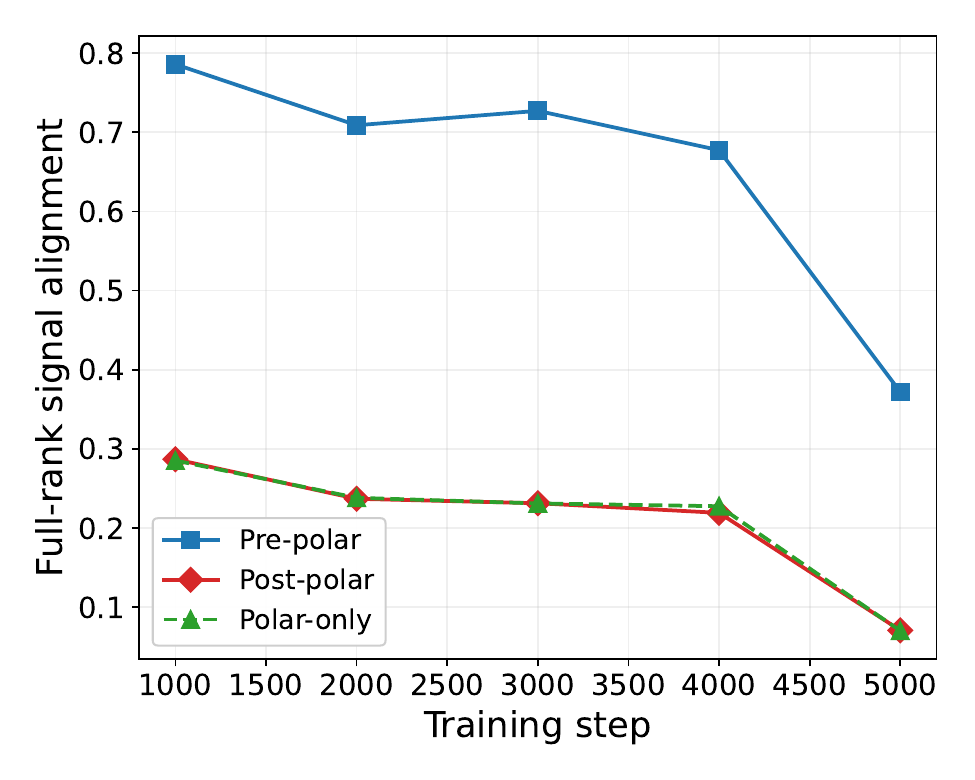}
    \caption{Full-rank alignment over training.}
    \label{fig:frozen-align-training-full}
  \end{subfigure}
  \caption{Stationary probe signal alignment for the three pipelines defined in~\cref{eq:forward}--\cref{eq:momefree} ($K=500$). (a)~$\beta$-sweep at the step-3000 checkpoint of the full-rank signal alignment. (b)~rank-5 signal alignment at $\beta=0.95$ across the five checkpoints. (c)~Full-rank signal alignment at $\beta=0.95$ across the same five checkpoints. The reference $\bar G = K^{-1}\sum_t G_t$ replaces $G^\mathrm{sig}$ on real gradients.}
  \label{fig:frozen-thm2}
\end{figure}

\paragraph{Quantitative signal-recovery separation.}
\label{sec:separation-theorem}

\Cref{thm:recovery} delineates that Pre-polar pipeline dominates Polar-only counterpart in the signal alignment,
but we do not quantitatively characterize the separation constant $C>0$.
Despite being challenging under the general coherent signal in \Cref{ass:decomp}, we show below that Polar-only alignment vanishes as the gradient matrix size $(m,n)$ grows, jointly under (i) the rank-1 spiked signal model, and (ii) the low SNR regime.

\begin{assumption}[Rank-1 spiked model]
  \label{ass:rank-1}
  For each $t$, assume that the gradient admits the signal-perturbation decomposition $G_t = G^{\mathrm{sig}} + \Xi_t = \lambda uv^\top + \Xi_t$,
  where $u\in\Rbb^m$, $v\in\Rbb^n$ are time-invariant unit vectors constituting the signal basis $G^{\mathrm{sig}}=\lambda uv^\top$, $\lambda > 0$ is the signal strength, and each element of $\Xi_t\in\Rbb^{m\times n}$ follows the standard normal $\mathcal{N}(0,\sigma_\Xi^2)$ independently.
\end{assumption}

\begin{theorem}[Pre-polar vs. Polar-only separation]
  \label{thm:separation}
  Assume $m > n$ without loss of generality.
  Under the rank-1 spiked model (\Cref{ass:rank-1}), we additionally assume the low SNR regime: $\lambda < 0.25\sigma_\Xi(\sqrt m - \sqrt n)$.
  Then, the signal alignment under Polar-only pipeline is bounded as follows:
  \[
    \E\left[\frac{\inner{\cO(G_t)}{G^\mathrm{sig}}_F}{\inner{\cO(G^\mathrm{sig})}{G^\mathrm{sig}}_F}\right]
    \le \frac1{\sqrt m} + \frac{4\lambda}{\sigma_\Xi(\sqrt m - \sqrt n)} + 4\sqrt n\exp\left(-\frac{c(\sqrt m - \sqrt n)^2}{4}\right),
  \]
  where $c>0$ is an absolute constant.
\end{theorem}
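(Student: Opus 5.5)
The plan is to reduce the alignment to a quadratic form in the singular vectors of $G_t$ and then control it with random-matrix concentration. Since $\cO(G^\mathrm{sig})=uv^\top$, the denominator is $\inner{uv^\top}{\lambda uv^\top}_F=\lambda$. The ratio therefore equals $u^\top\cO(G_t)v$. Write the thin SVD $G_t=\hat U\hat\Sigma\hat V^\top$. Then $u^\top \cO(G_t) v=u^\top \hat U\hat V^\top v$, and Cauchy--Schwarz gives $|u^\top\cO(G_t)v|\le 1$ deterministically. I will split on the event $\mathcal{E}=\{\sigma_n(\Xi_t)\ge \tfrac12\sigma_\Xi(\sqrt m-\sqrt n)\}$. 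Gaussian concentration of $\sigma_n$, which is $1$-Lipschitz in the entries, together with $\E\sigma_n(\Xi_t)\ge\sigma_\Xi(\sqrt m-\sqrt n)$ (Gordon), gives $\Pr(\mathcal{E}^c)\le \exp(-c(\sqrt m-\sqrt n)^2/4)$. The prefactor $4\sqrt n$ in the statement suggests the authors lose a union or net factor there; I will simply absorb it, since it only weakens the bound. On $\mathcal E^c$ I bound the alignment by $1$.

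On $\mathcal{E}$, I use the perturbation formula for the polar factor. I decompose
\[
\cO(G_t)=\cO(\Xi_t)+\bigl(\cO(\Xi_t+\lambda uv^\top)-\cO(\Xi_t)\bigr).
\]
The polar factor is Lipschitz in operator norm on full-rank matrices, with the standard bound (Li's, or Bhatia's)
\[
\|\cO(A)-\cO(B)\|_2\le \frac{2\|A-B\|_2}{\sigma_n(A)+\sigma_n(B)}.
\]
By Weyl, $\sigma_n(G_t)\ge \sigma_n(\Xi_t)-\lambda$. On $\mathcal E$, the low-SNR assumption $\lambda<0.25\sigma_\Xi(\sqrt m-\sqrt n)$ gives $\sigma_n(G_t)\ge \tfrac14\sigma_\Xi(\sqrt m-\sqrt n)$, so $\sigma_n(G_t)+\sigma_n(\Xi_t)\ge\tfrac34\sigma_\Xi(\sqrt m-\sqrt n)$. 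Hence $|u^\top(\cO(G_t)-\cO(\Xi_t))v|\le \frac{8\lambda}{3\sigma_\Xi(\sqrt m-\sqrt n)}$, which is at most the $4\lambda/(\sigma_\Xi(\sqrt m-\sqrt n))$ term. A cruder Lipschitz constant would also fit, because the target has slack.

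The remaining term is $\E[u^\top\cO(\Xi_t)v\,\mathbf 1_{\mathcal E}]$. It is bounded by $\E|u^\top\cO(\Xi_t)v|\le (\E[(u^\top\cO(\Xi_t)v)^2])^{1/2}$. Gaussian $\Xi_t$ is bi-orthogonally invariant, so $\cO(\Xi_t)$ is Haar-distributed on the Stiefel manifold of $m\times n$ matrices with orthonormal columns. For a Haar $Q$, the vector $Qv$ is uniform on the sphere $S^{m-1}$. Therefore $\E[(u^\top Qv)^2]=1/m$, and this term is at most $1/\sqrt m$. Adding the three contributions gives the claimed bound.

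The main obstacle is getting the polar-factor perturbation bound with the smallest-singular-value denominator, for rectangular $m>n$, with clean constants. I would cite the known result $\|\cO(A)-\cO(B)\|\le \frac{2}{\sigma_n(A)+\sigma_n(B)}\|A-B\|$ (R.-C. Li). If needed, I would derive it from the integral representation $\cO(A)=A(A^\top A)^{-1/2}$ and the Sylvester equation for the derivative. The slack in the target constants should make any reasonable version of this bound suffice.
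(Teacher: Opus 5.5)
Your proposal follows the paper's proof step for step. You use the same split $\cO(G_t)=\cO(\Xi_t)+(\cO(G_t)-\cO(\Xi_t))$, the same Haar--Stiefel identity $\E[(u^\top\cO(\Xi_t)v)^2]=1/m$ with Jensen, and the same good event $\mathcal E=\{\sigma_n(\Xi_t)\ge s_*/2\}$, where $s_*=\sigma_\Xi(\sqrt m-\sqrt n)$. You also use Weyl with the low-SNR condition and Gaussian concentration of $\sigma_n$ to bound $\Pr(\mathcal E^{\complement})$.

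Your handling of $\mathcal E^{\complement}$, bounding the whole alignment by $1$, is slightly sharper than the paper's. The paper uses $\|\cO(G_t)-\cO(\Xi_t)\|_F\le 2\sqrt n$ there, and that crude bound (not a net argument) is the source of the $4\sqrt n$.

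\textbf{The step that fails.} The inequality $\|\cO(A)-\cO(B)\|_2\le \frac{2}{\sigma_n(A)+\sigma_n(B)}\|A-B\|_2$ holds for square matrices but is false in the spectral norm once $m>n$. Take $m=3$, $n=2$,
\[
X=\begin{bmatrix}1&0\\0&1\\0&0\end{bmatrix},\qquad H=\begin{bmatrix}0&1\\0&0\\1&0\end{bmatrix},\qquad \|H\|_2=1 .
\]
Differentiating $X=QP$ gives $D\cO(X)[H]=Q\Omega+Q_\perp Q_\perp^\top HP^{-1}$, where $\Omega$ is skew-symmetric with $\Omega P+P\Omega=Q^\top H-H^\top Q$. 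Here this derivative equals $\begin{bmatrix}0&1/2\\-1/2&0\\1&0\end{bmatrix}$, whose spectral norm is $\sqrt5/2>1$. Since $\sigma_n(X+\epsilon H)\to 1$, the cited bound fails for $A=X+\epsilon H$, $B=X$ and small $\epsilon$. The cause is that skew-symmetrization moves in-range mass onto the same column as the out-of-range part, so the two pieces stack in operator norm.

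\textbf{The repair} is easy and is exactly the paper's step. Because the spike is rank one, $\|\lambda uv^\top\|_F=\lambda$, so nothing is lost by working in the Frobenius norm and then using $|u^\top Mv|\le\|M\|_2\le\|M\|_F$. In Frobenius norm the squared norms of the two pieces of $D\cO(X)[H]$ simply add. The in-range piece $Q\Omega$ has norm at most $\|Q^\top H\|_F/\sigma_n(X)$, and the out-of-range piece has norm at most $\|Q_\perp^\top H\|_F/\sigma_n(X)$. Hence $\|D\cO(X)[H]\|_F\le\|H\|_F/\sigma_n(X)$ for all $m\ge n$. Integrating along $\Xi_t+\tau\lambda uv^\top$ and applying Weyl gives $\lambda/(\sigma_n(\Xi_t)-\lambda)\le 4\lambda/s_*$ on $\mathcal E$. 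If you carry out your Sylvester-equation fallback, do it in this norm.

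Your remark that a cruder Lipschitz constant would also fit holds only up to a factor $3/2$ over your $8/3$. You therefore need a bound of essentially this sharpness, not an arbitrary one.
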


An important consequence of \Cref{thm:separation} is that the signal alignment of Polar-only pipeline becomes marginal as the matrix size $m\to\infty$ relative to the signal strength $\lambda$,
which is an extreme scenario of the low SNR regime.
In this case, the rank-1 signal $uv^\top$ is hidden in the bulk noise, and it will never be recovered.
Strikingly, Pre-polar pipeline continues to recover the target signal thanks to \Cref{thm:recovery}, as long as the effective window size $T$ is sufficiently large.
The proof is deferred to \Cref{app:proof-thm2}.

\section{Subspace Reliability and Pipeline Ordering Under Live Training}
\label{sec:experiments}

The stationary probe of \Cref{sec:spectral-gap} reflects the theoretical setting of \Cref{thm:spectral-gap,thm:recovery} by holding the model weights fixed.
To closely simulate a realistic gradient stream where the weights drift step-by-step,
we use the \emph{trajectory probe}, which maintains a sliding $K$-step gradient buffer alongside the optimizer during end-to-end NanoGPT training and applies the momentum pipelines to the buffer in collection order.
This sliding buffer creates the in-buffer mean gradient $\bar G$ (introduced in \Cref{sec:spectral-gap}), which is recomputed at each training step over the latest $K$ gradients and serves as the empirical proxy for the unobservable signal $G^\mathrm{sig}$.
Importantly, the model weights and thus the signal matrix $G^\mathrm{sig}$ can drift over time in the trajectory probe,
which serves as a sensitivity analysis of the time invariance supposed in \Cref{ass:decomp}.
We analyze the \texttt{h.0.attn.c\_proj} layer every $100$ training steps over $3$ seeds, with buffer size $K=\Kheadline$ and momentum coefficient $\beta=0.95$.
The full settings, the all-layer NanoGPT extension, and the CIFAR-10 trajectory extension appear in \Cref{app:exp-setup,app:cor1-supplement,app:thm2-supplement}, respectively.

\begin{figure}[htbp]
\centering
\includegraphics[width=\linewidth]{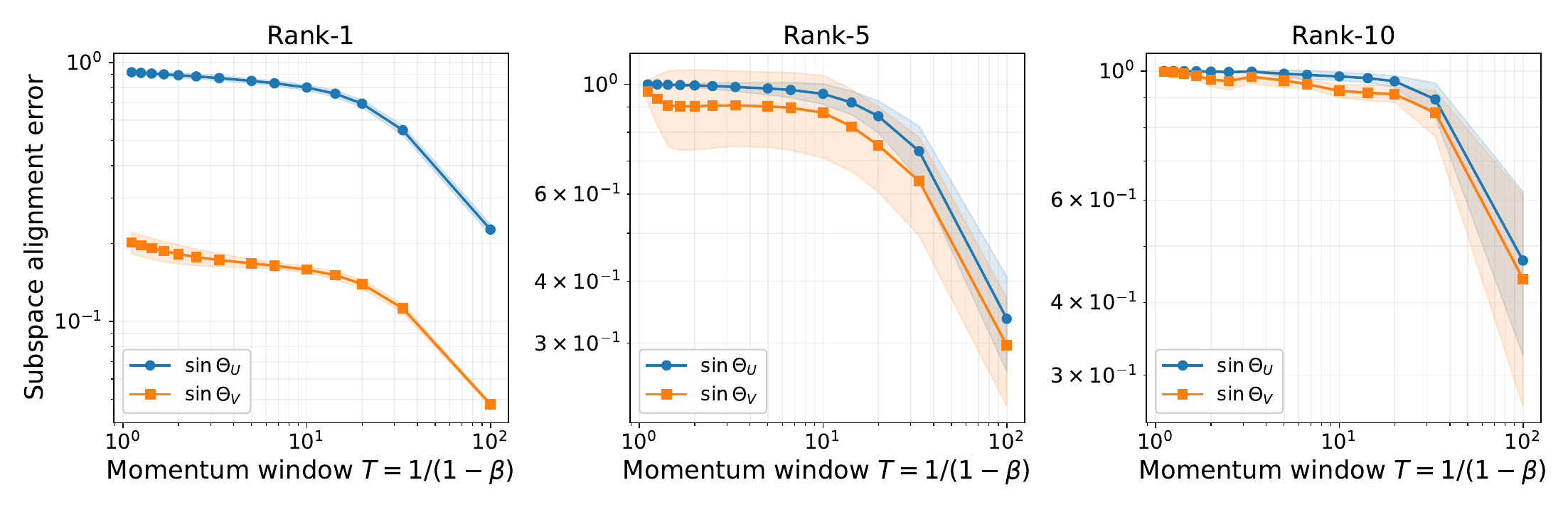}
\caption{Trajectory probe subspace alignment errors $\sin\Theta_U,\sin\Theta_V$ at ranks $r\in \{1,5,10\}$ versus the momentum window size $T=1/(1-\beta)$, with the $\beta$ grid restricted to $\beta\le 0.95$ ($T\le K/2$). Curves show seed means. Shaded bands show sample standard deviation across seeds.}
\label{fig:online-cor1}
\end{figure}

\paragraph{Subspace alignment error during training.} \Cref{fig:online-cor1} reports the subspace alignment errors $\sin\Theta_U$ and $\sin\Theta_V$ of $M_K^{(\beta)}$, which is the momentum buffer with decay $\beta$ after collecting a buffer of $K$ gradients, at ranks $r\in \{1,5,10\}$ versus the momentum window $T = 1/(1-\beta)$ on the layer \texttt{h.0.attn.c\_proj} during NanoGPT training.
The alignment errors averaged across different seeds decrease overall with $T$ at every rank, validating that the rank-$r$ subspace recovery (\Cref{cor:direction}) is robust beyond the time-invariant signal in \Cref{ass:decomp}(a).

\paragraph{Comparison of signal alignment for three pipelines.} \Cref{fig:online-thm2} reports the full-rank signal alignment (see \Cref{sec:spectral-gapII}) for the three pipelines on \texttt{h.0.attn.c\_proj} during live NanoGPT training. All three pipelines are evaluated at the last gradient of the current buffer. \Cref{fig:online-thm2-a} sweeps $\beta$ at training step~3000, and \cref{fig:online-thm2-b} tracks $\beta=0.95$ across training steps.

\begin{figure}[htbp]
  \centering
  \begin{minipage}[t]{0.49\linewidth}
    \centering
    \includegraphics[width=\linewidth]{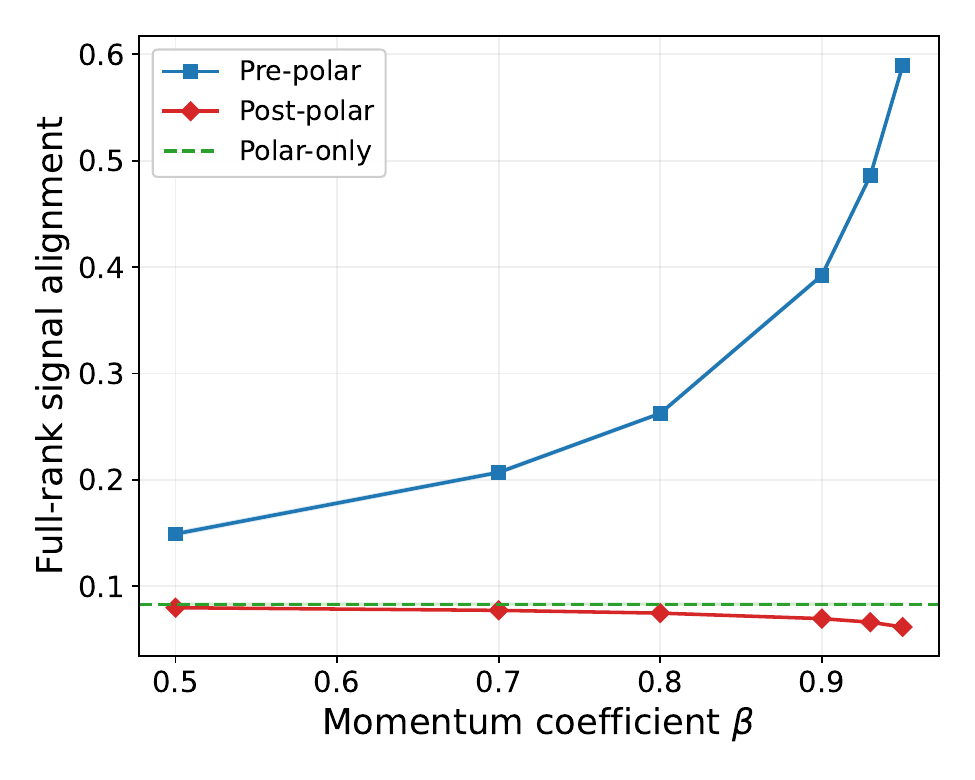}
    \subcaption{Full-rank alignment vs. $\beta$.}
    \label{fig:online-thm2-a}
  \end{minipage}\hfill
  \begin{minipage}[t]{0.49\linewidth}
    \centering
    \includegraphics[width=\linewidth]{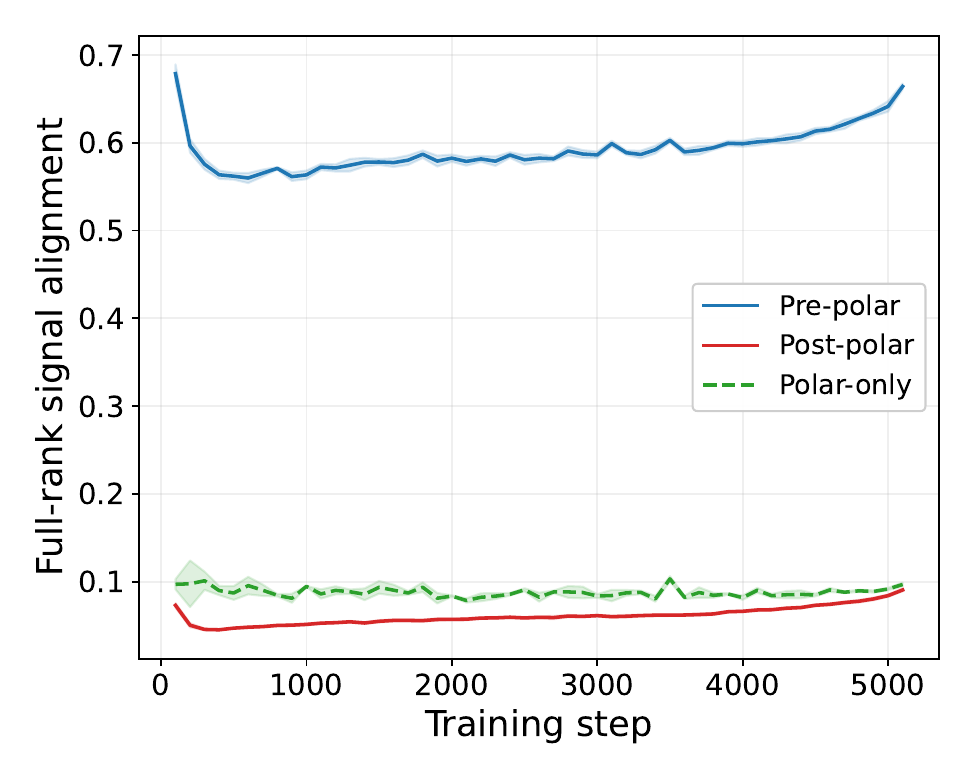}
    \subcaption{Full-rank signal alignment over training.}
    \label{fig:online-thm2-b}
  \end{minipage}
  \caption{Trajectory probe signal alignment for the three pipelines defined in~\cref{eq:forward}--\cref{eq:momefree} ($K=\Kheadline$, 3-seed mean). (a) $\beta$-sweep at step-3000 checkpoint of the full-rank signal alignment. (b) Full-rank signal alignment at $\beta=0.95$ across training steps. Curves show seed means. Shaded bands show sample standard deviation across seeds.}
  \label{fig:online-thm2}
\end{figure}

In \cref{fig:online-thm2-a}, Pre-polar full-rank alignment rises monotonically with $\beta$, while Post-polar and Polar-only stay nearly flat at low values across the entire $\beta$ range. Polar-only is $\beta$-independent by construction. The flatness of Post-polar reflects the noncommutativity formalized by \Cref{thm:recovery}: orthogonalizing per step before averaging cannot recover signal directions that the per-step polar factor has already removed. In \cref{fig:online-thm2-b}, Pre-polar advantage at $\beta=0.95$ persists across training steps. This trajectory result mirrors the stationary result of \cref{fig:frozen-thm2} supports that Pre-polar pipeline's dominance in signal alignment over both Post-polar and Polar-only survives the non-stationarity introduced by live training.


\section{Conclusion}
\label{sec:conclusion}

This paper explains why Muon orthogonalizes momentum rather than the raw gradient: momentum acts as a spectral filter that preserves the coherent signal while attenuating the bounded variance zero-mean orthogonal perturbation, so the polar step operates on a buffer in which the signal singular subspaces are already stabilized. Direct orthogonalization, or orthogonalizing each gradient before averaging, erases this structure and is provably worse in expected signal alignment. The same mechanism suggests a design rule that extends beyond Muon: in any matrix-aware optimizer that ends in a nonlinear spectral step, the spectral filter that separates signal from noise should come first --- \emph{denoise first, then orthogonalize}.

\paragraph{Limitations and future work.} Several extensions are left open. First, practical Muon training often increases the momentum coefficient during early training, which enlarges the effective window size across iterations. Formalizing this schedule-dependent story requires a time-varying filter analysis. Second, \Cref{thm:separation} is proved under the rank-1 spiked Gaussian model. Extending it to general rank-$r$ coherent signals or to non-Gaussian perturbations would require sharper concentration tools for the polar factor. Beyond these extensions, the spectral-gap mechanism may also guide the design of Muon variants with adaptive momentum updates.


\section*{Acknowledgment}
XL and ZZ are supported by JST SPRING (Japan Grant Number JPMJSP2104).
HB is supported by JST PRESTO (Grant Number JPMJPR24K6). 
A part of the experiments of this research was conducted using Wisteria/Aquarius in the Information Technology Center, the University of Tokyo.


\bibliography{ref}
\bibliographystyle{abbrvnat}


\newpage
\appendix
\crefalias{section}{appendix}
\crefalias{subsection}{appendix}
\crefalias{subsubsection}{appendix}
\onecolumn
\addcontentsline{toc}{section}{Appendix}
\renewcommand \thepart{}
\renewcommand \partname{}
\part{\Large{\centerline{Appendix}}}
\parttoc

\newpage

\section{Related Work}
\label{sec:related}

\paragraph{Matrix-based and LMO-based optimizers.}
Matrix-based optimizer has moved beyond classical coordinate-wise methods. Earlier precursors of spectral descent in restricted Boltzmann machines and deep networks include \citet{carlson2015stochastic,carlson2015preconditioned}. \citet{tuddenham2022orthogonalising} introduce Orthogonal-SGDM, which orthogonalizes each per-step gradient before the momentum averaging. Shampoo~\citep{gupta2018shampoo} and SOAP~\citep{vyas2025soap} use matrix-aware preconditioning, while Scion~\citep{pethick2025training} and Gluon~\citep{riabinin2025gluon} are LMO-based methods that operate in norm-constrained matrix geometry. Muon~\citep{jordan2024muon} applies a Newton-Schulz update to a momentum LLM training~\citep{liu2025muonscalablellmtraining}. Pion~\cite{shi2026pion} uses double-sided orthogonal transformation to design a matrix-based spectrum-preserving optimizer. A parallel line of work studies practical refinements of Muon's orthogonalization step, including the polar express that computes polar decomposition with optimal worst-case convergence~\citep{amsel2026the}, low-rank orthogonalization~\citep{he2025low}, block-wise orthogonalization~\citep{boreiko2025towards}, and row-momentum normalized preconditioning~\citep{deng2026rmnp}.

\paragraph{Theory of Muon and spectral updates.}
Recent studies have started to provide a theoretical foundation for the Muon optimizer and spectral updates. One line places Muon and related orthogonalization updates in norm-constrained geometry: \citet{chen2026muon} view Muon as a nuclear-norm Lion-K optimizer whose decoupled weight decay implicitly enforces spectral-norm constraints on the weights; \citet{kovalev2025understanding} develops a non-Euclidean trust-region interpretation of gradient orthogonalization and its momentum variant; and \citet{shulgin2026beyond} analyze Muon as an inexact LMO method, quantifying how Newton--Schulz approximation error couples with the step size and momentum coefficient. A second line studies the implicit bias of spectral descent and Muon, proving convergence to spectral-norm max-margin solutions on multiclass separable data~\citep{fan2025implicitbiasspectraldescent}. A third line asks when spectral orthogonalization should improve optimization, establishing preconditioning benefits in matrix factorization and in-context linear transformers~\citep{ma2026preconditioning}, isotropic-curvature conditions under which spectrum homogenization or orthogonalization is optimal~\citep{su2025isotropic}, and stable-rank/nuclear-rank conditions predicting when spectral updates outperform Euclidean gradient steps~\citep{davis2025spectral}. A fourth line analyzes structured tasks where per-component disparities are visible: Muon-type spectral updates balance learning across associative-memory frequencies and storage capacity~\citep{li2026muon,kim2026sharp}, heavy-tailed class distributions~\citep{wang2025muon}, imbalanced principal components~\citep{vasudeva2025muon}, and anisotropic curvature in phase retrieval~\citep{braun2026spectral}. Additionally, a recent research line~\cite{Liu2021OPT,qiu2025poet,qiu2026poetx,shi2026pion} studies how spectrum preservation can serve as a guiding principle to design optimizers. Together, these works clarify spectral-update geometry, implicit bias, and task-level scaling behavior from complementary angles.

\paragraph{Research progress on the momentum algorithm.}
Momentum is among the most extensively analyzed primitives in optimization, with theoretical study organized along several complementary aspects. In classical deterministic optimization, \citet{polyak1964momentum} show heavy-ball achieves an accelerated linear rate on smooth strongly convex quadratics, with a geometric and spectral exposition by~\citet{goh2017why}. Nesterov's accelerated gradient method attains the optimal $O(1/k^2)$ rate for smooth convex objectives~\citep{nesterov1983accelerated}. Later, \citet{sutskever2013on} extend these schemes to deep network training. In stochastic optimization, \citet{cutkosky2020momentum} prove that momentum eliminates the large-batch requirement of normalized SGD, and \citet{defazio2020momentum} reformulates SGD with momentum as primal averaging to obtain sharper non-convex convergence bounds. Inspired by signal processing theory, \citet{li2025momentum} interpret momentum in the frequency domain as a low-pass filter that amplifies low-frequency gradient components and attenuates high-frequency components, with high-frequency suppression becoming more important in late training. Together, these works examine momentum from deterministic-acceleration, stochastic-convergence, and signal-processing angles.

\paragraph{Gradient and noise structure in deep learning.}
Understanding the empirical structure of gradients and gradient noise during neural-network training has been a central concern of modern learning theory. On the gradient side, the Hessian spectrum of an overparametrized network often separates into a near-zero bulk and a few outliers, and the gradient can become largely concentrated in the corresponding top eigenspaces~\citep{sagun2017empirical,ghorbani2019investigation}. This is consistent with the observation that gradient descent often proceeds mostly within a small subspace spanned by the top Hessian eigenvectors~\citep{gur2018gradient}. In high-dimensional feature-learning regimes, the first gradient update of the first-layer weights can contain a rank-1 spike aligned with the target signal~\citep{ba2022high}. Relatedly, under structured input distributions such as spiked covariance models or multiple-index teacher models, gradient-based training can exploit the hidden low-dimensional structure and drive the learned representation toward the corresponding signal subspace~\citep{mousavi2023gradient}. On the noise side, empirical studies suggest that mini-batch gradient noise can be highly non-Gaussian and heavy-tailed~\citep{simsekli2019tail}. Orthogonally, SGD noise has been shown to be anisotropic, with its covariance aligned with the Hessian in a way that facilitates escape from sharp minima~\citep{zhu2019anisotropic}. Together, these threads establish that gradient and noise structure in modern deep learning training.

\section{Setup Conventions and Variants}
\label{app:setup-conventions}

\Cref{sec:setup} adopts the EMA normalization~\cref{eq:ema} and zero initialization $M_0=0$. We record here (i) the \emph{effective sample size} identity that underlies the perturbation variance bound used in \Cref{thm:spectral-gap,thm:recovery}; (ii) the polar-factor equivalence between EMA and Polyak/heavy-ball momentum that justifies the EMA normalization choice; (iii) the buffer's behavior under arbitrary initialization, which defines the post-transient threshold $t\ge c_0 T$ characterizing when the EMA buffer has reached its asymptotic regime; and (iv) the kernel weights and effective sample size of practical Nesterov momentum.

\subsection{Effective Sample Size of the Momentum Buffer}
\label{app:effective-sample-size}

The perturbation bounds in \Cref{thm:spectral-gap,thm:recovery} are driven by a single derived scalar quantity,
\begin{equation}
\label{eq:effective-sample-size}
N_{\mathrm{eff}}
 \coloneqq \frac{(\sum_{s\ge0}h_s)^2}{\sum_{s\ge0}h_s^2}
=\frac{1+\beta}{1-\beta}=2T-1,
\end{equation}
the \emph{effective sample size} of the momentum buffer. For a simple average of $N$ uncorrelated, mean-zero random variables, the variance scales down by a factor of $1/N$.
The bounded variance zero-mean orthogonal perturbation sequence $\{\Xi_t\}$ is pairwise uncorrelated in the Frobenius inner product ($\e{\inner{\Xi_s}{\Xi_t}_F}=0$ for $s\neq t$, see \Cref{ass:decomp}(b) and \Cref{prop:ema-concentration}), so the variance of the momentum-weighted perturbation $ S_t \coloneqq\sum_{s\ge 0} h_s \Xi_{t-s}$ is reduced by $\sum_{s\ge 0} h_s^2 = 1/(2T-1)$~\citep{busbridge2023scale} (see \cref{eq:effective-sample-size}).
Thus, the momentum buffer with decay coefficient $\beta$ provides the same variance reduction as averaging over $2T-1$ independent samples. This factor determines the perturbation bound in \Cref{thm:spectral-gap,thm:recovery}.

\subsection{Polar Factor Equivalence with Heavy-ball Momentum}
\label{app:gain-invariance}

For the decay $\beta\in[0,1)$ and scalar $\gamma>0$, define the general momentum recursion
\begin{equation}
M_t=\beta M_{t-1}+\gamma G_t,
\label{eq:standardmomentum}
\end{equation}
where $G_t$ is the incoming fresh gradient matrix.
Under zero initialization $M_0=0$, the recursion unrolls to
\[
M_t=\sum_{s=0}^{t-1} h_s\, G_{t-s},\qquad \text{where}\qquad h_s\coloneqq\gamma\beta^s.
\]
The decay $\beta$ sets how far back the buffer effectively remembers, while $\gamma$ sets the overall scale. The two standard normalizations are
\[
\text{EMA momentum~\citep{gardner1985exponential}: } \gamma=1-\beta,
\qquad
\text{Polyak (heavy-ball) momentum~\citep{sutskever2013on}: } \gamma=1.
\]
Pre-polar update~\cref{eq:forward} applies the polar factor $\cO(\cdot)$ to the buffer $M_t = \sum_{s=0}^{t-1} h_s\, G_{t-s}$. Because the polar factor is scale-invariant, $\cO(\alpha X)=\cO(X)$ for every $\alpha>0$, we may divide $M_t$ by the positive scalar $A \coloneqq \sum_{s'\ge 0}h_{s'}=\gamma/(1-\beta)$ without changing the polar factor:
\[
\cO(M_t)
=\cO\!\left(\frac{M_t}{A}\right)
=\cO\!\left(\sum_{s=0}^{t-1}\frac{h_s}{A}\,G_{t-s}\right)
=\cO\!\left(\sum_{s=0}^{t-1}(1-\beta)\beta^s\, G_{t-s}\right),
\]
where the last equality uses $h_s/A = \gamma\beta^s\cdot(1-\beta)/\gamma = (1-\beta)\beta^s$. The scalar $\gamma$ drops out, and thus every first-order recursion with the same decay $\beta$ produces the same Pre-polar update. In particular, Polyak/heavy-ball momentum ($\gamma=1$) and EMA momentum ($\gamma=1-\beta$) generate identical Pre-polar updates. The EMA normalization $\gamma=1-\beta$ adopted in~\cref{eq:ema} picks the unique $\gamma$ for which the infinite kernel sums to one ($\sum_{s\ge 0}h_s=1$), so the buffer can be read directly as a weighted average of past gradients.

\begin{remark}[Polar factor equivalence of EMA and heavy-ball momentum]
\label{rem:standard-momentum}
Let $M_t^{(\beta,\gamma)}$ denote the zero-initialized buffer generated by~\cref{eq:standardmomentum}, and let $M_t^{\mathrm{ema}(\beta)}=(1-\beta)\sum_{s=0}^{t-1}\beta^s G_{t-s}$ be the EMA-normalized buffer with the same decay. Then $M_t^{(\beta,\gamma)}=\frac{\gamma}{1-\beta}M_t^{\mathrm{ema}(\beta)}$, so $\cO(M_t^{(\beta,\gamma)})=\cO(M_t^{\mathrm{ema}(\beta)})$ for every $\gamma>0$.
\end{remark}

\subsection{Initialization and the Post-Transient Regime}
\label{app:initialization}

Under an arbitrary initialization $M_0=M_{\mathrm{init}}$, the recursion~\cref{eq:ema} unrolls to
\[
M_t = \beta^t\,M_{\mathrm{init}} + (1-\beta)\sum_{s=0}^{t-1}\beta^s G_{t-s},
\]
so the buffer carries an additional $\beta^t M_{\mathrm{init}}$ term, an exponentially decaying transient of operator-norm size at most $\beta^t\snorm{M_{\mathrm{init}}}$. We say the recursion has entered its \emph{post-transient regime} when $t\ge c_0 T$ for a constant $c_0>0$ chosen so that $\beta^{c_0 T}\snorm{M_{\mathrm{init}}}$ is negligible relative to the signal scale. 

The same threshold $t\ge c_0 T$ also governs the EMA's warmup under the default zero initialization. 
With $M_{\mathrm{init}}=0$, the finite-time buffer $M_t=(1-\beta)\sum_{s=0}^{t-1}\beta^s G_{t-s}$ carries total kernel weight $1-\beta^t$ rather than $1$,
so the buffer recovers a fraction $1-\beta^t$ of a constant signal, which would violate the persistence floor $c_{\mathrm{sig}}\lambda_k$ in \Cref{ass:persistence} during the warmup phase.
The condition $t\ge c_0 T$ ensures this $O(\beta^t)$ error is negligible, recovering the same regime as in the arbitrary-initialization case.

\subsection{Practical Nesterov: Kernel Weights and Effective Sample Size}
\label{app:nesterov}

Practical deep-learning Nesterov momentum modifies the EMA recursion~\cref{eq:ema} by passing a linear combination of the buffer $M_t$ and the current gradient $G_t$ to the polar factor, rather than $M_t$ itself. With the general first-order recursion $M_t=\beta M_{t-1}+\gamma G_t$ and scalars $\nu,\kappa$, the parameter update is formed from
\[
N_t \;=\; \nu\,M_t + \kappa\,G_t.
\]
Under zero initialization $M_0=0$, the buffer unrolls to $M_t=\sum_{s=0}^{t-1}\gamma\beta^s G_{t-s}$, so
\[
N_t \;=\; (\nu\gamma+\kappa)\,G_t \;+\; \sum_{s=1}^{t-1} \nu\gamma\,\beta^s\, G_{t-s}
\;=\; \sum_{s=0}^{t-1} h_s\,G_{t-s},
\qquad
h_0=\nu\gamma+\kappa,\quad h_s=\nu\gamma\,\beta^s\ (s\ge 1).
\]

\paragraph{Non-equivalence with plain momentum.}
\Cref{app:gain-invariance} shows that $\cO(M_t)$ depends only on the decay $\beta$, since multiplying all geometric weights $\gamma\beta^s$ by the same positive constant leaves the polar factor unchanged. For $N_t$, however, changing $\gamma$ while keeping $\kappa$ fixed changes the relative weight of the current gradient versus the geometric tail. Thus the kernel shape is not determined by $\beta$ alone. In the nondegenerate case $\nu\gamma\neq 0$ and $0<\beta<1$, matching plain-momentum geometric weights $\tilde h_s=\tilde\gamma\tilde\beta^s$ to $h_s$ for all $s\ge1$ forces $\tilde\beta=\beta$ and $\tilde\gamma=\nu\gamma$. Matching the current-gradient weight then requires $h_0=\nu\gamma+\kappa=\tilde\gamma$, hence $\kappa=0$. Therefore practical Nesterov reduces to plain momentum exactly when $\kappa=0$.

\paragraph{EMA-normalized Muon Nesterov.}
The common implementation uses $\gamma=1-\beta$, $\nu=\beta$, $\kappa=1-\beta$, giving
\[
h_0 \;=\; 1-\beta^2, \qquad h_s \;=\; (1-\beta)\beta^{s+1}\quad (s\ge 1).
\]
In the stationary/infinite-tail approximation, the kernel mass is one,
\[
\sum_{s\ge 0} h_s
\;=\; (1-\beta^2) \;+\; (1-\beta)\beta^2\sum_{s\ge 0}\beta^s
\;=\; (1-\beta^2) + \beta^2 \;=\; 1,
\]
so $N_t$ remains a weighted average of past gradients. The squared kernel mass, which controls the variance proxy of \Cref{prop:ema-concentration} through \cref{eq:effective-sample-size}, is
\[
\sum_{s\ge 0} h_s^2
\;=\; (1-\beta^2)^2 + \frac{(1-\beta)^2\beta^4}{1-\beta^2}
\;=\; \frac{(1-\beta)\,(1+2\beta-2\beta^3)}{1+\beta},
\]
so the effective sample size becomes
\[
N_{\mathrm{eff}}^{\mathrm{nest}}
\;=\; \frac{1}{\sum_{s\ge 0} h_s^2}
\;=\; \frac{1+\beta}{(1-\beta)\,(1+2\beta-2\beta^3)}
\;<\; \frac{1+\beta}{1-\beta}
\;=\; 2T-1,
\]
with strict inequality because $1+2\beta-2\beta^3 = 1+2\beta(1-\beta^2)>1$ for $\beta\in(0,1)$.

Compared with plain momentum at the same $\beta$, practical Nesterov has a strictly smaller effective sample size and therefore attenuates the momentum-filtered perturbation in \Cref{prop:ema-concentration} less aggressively, while placing more weight on the current gradient ($h_0=1-\beta^2 > 1-\beta$). This trade-off explains why enabling Nesterov can help at fixed hyperparameters, while retuned plain momentum can produce very similar training curves in practice~\citep{wen2025fantastic}.

\section{Proofs in \Cref{sec:spectral-gap}}
\label{app:proof-thm1-cor-1}

To control the perturbation terms in \Cref{thm:spectral-gap}, we first establish the following concentration estimate.

\begin{proposition}[Concentration of the momentum-filtered perturbation]
\label{prop:ema-concentration}
Let $\{\Xi_t\}_{t\in \mathbb Z}$ be an $m\times n$ matrix-valued perturbation sequence that satisfies \Cref{ass:decomp}(b)
for $t\ge 0$, and $\Xi_t = 0$ for $t<0$.
Define
\[
S_t \coloneqq (1-\beta)\sum_{s\ge0}\beta^s \Xi_{t-s},\qquad T:=\frac{1}{1-\beta}.
\]
Then for every $u > 0$, we have
\[
\prob{\fnorm{S_t} \ge \sqrt{\frac{\eta}{2T-1}} u} \le \frac{1}{u^2}.
\]
As a corollary,
\[
\prob{\opnorm{S_t} \ge \sqrt{\frac{\eta}{2T-1}} u} \le \frac{1}{u^2}.
\]
\end{proposition}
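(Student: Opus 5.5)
The plan is a second-moment (Chebyshev/Markov) argument on the Frobenius norm, followed by the comparison $\opnorm{\cdot}\le\fnorm{\cdot}$. Since $\Xi_t=0$ for $t<0$, the sum defining $S_t$ is finite, namely $S_t=(1-\beta)\sum_{s=0}^{t}\beta^s\Xi_{t-s}$. This means no convergence issues arise, and expectations can be exchanged with the sum freely.

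\textbf{Step 1 (second moment).} Expand the squared norm as a double sum:
\[
\fnorm{S_t}^2=(1-\beta)^2\sum_{s,s'=0}^{t}\beta^{s+s'}\inner{\Xi_{t-s}}{\Xi_{t-s'}}_F .
\]
Take expectations and use the orthogonality part of \Cref{ass:decomp}(b). For $s\neq s'$ we have $\e{\inner{\Xi_{t-s}}{\Xi_{t-s'}}_F}=0$, so all cross terms vanish and only the diagonal survives:
\[
\e{\fnorm{S_t}^2}=(1-\beta)^2\sum_{s=0}^{t}\beta^{2s}\,\e{\fnorm{\Xi_{t-s}}^2}\le \eta(1-\beta)^2\sum_{s\ge0}\beta^{2s}=\eta\,\frac{(1-\beta)^2}{1-\beta^2}=\eta\,\frac{1-\beta}{1+\beta}.
\]
By the effective-sample-size identity \cref{eq:effective-sample-size}, the last expression equals $\eta/(2T-1)$. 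Integrability of each cross term is guaranteed by Cauchy--Schwarz together with the bound $\e{\fnorm{\Xi_t}^2}\le\eta$.

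\textbf{Step 2 (Markov).} Apply Markov's inequality to the nonnegative variable $\fnorm{S_t}^2$:
\[
\prob{\fnorm{S_t}\ge \sqrt{\tfrac{\eta}{2T-1}}\,u}=\prob{\fnorm{S_t}^2\ge \tfrac{\eta u^2}{2T-1}}\le \frac{\e{\fnorm{S_t}^2}(2T-1)}{\eta u^2}\le\frac1{u^2}.
\]

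\textbf{Step 3 (corollary).} Since $\opnorm{S_t}\le\fnorm{S_t}$ deterministically, the event for the operator norm is contained in the event for the Frobenius norm. The same bound therefore holds for $\opnorm{S_t}$.

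I expect no step to pose a real obstacle. The only point needing care is that \Cref{ass:decomp}(b) provides only Frobenius-orthogonality in expectation, not independence. Accordingly, the argument must stay at the level of second moments, and no higher-moment or matrix-concentration tools are available. This is exactly why the resulting tail is the polynomial $1/u^2$ rather than a sub-Gaussian one.
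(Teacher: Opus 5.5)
Your proposal is correct and follows the paper's proof essentially verbatim: you bound $\e{\fnorm{S_t}^2}\le \eta/(2T-1)$ by killing the cross terms with Frobenius-orthogonality and using the effective-sample-size identity, then apply Chebyshev (your Markov step on $\fnorm{S_t}^2$ is the same inequality), and finally pass to the operator norm via $\opnorm{\cdot}\le\fnorm{\cdot}$. Your remarks on the sum being finite and on the integrability of the cross terms (via Cauchy--Schwarz) are correct details that the paper leaves implicit.
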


\begin{proof}[Proof of \Cref{prop:ema-concentration}]
By \Cref{ass:decomp}(b) the perturbation $\{\Xi_t\}$ is pairwise uncorrelated since orthogonal in the Frobenius inner product, $\e{\inner{\Xi_{t_1}}{\Xi_{t_2}}_F} = 0$ for $t_1 \neq t_2$.

Then we have
\[
\var[F]{S_t} = \e{\fnorm{S_t}^2} = \sum_{s=0}^\infty h_s^2 \e{\fnorm{\Xi_{t-s}}^2} \le \frac{\eta}{2T-1}
\]
by \cref{eq:effective-sample-size}.

By Chebyshev's bound we have
\[
\prob{\fnorm{S_t} \ge \sqrt{\frac{\eta}{2T-1}} u} \le \frac{1}{u^2}.
\]
The corollary is a direct consequence of the fact that the Frobenius norm is an upper bound of the operator norm.
\end{proof}

\subsection{Proof of \Cref{thm:spectral-gap}}
\label{proof:thm1}
\begin{proof}
By the decomposition~\cref{eq:decomp}, defining $G_t = 0$ for $t<0$, we have
\begin{align*}
M_t
&= (1-\beta)\sum_{s\ge0} \beta^s G_{t-s}
\\&= (1-\beta)\sum_{s\ge0} \beta^s G_{t-s}^{\mathrm{sig}} + (1-\beta)\sum_{s\ge0} \beta^s \Xi_{t-s}
\\&= \sum_{i=1}^r \bar\lambda_i(t) u_i v_i^\top + S_t.
\end{align*}
Let $M_t^{\mathrm{sig}}$ denote $\sum_{i=1}^r \bar\lambda_i(t) u_i v_i^\top$.

\noindent\textbf{Signal preservation.}
For $k=1,\ldots,r$, by the min-max theorem of singular values,
the $k$-th singular value of $M_t$ is exactly
\[
\sigma_k(M_t) = \max_{\substack{S\subset\Rbb^n\\\dim(S)=k}} \min_{\substack{x\in S\\\|x\|=1}} \|M_t x\|.
\]
We may take $S=\operatorname{span}\{v_1,\ldots,v_k\}$. Then, for every unit vector $x\in S$, the triangle inequality and the definition of the operator norm imply
\[
\|M_t x\|
\ge \|M_t^{\mathrm{sig}} x\| - \|S_t x\|
\ge \min_{1\le i\le k}|\bar\lambda_i(t)| - \opnorm{S_t}
\ge c_{\mathrm{sig}}\lambda_k - \opnorm{S_t}.
\]
Here the first inequality uses the triangle inequality. The second uses the orthonormality of $\{u_i\}$ and $\{v_i\}$ together with $\|S_t x\|\le \opnorm{S_t}$ for $\|x\|=1$. The last uses~\Cref{ass:persistence} and $\lambda_1\ge\cdots\ge\lambda_r$.
Eventually, \Cref{prop:ema-concentration} leads to the following bound:
\begin{align*}
\prob{\sigma_k(M_t) \le c_{\mathrm{sig}}\lambda_k - \sqrt{\frac{\eta}{2T-1}} u}
\le
\prob{\opnorm{S_t} > \sqrt{\frac{\eta}{2T-1}} u}
\le \frac{1}{u^2}.
\end{align*}

\noindent\textbf{Perturbation attenuation.}
Similarly, for $k=r+1,\ldots, n$, by the min-max theorem,
\[
\sigma_k(M_t) = \min_{\substack{S\subset\Rbb^n\\\dim(S)=n-k+1}} \max_{\substack{x\in S\\\|x\|=1}} \|M_t x\|.
\]
We may take $S$ be one arbitrary subspace of $\operatorname{span}\{v_1,\ldots,v_{k-1}\}^\perp$
so that $M_t$ restricted to $S$ is zero, then for every unit $x\in S$,
\[
\|M_t x\| \le \|M_t^{\mathrm{sig}} x\| + \|S_t x\|
\le 0 + \opnorm{S_t}.
\]
Then
\begin{align*}
\prob{\sigma_k(M_t) \ge \sqrt{\frac{\eta}{2T-1}} u}
\le
\prob{\opnorm{S_t} \ge \sqrt{\frac{\eta}{2T-1}} u}
\le \frac{1}{u^2}.
\end{align*}

Taking $u = (2T-1)^{\frac14}$ concludes the proof.
\end{proof}

\subsection{Proof of \Cref{cor:direction}}
\begin{proof}
Apply Wedin's $\sin\Theta$ theorem for singular subspaces \citep{wedin1972perturbation} to the decomposition
\[
M_t=M_t^{\mathrm{sig}}+R_t.
\]
The clean matrix $M_t^{\mathrm{sig}}$ has left and right signal subspaces exactly $\operatorname{span}(U)$ and $\operatorname{span}(V)$, and we have
\[
\sigma_r(M_t^{\mathrm{sig}})-\sigma_{r+1}(M_t)
\ge
c_{\mathrm{sig}}\lambda_r - \frac{\sqrt\eta}{(2T-1)^{1/4}}
\]
with probability at least $1 - (2T-1)^{-1/2}$ by \Cref{thm:spectral-gap}.
Wedin's $\sin\Theta$ theorem therefore gives,
\[
\max\bigl\{\|\sin\Theta(\hat U_t,U)\|_2,\;\|\sin\Theta(\hat V_t,V)\|_2\bigr\}
\le \frac{\snorm{R_t}}{\sigma_r(M_t^{\mathrm{sig}})-\sigma_{r+1}(M_t)}
\le \frac{\sqrt\eta/(2T-1)^{1/4}}{c_{\mathrm{sig}}\lambda_r - \sqrt\eta/(2T-1)^{1/4}},
\]
where the last inequality uses $\snorm{R_t}=\snorm{S_t}\le\sqrt\eta/(2T-1)^{1/4}$ from \Cref{prop:ema-concentration} on the same event. Both individual bounds in \Cref{cor:direction} follow.
\end{proof}

\section{Proofs in \Cref{sec:spectral-gapII}}
\label{app:proof-thm2}

Recall the notation in \Cref{ass:decomp} and \Cref{ass:persistence}.
For the ordering theorem, we additionally assume \Cref{ass:time-invariant-signal}.

The following key lemma describes how orthogonalization over noise creates a bias that prevents recovery of the signal direction.

\begin{lemma}[Quantitative gap for the polar factor]
\label{lem:polar-gap}
Let $G\in \mathbb{R}^{m\times n}$ with $m\ge n$ and $\operatorname{rank}(G)=r$. $G$ has thin SVD
\[
G = U_r \Sigma_r V_r^\top, \qquad
\Sigma_r = \operatorname{diag}(\sigma_1,\dots,\sigma_r, 0, \ldots, 0),
\qquad
\text{and }\sigma_1 \ge \cdots \ge \sigma_r > 0,
\]
where $U_r\in \mathbb R^{m\times n}$, $V_r\in \mathbb R^{n\times n}$, and $\Sigma_r\in\mathbb R^{n\times n}$.

Let \(\Xi\in\Rbb^{m\times n}\) be any random matrix.
Then we have
\[
\e{\inner{\cO(G+\Xi)}{G}_F}
\le
\inner{\cO(G)}{G}_F
-
\frac{\sigma_r}{2}
\e{\fnorm{\bigl(\cO(G+\Xi)-\cO(G)\bigr)V_r}^2}.
\]
\end{lemma}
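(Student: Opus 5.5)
The plan is to prove a deterministic, pointwise inequality for every realization of $\Xi$ and then take expectations; no probabilistic structure of $\Xi$ is needed. Write $Q \coloneqq \cO(G+\Xi)$ and $P \coloneqq \cO(G)$. The one property of $Q$ that matters is that it is a polar factor (a partial isometry), so $\opnorm{Q}\le 1$ and hence $\|Qx\|\le\|x\|$ for every $x$. This holds whatever the rank of $G+\Xi$.

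Let $u_i, v_i$ ($i\le r$) be the singular vector pairs of $G$ belonging to the nonzero singular values. Then
\[
\inner{Q}{G}_F=\operatorname{tr}(Q^\top G)=\sum_{i=1}^r \sigma_i\, u_i^\top Q v_i .
\]
In particular, since $Pv_i=u_i$ for $i\le r$, we get $\inner{P}{G}_F=\sum_{i=1}^r\sigma_i$. This value does not depend on how $P$ is completed on the null directions of $G$, because those directions carry $\sigma=0$.

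The key step is to expand, for each $i\le r$,
\[
\|(Q-P)v_i\|^2=\|Qv_i\|^2-2u_i^\top Qv_i+\|u_i\|^2\le 2-2u_i^\top Q v_i ,
\]
using $\|Qv_i\|\le 1$. Rearranging gives $1-u_i^\top Qv_i\ge \tfrac12\|(Q-P)v_i\|^2\ge 0$. Multiplying by $\sigma_i\ge\sigma_r>0$ and summing over $i$ yields
\[
\inner{P}{G}_F-\inner{Q}{G}_F=\sum_{i=1}^r\sigma_i\bigl(1-u_i^\top Qv_i\bigr)\ge\frac{\sigma_r}{2}\sum_{i=1}^r\|(Q-P)v_i\|^2 .
\]
The right-hand side equals $\frac{\sigma_r}{2}\fnorm{(Q-P)V_r}^2$, where $V_r$ is read as having the top-$r$ right singular vectors as its columns, with the columns attached to the zero singular values dropped or zeroed. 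Taking expectations of both sides gives the lemma. Integrability is automatic, since all quantities are bounded by $\fnorm{G}$ and $4n$.

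The only real obstacle is bookkeeping for the rank-deficient case, and I will make it explicit in the proof. The statement writes $V_r\in\Rbb^{n\times n}$ with $\Sigma_r$ padded by zeros, and $\cO(G)$ is not unique on the null space of $G$. The inequality is only claimed, and only true, for the columns $v_1,\dots,v_r$. I will therefore interpret $\fnorm{(\cdot)V_r}$ via $V_r\Sigma_r^0$, where $\Sigma_r^0$ is the projector onto the first $r$ coordinates. Equivalently, I restrict the sum to $i\le r$. With this reading the argument is uniform in any choice of completion of $\cO(G)$.
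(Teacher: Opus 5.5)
Your proof is correct and is the paper's argument written column by column: the paper sets $B=U_r^\top QV_r$, bounds $\sum_{i\le r}\sigma_i(1-B_{ii})\ge\sigma_r(r-\operatorname{Tr}B)$, and uses $\|QV_r-U_r\|_F^2\le 2r-2\operatorname{Tr}B$, which is your per-column expansion summed over $i\le r$.

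Your reading of $V_r$ as $[v_1,\dots,v_r]\in\mathbb{R}^{n\times r}$ is what the paper's proof tacitly uses (it takes $B\in\mathbb{R}^{r\times r}$ and $\|U_r\|_F^2=r$). This reading is genuinely necessary when $r<n$: perturbations living on the null directions of $G$ leave $\langle\cO(G+\Xi),G\rangle_F$ unchanged, but can make the norm computed with the zero-padded $n\times n$ matrix strictly positive.
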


\begin{proof}
Fix a deterministic matrix \(X\in\Rbb^{m\times n}\), and write
\[
Q  \coloneqq  \cO(X).
\]
Since the singular values of \(\cO(X)\) are all either 0 or 1, one has \(\opnorm{Q}\le 1\). Moreover,
\[
\cO(G)=U_rV_r^\top.
\]
Define
\[
B  \coloneqq  U_r^\top Q V_r \in \Rbb^{r\times r}.
\]
Then \(\opnorm{B}\le 1\), since \(U_r\) and \(V_r\) have orthonormal columns.

Now
\[
\inner{Q}{G}_F
=
\operatorname{Tr}(Q^\top U_r \Sigma_r V_r^\top)
=
\operatorname{Tr}(B^\top \Sigma_r),
\]
hence
\[
\inner{\cO(G)}{G}_F - \inner{Q}{G}_F
=
\operatorname{Tr}(\Sigma_r) - \operatorname{Tr}(B^\top \Sigma_r)
=
\sum_{i=1}^r \sigma_i (1-B_{ii}).
\]
Since \(\sigma_i \ge \sigma_r\) for all \(i\),
\[
\inner{\cO(G)}{G}_F - \inner{Q}{G}_F
\ge
\sigma_r \sum_{i=1}^r (1-B_{ii})
=
\sigma_r \bigl(r-\operatorname{Tr}(B)\bigr).
\]
On the other hand,
\[
\fnorm{(Q-\cO(G))V_r}^2
=
\fnorm{QV_r-U_r}^2
=
\fnorm{QV_r}^2 + \fnorm{U_r}^2 - 2\operatorname{Tr}(U_r^\top QV_r).
\]
Because \(\opnorm{Q}\le 1\) and \(V_r\) has orthonormal columns,
\[
\fnorm{QV_r}^2 \le r,
\qquad
\fnorm{U_r}^2 = r,
\qquad
\operatorname{Tr}(U_r^\top QV_r)=\operatorname{Tr}(B).
\]
Therefore
\[
\fnorm{(Q-\cO(G))V_r}^2
\le
2r - 2\operatorname{Tr}(B),
\]
or equivalently,
\[
r-\operatorname{Tr}(B)
\ge
\frac12 \fnorm{(Q-\cO(G))V_r}^2.
\]
Combining the last two estimates yields the pointwise bound
\[
\inner{\cO(G)}{G}_F - \inner{\cO(X)}{G}_F
\ge
\frac{\sigma_r}{2}\,
\fnorm{(\cO(X)-\cO(G))V_r}^2.
\]
Applying this with \(X=G+\Xi\) and taking expectations proves the claim.
\end{proof}

\begin{remark}
\label{rem:always-positive-gap}
Let
\[
\mathcal{N}_G
 \coloneqq 
\{X\in\Rbb^{m\times n} : \cO(X)=\cO(G)\}.
\]
We have
\[
\cO(X)  \coloneqq  X (X^\top X)^{\dagger/2},
\]
where \((\cdot)^\dagger\) denotes the Moore–Penrose pseudoinverse.
Thus,
\begin{align*}
\phantom{\iff}& \cO(X) = \cO(G)
\\ \iff &
X (X^\top X)^{\dagger/2} = U_r V_r^\top
\\ \iff &
X = U_r V_r^\top (X^\top X)^{1/2}.
\end{align*}
A necessary condition for the last equation to hold is that
\begin{equation}\label{eq:necc-set}
X X^\top = U_r V_r^\top (X^\top X) V_r U_r^\top.
\end{equation}
Thus, $\mathcal N_G$ is a subset of the solution set to \cref{eq:necc-set}. The latter is a collection of non-trivial algebraic equations (i.e. non-zero polynomial equations in the entries of $X$)
whose solution set has Lebesgue measure zero \citep[\S2.8]{BochnakCosteRoy1998}.
In particular, if $\Xi$ is a continuous random matrix, then $\prob{\cO(G+\Xi)=\cO(G)}=0$. Thus the gap in \Cref{lem:polar-gap} is positive with probability one under continuous perturbations.

In conclusion, as long as the law of $\Xi$ is absolutely continuous with respect to the Lebesgue measure, we have a strictly positive gap in \Cref{lem:polar-gap}.
\end{remark}

The following lemma is used to control the bias for Pre-polar pipeline.
\begin{lemma}
\label{lem:inner-product-perturbation-bound}
Let $G\in \mathbb{R}^{m\times n}$ with $m\ge n$ and $\operatorname{rank}(G)=r$. $G$ has thin SVD
\[
G = U_r \Sigma_r V_r^\top, \qquad
\Sigma_r = \operatorname{diag}(\sigma_1,\dots,\sigma_r, 0, \ldots, 0),
\qquad
\text{and }\sigma_1 \ge \cdots \ge \sigma_r > 0,
\]
where $U_r\in \mathbb R^{m\times n}$, $V_r\in \mathbb R^{n\times n}$, and $\Sigma_r\in\mathbb R^{n\times n}$.
For any $R \in \Rbb^{m\times n}$, we have
\[
\frac{\langle \mathcal O(G+R), G\rangle_F}{\langle \mathcal O(G), G\rangle_F}
\ge
1 - \frac{2n\opnorm{R}}{\inner{\cO(G)}{G}_F}.
\]
\end{lemma}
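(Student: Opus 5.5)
The bound is equivalent to the additive inequality
\[
\inner{\cO(G+R)}{G}_F \ge \inner{\cO(G)}{G}_F - 2n\opnorm{R},
\]
after dividing by the positive number $\inner{\cO(G)}{G}_F$. This quantity is $\operatorname{Tr}(\Sigma_r)=\sum_{i\le r}\sigma_i>0$, exactly as computed in the proof of \Cref{lem:polar-gap}. The plan is to recognize $\inner{\cO(G)}{G}_F$ as the nuclear norm $\|G\|_*$. The proof then reduces to a short chain of nuclear/operator-norm duality estimates, with no perturbation theory of singular vectors.

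\textbf{Step 1 (polar factor attains the nuclear norm).} For any $X\in\Rbb^{m\times n}$ with thin SVD $X=U\Sigma V^\top$, we have $\inner{\cO(X)}{X}_F=\operatorname{Tr}(VU^\top U\Sigma V^\top)=\operatorname{Tr}(\Sigma)=\|X\|_*$. In particular $\inner{\cO(G)}{G}_F=\|G\|_*$. We also record that $\cO(X)$ has at most $n$ singular values, each equal to $0$ or $1$. Hence $\|\cO(X)\|_*\le n$ and $\opnorm{\cO(X)}\le 1$.

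\textbf{Step 2 (split off the perturbation).} Set $X=G+R$ and write
\[
\inner{\cO(X)}{G}_F=\inner{\cO(X)}{X}_F-\inner{\cO(X)}{R}_F=\|X\|_*-\inner{\cO(X)}{R}_F .
\]
For the first term, the triangle inequality for the nuclear norm gives $\|X\|_*\ge\|G\|_*-\|R\|_*\ge\|G\|_*-n\opnorm{R}$, since $R$ has at most $n$ nonzero singular values. For the second term, the trace duality $|\inner{A}{B}_F|\le\|A\|_*\opnorm{B}$ gives $|\inner{\cO(X)}{R}_F|\le n\opnorm{R}$.

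\textbf{Step 3 (combine).} Combining the two estimates yields $\inner{\cO(G+R)}{G}_F\ge\|G\|_*-2n\opnorm{R}$, and dividing by $\|G\|_*=\inner{\cO(G)}{G}_F$ gives the lemma.

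The only point needing care is the identification in Step 1 when $X$ is rank-deficient, where $\cO(X)$ is defined through a thin SVD that includes zero singular values. Even then $\inner{\cO(X)}{X}_F=\operatorname{Tr}(\Sigma)$ holds, because the zero singular values contribute nothing regardless of how the corresponding columns are chosen. I therefore expect no real obstacle. The factor $2n$ is deliberately loose: one $n$ comes from $\|R\|_*\le n\opnorm{R}$ and the other from $\|\cO(X)\|_*\le n$.
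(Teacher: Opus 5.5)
Your proposal is correct and follows essentially the same route as the paper. Both proofs identify $\inner{\cO(G)}{G}_F=\|G\|_*$ and write $\inner{\cO(G+R)}{G}_F=\|G+R\|_*-\inner{\cO(G+R)}{R}_F$. Both then obtain one factor of $n\opnorm{R}$ from $\|R\|_*\le n\opnorm{R}$ and the other from $\|\cO(G+R)\|_*\le n$ via nuclear/operator-norm duality.
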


\begin{proof}
In this proof, we write $\|X\|_* = \sum_i \sigma_i(X)$ for the nuclear norm, the sum of all singular values.
Since $\mathcal O(G)=U_rV_r^\top$,
\[
\langle \mathcal O(G), G\rangle_F
=
\langle U_rV_r^\top, U_r\Sigma_rV_r^\top\rangle_F
=
\Tr(\Sigma_r)
=
\norm{G}_*.
\]

Now let $B=G+R$.
\[
\langle \mathcal O(B), G\rangle_F
=
\langle \mathcal O(B), B-R\rangle_F
=
\norm{B}_* - \langle \mathcal O(B), R\rangle_F.
\]
By the triangle inequality,
\[
\norm{B}_*\ge\norm{G}_* - \norm R_*.
\]
Since $R$ has rank at most $n$, we have $\norm{R}_*\le n\opnorm R$, hence
\[
\norm{B}_*\ge\norm{G}_* - n\opnorm R.
\]
Also, $\mathcal O(B)$ has at most $n$ singular values equal to $1$, so
\[
\|\mathcal O(B)\|_* \le n.
\]
Hence, by duality of nuclear and operator norms,
\[
|\langle \mathcal O(B), R\rangle_F|
\le
\|\mathcal O(B)\|_* \opnorm R
\le
n \opnorm R.
\]
Therefore
\[
\langle \mathcal O(B), G\rangle_F
\ge
\|G\|_* - 2n\opnorm R.
\]
Dividing by $\langle \mathcal O(G), G\rangle_F=\|G\|_*$ gives
\[
\frac{\langle \mathcal O(G+R), G\rangle_F}{\langle \mathcal O(G), G\rangle_F}
\ge
1 - \frac{2n\opnorm R}{\|G\|_*}.
\]
\end{proof}

\subsection{Proof of \Cref{thm:recovery}}
\label{proof:thm2}
\begin{proof}
\textbf{Part (i).}
By \Cref{lem:polar-gap} and \Cref{rem:always-positive-gap}, there exists a constant $C>0$ such that
\[
\e{
\frac{\inner{\cO(G_t)}{G^\mathrm{sig}}_F}{\inner{\cO(G^\mathrm{sig})}{G^\mathrm{sig}}_F}
}
\le
1 - C.
\]

\textbf{Part (ii).}
By the definition of $\widetilde M_t$, \Cref{lem:polar-gap}, and \Cref{rem:always-positive-gap}, for the same constant $C>0$ we have
\[
\e{
\frac{\inner{\widetilde M_t}{G^\mathrm{sig}}_F}
{\inner{\cO(G^{\mathrm{sig}})}{G^\mathrm{sig}}_F}
}
=
(1-\beta)\sum_{s=0}^t\beta^s \e{
\frac
{\inner{\cO(G_{t-s})}{G^\mathrm{sig}}_F}
{\inner{\cO(G^{\mathrm{sig}})}{G^\mathrm{sig}}_F}
}
\le
(1-\beta)\sum_{s\ge0}\beta^s (1-C)
=
1 - C.
\]
Now consider the \Cref{ass:pairwise-independence}.
Since for any $X$, $\left|\inner{\cO(X)}{G^\mathrm{sig}}_F\right| \le \inner{\cO(G^\mathrm{sig})}{G^\mathrm{sig}}_F$,
the variance
\[
\var{\inner{\cO(G_t)}{G^\mathrm{sig}}_F}\le \inner{\cO(G^\mathrm{sig})}{G^\mathrm{sig}}_F^2
\]
for every $t\ge 0$.
So we have
\begin{align*}
    \var{\inner{\widetilde M_t}{G^\mathrm{sig}}_F}
    &=
    \var{(1-\beta)\sum_{s=0}^t\beta^s \inner{\cO(G_{t-s})}{G^\mathrm{sig}}_F}
    \\&=
    (1-\beta)^2\sum_{s=0}^t\beta^{2s} \var{\inner{\cO(G_{t-s})}{G^\mathrm{sig}}_F}
    \\&\le
    (1-\beta)^2\sum_{s=0}^t\beta^{2s} \inner{\cO(G^\mathrm{sig})}{G^\mathrm{sig}}_F^2
    \\&\le
    (1-\beta)^2\cdot\frac{1}{1-\beta^2}\cdot\inner{\cO(G^\mathrm{sig})}{G^\mathrm{sig}}_F^2
    \\&=
    \frac{1-\beta}{1+\beta} \inner{\cO(G^\mathrm{sig})}{G^\mathrm{sig}}_F^2
    \\&=
    \frac{1}{2T-1} \inner{\cO(G^\mathrm{sig})}{G^\mathrm{sig}}_F^2.
\end{align*}

By Chebyshev's inequality,
\begin{align*}
&\phantom{{}={}}
\prob{
\frac{\inner{\widetilde M_t}{G^\mathrm{sig}}_F}{\inner{\cO(G^\mathrm{sig})}{G^\mathrm{sig}}_F}
\ge 1 - \frac{C}{2}
}
\\&=
\prob{
\inner{\widetilde M_t}{G^\mathrm{sig}}_F
\ge
\left(1 - \frac{C}{2}\right) \inner{\cO(G^\mathrm{sig})}{G^\mathrm{sig}}_F
}
\\&=
\prob{
\inner{\widetilde M_t}{G^\mathrm{sig}}_F - \e{\inner{\widetilde M_t}{G^\mathrm{sig}}_F}
\ge
\left(1 - \frac{C}{2}\right) \inner{\cO(G^\mathrm{sig})}{G^\mathrm{sig}}_F - \e{\inner{\widetilde M_t}{G^\mathrm{sig}}_F}
}
\\&\le
\prob{
\inner{\widetilde M_t}{G^\mathrm{sig}}_F - \e{\inner{\widetilde M_t}{G^\mathrm{sig}}_F}
\ge
\frac{C\sqrt{2T-1}}{2} \frac{\inner{\cO(G^\mathrm{sig})}{G^\mathrm{sig}}_F}{\sqrt{2T-1}}
}
\\&\le
\frac{4}{C^2(2T-1)}.
\end{align*}

\noindent\textbf{Part (iii).}
By \Cref{lem:inner-product-perturbation-bound}, we have
\[
\e{\frac{\inner{\cO(M_t)}{G^\mathrm{sig}}_F}{\inner{\cO(G^\mathrm{sig})}{G^\mathrm{sig}}_F}}
\ge
1 -
\e{\frac{2n\|M_t - G^\mathrm{sig}\|_{\op}}{\inner{\cO(G^\mathrm{sig})}{G^\mathrm{sig}}_F}}.
\]

Similar to the proof of \Cref{prop:ema-concentration}, we have
\begin{align*}
    \e{\fnorm{M_t - G^{\mathrm{sig}}}^2}
    &=
    \e{\fnorm{(1-\beta)\sum_{s=0}^t\beta^s \Xi_{t-s}}^2}
    \\&=
    (1-\beta)^2\sum_{s=0}^t\beta^{2s} \e{\fnorm{\Xi_{t-s}}^2}
    \\&\le
    \frac{1}{2T-1}\,\eta,
\end{align*}
where the second equality uses pairwise uncorrelatedness and the inequality uses $\E\fnorm{\Xi_{t-s}}^2 \le \eta$ from \Cref{ass:decomp}(b).

By Lyapunov's inequality,
\[
\e{\|M_t - G^\mathrm{sig}\|_{\op}}
\le
\e{\|M_t - G^\mathrm{sig}\|_{\op}^2}^{\frac12}
\le
\frac{\sqrt{\eta}}{(2T-1)^{\frac12}}.
\]
Therefore,
\[
\e{\frac{\inner{\cO(M_t)}{G^\mathrm{sig}}_F}{\inner{\cO(G^\mathrm{sig})}{G^\mathrm{sig}}_F}}
\ge
1 -
\frac{2n\sqrt{\eta}}{(2T-1)^{\frac12}\inner{\cO(G^\mathrm{sig})}{G^\mathrm{sig}}_F}.
\]
By taking $C' = \frac{2n\sqrt{\eta}}{\inner{\cO(G^\mathrm{sig})}{G^\mathrm{sig}}_F}$, we obtain~\cref{eq:forward-align}.

By Markov's inequality,
\begin{align*}
&\phantom{{}={}}
\prob{
\frac{\inner{\cO(M_t)}{G^\mathrm{sig}}_F}{\inner{\cO(G^\mathrm{sig})}{G^\mathrm{sig}}_F}
\le 1 - \frac{C'}{(2T-1)^{\frac14}}
}
\\&\le
\prob{
\|M_t - G^\mathrm{sig}\|_{\op}
\ge
\frac{\sqrt\eta}{(2T-1)^{\frac14}}
}
\\&\le
\e{\|M_t - G^\mathrm{sig}\|_{\op}}\cdot (2T-1)^{\frac14}
\\&\le
\frac{1}{(2T-1)^{\frac14}}.
\end{align*}

\noindent\textbf{Part (iv).}
Taking $T_0$ such that $\frac{C'}{\sqrt{2T_0-1}}\le \frac{C}{2}$ proves the first claim.
The second claim is a union bound of the probability bounds for Pre-polar and Post-polar pipelines.
\end{proof}

\subsection{Proof of \Cref{thm:separation}}
\label{proof:thm3}
\begin{proof}
  The rank-1 spiked model (\Cref{ass:rank-1}) yields
  $\inner{\cO(G^\mathrm{sig})}{G^\mathrm{sig}}_F = \lambda\|uv^\top\|_F^2 = \lambda$,
  which means that the signal alignment under Polar-only pipeline is $\E[\inner{\cO(G_t)}{uv^\top}_F]$.
  We derive the upper bound on this quantity, whose integrand is bounded as follows via the Cauchy--Schwarz inequality:
  \[
    \begin{aligned}
      \inner{\cO(G_t)}{uv^\top}_F
      &= \inner{\cO(\Xi_t)}{uv^\top}_F + \inner{\cO(G_t)-\cO(\Xi_t)}{uv^\top}_F \\
      &\le \underbrace{\inner{\cO(\Xi_t)}{uv^\top}_F}_{\eqqcolon (\clubsuit)} + \underbrace{\|\cO(G_t) - \cO(\Xi_t)\|_F}_{\eqqcolon (\diamondsuit)}.
    \end{aligned}
  \]

  To address the term $(\clubsuit)$, note that the polar factor of an element-wise Gaussian matrix $\cO(\Xi_t)$ is uniformly distributed over the Stiefel manifold $\mathrm{St}(m,n)\coloneqq\{U\in\Rbb^{m\times n} : U^\top U=I_n\}$~\citep[Theorem~2.2.1]{chikuse2003statistics}.
  This implies that $\cO(\Xi_t)$ and $V_1\cO(\Xi_t)V_2$ for the pair of orthogonal actions $(V_1,V_2)\in O(m)\times O(n)$ follow the same law.
  Thus, we can take $u=e_m^{(1)}\in\Rbb^m$ and $v=e_n^{(1)}\in\Rbb^n$ (the standard basis vector with the first coordinate being one) without loss of generality because the underlying law of $\inner{\cO(\Xi_t)}{uv^\top}_F$ and $\inner{\cO(\Xi_t)}{e_m^{(1)}e_n^{(1)\top}}_F$ remains the same with any pair of orthogonal actions $(V_1,V_2)$ satisfying $u=V_1^\top e_m^{(1)}$ and $v=V_2e_n^{(1)}$.
  Now, we can focus on $\inner{\cO(\Xi_t)}{e_m^{(1)}e_n^{(1)\top}}_F = [\cO(\Xi_t)]_{11}$ instead.
  Since $\cO(\Xi_t)\in\mathrm{St}(m,n)$, we immediately have $\sum_{i=1}^m[\cO(\Xi_t)]_{i1}^2=1$.
  By noting that the law of $\cO(\Xi_t)$ is invariant with the orthogonally actioned $V\cO(\Xi_t)$ with $V\in O(m)$, all of $[\cO(\Xi_t)]_{11}, \dots, [\cO(\Xi_t)]_{m1}$ follow the same law.
  This indicates that we have $\E[\sum_{i=1}^m[\cO(\Xi_t)]_{i1}^2] = m\E[[\cO(\Xi_t)]_{11}^2] = 1$.
  Therefore,
  \[
    \E[\inner{\cO(\Xi_t)}{uv^\top}_F]
    = \E[[\cO(\Xi_t)]_{11}]
    \le \sqrt{\E[[\cO(\Xi_t)]_{11}^2]}
    = \frac1{\sqrt m},
  \]
  where we used Jensen's inequality.

  To address the term $(\diamondsuit)$, let $\mathcal E\coloneqq\{\sigma_{\min}(\Xi_t) \ge s_*/2\}$ for $s_* \coloneqq \sigma_\Xi(\sqrt m - \sqrt n)$ denotes the ``good'' event,
  where $\sigma_{\min}(\Xi_t) > \lambda$ holds.
  We bound $(\diamondsuit)$ by viewing $G_t$ as a perturbation of $\Xi_t$ toward the rank-1 spike $\lambda uv^\top$.
  Write $G_{t,\tau}\coloneqq \Xi_t + \tau\lambda uv^\top$ ($0<\tau<1$) as an intermediate perturbation.
  Before proceeding, let us ensure the differentiability of the polar factor $\cO(\cdot)$ at $G_{t,\tau}$, for which $\sigma_{\min}(G_{t,\tau}) > 0$ is necessary by the definition of the polar factor --- in fact, to ensure the invertibility of $G_{t,\tau}^\top G_{t,\tau}$.
  By Weyl's inequality for singular values, we have
  \begin{equation}
    \sigma_{\min}(G_{t,\tau}) \ge \sigma_{\min}(\Xi_t) - \tau\lambda\snorm{uv^\top}
    = \sigma_{\min}(\Xi_t) - \tau\lambda
    \ge \sigma_{\min}(\Xi_t) - \lambda.
    \label{eq:weyl}
  \end{equation}
  Thus, $\sigma_{\min}(G_{t,\tau}) > 0$ is ensured on the ``good'' event $\mathcal E$.
  In this case, the fundamental theorem of calculus gives
  \[
    \cO(G_t) - \cO(\Xi_t)
    = \int_0^1 \frac{\mathrm{d}}{\mathrm{d}\tau}\cO(G_{t,\tau})\mathrm{d}\tau
    = \int_0^1 D\cO(G_{t,\tau})[\lambda uv^\top]\mathrm{d}\tau,
  \]
  where $D\cO(\cdot)[H]$ is the G{\^a}teaux derivative of the polar factor in the direction of $H\in\Rbb^{m\times n}$.
  Note that the polar factor is locally Lipschitz~\citep{higham2008functions}, that is, $\|D\cO(X)[H]\|_F \le \|H\|_F / \sigma_{\min}(X)$.
  Using this, we have
  \[
    \|\cO(G_t) - \cO(\Xi_t)\|_F
    \le \int_0^1\left\|D\cO(G_{t,\tau})[\lambda uv^\top]\right\|_F\mathrm{d}\tau
    \le \frac{\lambda}{\sigma_{\min}(G_{t,\tau})}
    \le \frac{\lambda}{\sigma_{\min}(\Xi_t) - \lambda},
  \]
  where Weyl's inequality~\cref{eq:weyl} is used at the last inequality.
  By combining the conditioned event $\mathcal E$ and the low SNR assumption $\lambda<s_*/4$, we finally have $\|\cO(G_t) - \cO(\Xi_t)\|_F \le 4\lambda/s_*$.
  In contrast, conditioning on the ``bad'' event $\mathcal E^\complement$, we use a more elementary bound $\|\cO(G_t) - \cO(\Xi_t)\|_F \le \|\cO(G_t)\|_F + \|\cO(\Xi_t)\|_F = 2\sqrt n$.
  To properly control, let us evaluate $\Pr(\mathcal E^\complement)$.
  By \citet[Exercise 7.3.4]{vershynin2018high}, we have the two-sided bound on the (least) singular value of a Gaussian random matrix: for any $u\ge 0$, there exists an absolute constant $c>0$ such that
  \[
    \sigma_{\min}(\Xi_t) \ge \sigma_{\Xi}(\sqrt m - \sqrt n - u)
    \quad \text{with probability at least $1-2\exp(-cu^2)$.}
  \]
  By choosing $u=(\sqrt m - \sqrt n)/2$, we have the following probability bound:
  \[
    \Pr(\mathcal E^\complement)
    = \Pr\left(\sigma_{\min}(\Xi_t) < \frac{\sigma_\Xi(\sqrt m - \sqrt n)}{2}\right) \le 2\exp\left(-\frac{c(\sqrt m - \sqrt n)^2}{4}\right).
  \]
  After all, we can evaluate the expectation of $(\diamondsuit)$ by decomposing the event into $\mathcal E$ and $\mathcal E^\complement$ as follows:
  \[
    \begin{aligned}
      \E\|\cO(G_t) - \cO(\Xi_t)\|_F
      &\le \frac{4\lambda}{\sigma_\Xi(\sqrt m - \sqrt n)} \cdot \Pr(\mathcal E) + 2\sqrt n \cdot \Pr(\mathcal E^\complement) \\
      &\le \frac{4\lambda}{\sigma_\Xi(\sqrt m - \sqrt n)} + 4\sqrt n\exp\left(-\frac{c(\sqrt m - \sqrt n)^2}{4}\right).
    \end{aligned}
  \]

  Combining the above all of $(\clubsuit)$ and $(\diamondsuit)$, we have the desired inequality.
\end{proof}

\section{Additional Discussions}
\label{app:supporting}

\subsection{Sufficient Conditions for Signal Persistence}
\label{app:persistence}

\Cref{ass:persistence} bounds the momentum-filtered signed coordinate $\bar\lambda_k(t)$ from below by $c_{\mathrm{sig}}\lambda_k$ in absolute value.
In this subsection, we demonstrate that \Cref{ass:persistence} is indeed satisfied by a specific signed coordinate $\lambda_k(t)$.

Suppose $\lambda_k(t)=\bar\mu_k+\xi_k(t)$ with $\bar\mu_k\in\Rbb$ and $\{\xi_k(t)\}_t$ independent zero-mean sub-Gaussian with parameter $\sigma_\lambda^2$. The momentum-filtered coordinate $\bar\lambda_k(t)=(1-\beta)\sum_{\tau=0}^{t}\beta^\tau\lambda_k(t-\tau)$ has mean $(1-\beta^{t+1})\bar\mu_k$, which equals $\bar\mu_k$ up to an $O(\beta^t)$ term once $t\ge c_0 T$, and sub-Gaussian parameter at most $\sigma_\lambda^2(1-\beta)^2\sum_{\tau\ge0}\beta^{2\tau}=\sigma_\lambda^2/(2T-1)$ by the weight identity~\cref{eq:effective-sample-size}. Choosing $c_0$ so that $\beta^{c_0 T}\le 1/4$, the mean has magnitude at least $\tfrac{3}{4}|\bar\mu_k|$, so for every fixed $t\ge c_0 T$ a single sub-Gaussian tail bound on the fluctuation gives
\[
\Prob\!\left(|\bar\lambda_k(t)|<|\bar\mu_k|/2\right)
\le2\exp\!\left(-\frac{\bar\mu_k^2(2T-1)}{32\sigma_\lambda^2}\right).
\]
Taking a union bound over $k=1,\ldots,r$, we have
\[
\Prob\!\left(\exists k\le r:\,|\bar\lambda_k(t)|<|\bar\mu_k|/2\right)
\le 2r\exp\!\left(-\frac{\min_{k\le r}\bar\mu_k^2\,(2T-1)}{32\sigma_\lambda^2}\right).
\]
Therefore, under the scenario $\min_k|\bar\mu_k|\gg\sigma_\lambda$, \Cref{ass:persistence} holds simultaneously for all $k\le r$ with
\[
c_{\mathrm{sig}}=\min_{k\le r}\frac{|\bar\mu_k|}{2\lambda_k}
\]
on an event of overwhelming probability at every fixed $t\ge c_0 T$.


\subsection{Sub-Gaussian Strengthening of \Cref{prop:ema-concentration}}
\label{app:prop1}

The Chebyshev-based proof of \Cref{prop:ema-concentration} as stated (under bounded Frobenius-norm second moment) appears in \Cref{app:proof-thm1-cor-1}. The result below is a \emph{separate strengthening} of \Cref{prop:ema-concentration} under a stronger hypothesis: it assumes that each bilinear projection $x^\top\Xi_t y$ is centered sub-Gaussian, and concludes with an exponential tail rather than the Chebyshev $1/u^2$ tail of \Cref{prop:ema-concentration}. We show this strengthening here for completeness. It is not used in the main theorem chain, which goes through the Chebyshev term only. Note that under the variance bound $\E\|\Xi_t\|_F^2 \le \eta$ of \Cref{ass:decomp}(b), the entrywise sub-Gaussian parameter $v$ below corresponds to $\eta = mn\,v^2$.

\begin{proposition}[Momentum-weighted bilinear sub-Gaussian concentration]
\label{prop:ema-subgaussian}
Let $\{\Xi_t\}_{t\ge 0}$ be a sequence of independent $m\times n$ random matrices. Suppose that for every fixed unit vectors $x\in\Rbb^m$ and $y\in\Rbb^n$, the bilinear projection $x^\top\Xi_t y$ is sub-Gaussian with parameter $v>0$. Define
\[
S_t  \coloneqq  (1-\beta)\sum_{s\ge0}\beta^s \Xi_{t-s}
\qquad\text{and}\qquad T \coloneqq \frac{1}{1-\beta}.
\]
Then for every fixed unit vectors $x,y$ and every $\varepsilon>0$,
\[
\Prob\!\left(|x^\top S_t y| > \varepsilon\right)
\le 2\exp\!\left(-\frac{(2T-1)\varepsilon^2}{2v^2}\right).
\]
An $\varepsilon$-net and union bound (e.g. \cite[Theorem 4.4.5]{vershynin2018high}) upgrade this to operator-norm concentration with an additional $m+n$ factor in the exponent.
\end{proposition}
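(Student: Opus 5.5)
Fix unit vectors $x,y$ and write $x^\top S_t y$ as a weighted sum of independent scalars,
\[
x^\top S_t y=\sum_{s\ge0} h_s\, Z_{t-s},\qquad Z_{\tau}\coloneqq x^\top\Xi_{\tau} y,\quad h_s=(1-\beta)\beta^s .
\]
Each $Z_\tau$ is centered sub-Gaussian with parameter $v$. The $Z_\tau$ are independent because they are measurable functions of the independent matrices $\Xi_\tau$. As in \Cref{prop:ema-concentration}, terms with $t-s<0$ vanish, so the sum is finite for every fixed $t$ and no convergence issue arises.

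The core step is a moment generating function computation. For every $\theta\in\Rbb$, independence and the sub-Gaussian bound $\e{e^{\theta Z_\tau}}\le e^{\theta^2v^2/2}$ give
\[
\e{e^{\theta x^\top S_t y}}=\prod_{s}\e{e^{\theta h_s Z_{t-s}}}\le \exp\Bigl(\tfrac{\theta^2 v^2}{2}\textstyle\sum_{s\ge0}h_s^2\Bigr).
\]
The sum is bounded using the effective-sample-size identity \cref{eq:effective-sample-size}: $\sum_{s\ge0}h_s^2=(1-\beta)/(1+\beta)=1/(2T-1)$. The truncated sum is no larger, so $x^\top S_t y$ is sub-Gaussian with parameter $v^2/(2T-1)$. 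The Chernoff bound, optimized at $\theta=(2T-1)\varepsilon/v^2$, gives $\Prob(x^\top S_t y>\varepsilon)\le\exp(-(2T-1)\varepsilon^2/(2v^2))$. Applying the same argument to $-x^\top S_t y$ and taking a union bound yields the factor $2$ in the claimed two-sided tail.

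For the operator-norm remark, take $1/4$-nets $\mathcal N_m,\mathcal N_n$ of the unit spheres, of sizes at most $9^m$ and $9^n$. The standard estimate $\opnorm{S_t}\le 2\max_{x\in\mathcal N_m,y\in\mathcal N_n}x^\top S_t y$ reduces the operator norm to finitely many bilinear forms. A union bound over the $9^{m+n}$ net pairs of the fixed-vector tail at level $\varepsilon/2$ gives $\Prob(\opnorm{S_t}>\varepsilon)\le 2\cdot 9^{m+n}\exp(-(2T-1)\varepsilon^2/(8v^2))$. This is the stated extra $m+n$ factor, following \citet[Theorem 4.4.5]{vershynin2018high}.

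The only point needing care is the sub-Gaussian convention. I will use the MGF definition with a variance proxy, which makes the constant $2v^2$ in the exponent exact. If the intended definition is tail-based, it is equivalent only up to absolute constants, and the stated constant would need adjusting. Centering of $Z_\tau$ is needed for the MGF bound and is included in the hypothesis as described in the surrounding text.
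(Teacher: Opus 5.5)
Your proof is correct and follows the paper's argument essentially step for step: factor the MGF of the weighted sum of independent centered sub-Gaussian scalars $x^\top\Xi_{t-s}y$, bound $\sum_s h_s^2$ by $1/(2T-1)$ via \cref{eq:effective-sample-size}, then apply Chernoff and the symmetric bound to get the two-sided tail. Your explicit $1/4$-net computation, giving $2\cdot 9^{m+n}\exp\bigl(-(2T-1)\varepsilon^2/(8v^2)\bigr)$, correctly spells out the operator-norm upgrade that the paper only sketches.
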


\begin{proof}
Let $h_s  \coloneqq  (1-\beta)\beta^s$, so that $S_t = \sum_{s\ge 0} h_s \Xi_{t-s}$. For fixed unit vectors $x\in\Rbb^m$ and $y\in\Rbb^n$, set
\[
z_s  \coloneqq  x^\top \Xi_{t-s} y
\qquad\text{and}\qquad
W_L  \coloneqq  \sum_{s=0}^{L} h_s z_s
.
\]
Since the $z_s$ are independent and each is sub-Gaussian with parameter $v$, the moment generating function (MGF) of $W_L$ factorizes over $s$. For all $\tau\in\Rbb$,
\[
\E[e^{\tau W_L}]
= \prod_{s=0}^{L} \E[e^{\tau h_s z_s}]
\le \prod_{s=0}^{L} \exp\!\left(\frac{\tau^2 h_s^2 v^2}{2}\right)
= \exp\!\left(\frac{\tau^2 v^2}{2}\sum_{s=0}^{L} h_s^2\right)
\overset{\text{\cref{eq:effective-sample-size}}}{\le} \exp\!\left(\frac{\tau^2 v^2}{2(2T-1)}\right).
\]
A standard one-sided Chernoff bound followed by symmetrization (apply the same MGF bound to $-W_L$) gives the two-sided tail
\[
\Prob\!\left(|x^\top S_t y|>\varepsilon\right)
\le 2\exp\!\left(-\frac{(2T-1)\varepsilon^2}{2v^2}\right).
\]
An $\varepsilon$-net and union bound then upgrade this to operator-norm concentration.

\end{proof}


\section{Experimental Setups}
\label{app:exp-setup}

\subsection{Shared Computational Settings}
\label{app:shared-env}

The experiments in this paper run on two hardware environments. The end-to-end NanoGPT and LLaMA 350M~\citep{touvron2023llama} training comparison reported in \cref{fig:end2end} (Pre-polar, Post-polar, and Polar-only) runs on NVIDIA L40S GPUs. All remaining experiments, the synthetic simulations, the CIFAR-10 stationary and trajectory probes, and the NanoGPT stationary and trajectory probes, run on NVIDIA A100 40GB GPUs on the Wisteria Aquarius \footnote{\url{https://www.cc.u-tokyo.ac.jp/supercomputer/wisteria/service/}} system at the University of Tokyo.

The L40S experiments split into two end-to-end runs: a NanoGPT three-pipeline comparison on a single-node, 4-GPU PyTorch DDP setup with \texttt{bfloat16} autocast (\Cref{app:nanogpt-end2end}), and a LLaMA 350M three-pipeline comparison on a single-node, 8-GPU PyTorch DDP setup with \texttt{bfloat16} training (\Cref{app:llama350m-end2end}). For the NanoGPT experiment, we use a GPT-2-style 12-layer / width-768 / 6-head modded-NanoGPT\footnote{\url{https://github.com/KellerJordan/modded-nanogpt}} speedrun model trained on FineWeb-10B~\citep{penedo2024fineweb}. For the LLaMA experiment, we used a reduced-scale language model that preserves the corresponding design choices: a LLaMA-style 24-layer / width-1024 / 16-head causal model with a 32k LLaMA-family tokenizer trained on C4~\citep{raffel2020exploring}. The shared L40S settings for the two studies are recorded in \Cref{app:nanogpt-end2end,app:llama350m-end2end}, and the best hyperparameter selections for each Muon pipeline are reported in the same appendices.

\begin{table}[htbp]
  \centering\small
  \caption{Experimental settings for the NanoGPT three-pipeline end-to-end training comparison on NVIDIA L40S GPUs.}
  \label{tab:nanogpt-l40s}
  \begin{tabular}{@{}ll@{}}
    \toprule
    Setting & Value \\
    \midrule
    Compute setup & 4 NVIDIA L40S GPUs, single-node PyTorch DDP \\
    Training precision & \texttt{bfloat16} autocast \\
    Model family & GPT-2-style modded-NanoGPT speedrun causal language model \\
    Model scale & 12 layers, width 768, 6 heads, MLP width 3072 \\
    Core design choices & RMSNorm, RoPE, GPT-2 BPE (vocab 50304) \\
    Data pipeline & FineWeb-10B pre-tokenized binary shards \\
    Sequence length & 1024 \\
    Batch shape & 64 sequences/GPU, gradient accumulation 2, global batch 512 \\
    Training steps & 5100 \\
    Learning-rate schedule & plateau--warmdown, 1450 warmdown steps \\
    Regularization / clipping & weight decay 0, gradient clipping disabled \\
    \bottomrule
  \end{tabular}
\end{table}

\begin{table}[htbp]
  \centering\small
  \caption{Experimental settings for the LLaMA 350M three-pipeline end-to-end training comparison on NVIDIA L40S GPUs.}
  \label{tab:llama350m-l40s}
  \begin{tabular}{@{}ll@{}}
    \toprule
    Setting & Value \\
    \midrule
    Compute setup & 8 NVIDIA L40S GPUs, single-node PyTorch DDP \\
    Training precision & \texttt{bfloat16} \\
    Model family & Reduced-scale LLaMA-style causal language model \\
    Model scale & 24 layers, width 1024, 16 heads, MLP width 2736 \\
    Core design choices & RMSNorm, 32k LLaMA-family tokenizer \\
    Data pipeline & C4 dataset with document-level truncation and right-padding \\
    Sequence length & 1024 \\
    Batch shape & 128 sequences/GPU, gradient accumulation 1, global batch 1024 \\
    Training steps & 3000 \\
    Learning-rate schedule & cosine decay, 1000 warmup steps, minimum LR ratio 0.1 \\
    Regularization / clipping & weight decay 0, gradient clipping disabled \\
    \bottomrule
  \end{tabular}
\end{table}

The Wisteria runs share a common compute environment. The synthetic simulation uses a single A100 GPU for its spiked-model simulations. The CIFAR-10 stationary and trajectory probes use a single A100. The NanoGPT probe jobs use four A100s under PyTorch DDP with \texttt{bfloat16} mixed precision. Spectrum summaries, polar factors, signal and subspace alignments are computed with exact \texttt{float32} SVDs inside the \textbf{Analysis procedure}, while training runs in \texttt{bfloat16}. \Cref{tab:wisteria} records the shared Wisteria settings.

\begin{table}[htbp]
  \centering\small
  \caption{Shared experimental settings for the Wisteria Aquarius runs (the synthetic simulations, the CIFAR-10 probes, and the NanoGPT probes).}
  \label{tab:wisteria}
  \begin{tabular}{@{}ll@{}}
    \toprule
    Setting & Value \\
    \midrule
    Cluster & Wisteria Aquarius \\
    Compute GPUs & NVIDIA A100 40GB \\
    Compiler / CUDA / Python & \texttt{gcc/8.3.1}, \texttt{cuda/11.8}, \texttt{python/3.11.7} \\
    Training precision & \texttt{bfloat16} for NanoGPT, \texttt{float32} for CIFAR-10 and synthetic \\
    Probe analysis precision & exact \texttt{float32} SVD for all spectrum and polar computations \\
    Random seeds & 1337, 1338, 1339 \\
    \bottomrule
  \end{tabular}
\end{table}

\subsection{Probe Protocols}
\label{app:probe-protocols}

The CIFAR-10 and NanoGPT experiments use the same two probe protocols on a designated target weight matrix $W\in\Rbb^{m\times n}$. The two protocols differ only in how the per-step mini-batch gradients are collected. The downstream \textbf{Analysis procedure} is identical.

Let $G_t\in\Rbb^{m\times n}$ denote the mini-batch gradient of $W$ at step $t$. The probe records a gradient buffer $\{G_t\}_{t=1}^{K}$ of $K$ size. Gradients are collected immediately after backpropagation and before any optimizer state update is applied. We used two distinct momentum coefficients throughout: $\beta_{\mathrm{train}}$ is the training-time Muon momentum coefficient (used by the optimizer that generates $\{G_t\}$), and $\beta$ is the probe-side momentum coefficient that the probe analysis below applies to the gradient buffer. The two are decoupled by construction: the \textbf{Analysis procedure} operates solely on the collected raw gradient buffer $\{G_t\}$ and never accesses the training optimizer's internal state; at every probe step the three pipelines are recomputed from the same gradient buffer $\{G_t\}$. Pre-polar, Post-polar, and Polar-only comparisons are therefore fair comparisons on a common gradient stream, and training-time optimizer settings affect the results only through the trajectory that generates $\{G_t\}$. Next, we introduce the stationary probe and the trajectory probe.

\paragraph{Stationary probe.}
\begin{enumerate}[leftmargin=1.5em,itemsep=2pt,topsep=2pt]
\item Load the model from a saved checkpoint and hold every weight fixed, including the target weight $W$. No weight is updated during gradient collection.
\item For $t = 1, \dots, K$, draw one mini-batch from the dataloader in its natural sequential order (the dataloader's default ordering without shuffling), run one step of forward and backward propagation, and record the gradient $G_t$ of the target weight $W$. We refer to this as the \emph{sequential} collection order, which is the default setting used throughout the paper. For the NanoGPT stationary probe at the step-3000 checkpoint, we additionally reran the same protocol under a \emph{shuffled} collection order as a robustness check, where each mini-batch starts at a random position in the current data shard. The shuffled results are reported only in \Cref{app:thm1-supplement}.
\item Save the gradient sequence $\{G_t\}_{t=1}^{K}$ to disk in the order the mini-batches were drawn. The collection order is preserved because the downstream momentum buffers are order-dependent.
\end{enumerate}
Since the model weights do not change during gradient collection, every $G_t$ is drawn from the same gradient distribution. This protocol therefore synthetically simulates the stationary special case of the BVMZOS perturbation model of \Cref{ass:decomp}. The three pipelines are applied to the gradient buffer in its collection order, so Pre-polar and Post-polar momentum buffers see the gradient stream $\{G_t\}_{t=1}^{K}$ exactly as it was recorded. For the stationary probe at the step-3000 checkpoint we report both the sequential and shuffled collection orders. All other stationary checkpoints use the sequential order only.

\paragraph{Trajectory probe.}
During end-to-end training:
\begin{enumerate}[leftmargin=1.5em,itemsep=2pt,topsep=2pt]
\item Maintain a fixed-capacity First-In, First-Out (FIFO) buffer of the most recent $K$ target weight's gradients alongside the regular optimizer step. Each training step appends $G_t$ to the buffer and pops the oldest entry once the buffer is full.
\item At every $I$ training step (we used $I=100$ in all CIFAR-10 and NanoGPT trajectory runs), take a checkpoint of the current buffer $\{G_{t-K+1}, \dots, G_t\}$, run the \textbf{Analysis procedure} below, and save the resulting summary.
\item Continue training without modification. The probe does not alter weight updates, optimizer state, random seeds, or data ordering.
\end{enumerate}
The trajectory buffer therefore represents a sliding buffer over the live training trajectory, and the corresponding analysis quantities inherit any non-stationarity in the gradient stream.

\paragraph{Buffer-size selection.}
The buffer size $K$ obeys different constraints in the two protocols. With zero initialization, the momentum buffer (\cref{eq:ema}) truncated to $K$ steps carries kernel mass $1-\beta^K$ (see \Cref{app:initialization}), so $K \ge c_0 T$ keeps the momentum-filtered buffers (e.g., $M_K^{(\beta)}$ in \cref{eq:Mk-prepolar}) close to their asymptotic values and allows the in-buffer mean gradient $\bar G$ to serve as the reference of the coherent gradient signal.

In the stationary probe, the weights are held fixed during gradient collection, so the gradient distribution of $G_t$ does not drift across the $K$ steps. The only constraint on $K$ is the lower bound $K \ge c_0 T$ at the largest $\beta$ in the per-task grid (\Cref{app:exp-tasks}). A larger $K$ is therefore preferable.

In the trajectory probe, the weights are updated by the optimizer across the $K$ steps, so the gradient distribution of $G_t$ drifts with $t$. The lower bound $K \ge c_0 T$ still applies, and $K$ must also remain small enough to limit this drift across the steps contributing to the in-buffer mean gradient $\bar G$ (defined in the \textbf{Analysis procedure} below). The choice of $K$ therefore balances these two requirements.

\paragraph{Analysis procedure.}
Given a buffer $\{G_t\}_{t=1}^{K}$, the analysis performs the following steps:
\begin{enumerate}[leftmargin=1.5em,itemsep=2pt,topsep=2pt]
\item \emph{Signal reference.} Compute the mean gradient $\bar G \coloneqq K^{-1}\sum_{t=1}^{K} G_t$ and its exact SVD $\bar G = U\Sigma V^{\top}$. The top-$r$ left and right singular vectors $(U_r, V_r)$ define the signal subspace used as the alignment target on the CIFAR-10 and NanoGPT experiments. On the synthetic simulation the spiked model of \Cref{app:synthetic-settings} plants a known rank-$r_\star$ signal $G_t^{\mathrm{sig}}$ with singular bases $U_{\mathrm{true}}, V_{\mathrm{true}}$, so the alignment target is the planted top-$r$ subspace $(U_{\mathrm{true}}, V_{\mathrm{true}})$ directly rather than $(U_r, V_r)$ from $\bar G$ (\Cref{app:measurements}).
\item \emph{Three pipelines.} For each $\beta$ in the per-task grid (\Cref{app:exp-tasks}), starting from $M_0 = \widetilde{M}_0 = 0$, Pre-polar and Post-polar pipelines maintain two separate momentum buffers:
\begin{align}
\text{Pre-polar:}\quad & \cO\!\bigl(M_K^{(\beta)}\bigr), \quad\text{where}\quad M_K^{(\beta)} \;\coloneqq\; (1 - \beta)\sum_{s=0}^{K-1}\beta^{s}\,G_{K-s}, \label{eq:Mk-prepolar}\\
\text{Post-polar:}\quad & \widetilde M_K^{(\beta)} \;\coloneqq\; (1 - \beta)\sum_{s=0}^{K-1}\beta^{s}\,\cO(G_{K-s}), \\
\text{Polar-only:}\quad & \cO(G_K),
\end{align}
where $\cO(\cdot)$ is the polar factor introduced in \cref{eq:forward}. Pre-polar buffer $M_K^{(\beta)}$ averages the raw gradients and is orthogonalized once at the end, whereas Post-polar buffer $\widetilde M_K^{(\beta)}$ orthogonalizes each per-step gradient first and then averages the resulting polar factors. Polar-only baseline skips the momentum entirely and orthogonalizes only the final raw gradient $G_K$.
\item \emph{Spectral summaries.} Record (i) the singular-value sequences of $\bar G$ and of Pre-polar momentum buffer $M_K^{(\beta)}$, (ii) the per-step filtering ratio $\sigma_k(M_K^{(\beta)})/\sigma_k(G_K)$ at the final collection index, and (iii) the noise-suppression ratio $R(T)$ that compares the operator norm of the raw-gradient residual with that of the momentum residual at momentum window size $T = 1/(1-\beta)$ (introduced in \Cref{app:measurements}). The two ratios serve different purposes and use different denominators. The per-step filtering ratio measures the per-singular-value attenuation between $M_K^{(\beta)}$ and $G_K$. $R(T)$ uses a signal reference in both numerator and denominator and measures the operator-norm attenuation of the momentum-filtered perturbation. The signal reference is $\bar G$ on the CIFAR-10 and NanoGPT experiments and the planted signal $G_t^{\mathrm{sig}}$ on the synthetic simulation. The synthetic case additionally applies the bias correction $(1-\beta^K)\,G_t^{\mathrm{sig}}$ in the denominator to account for the zero-initialized momentum buffer (\Cref{app:measurements}).
\item \emph{Signal alignment and subspace alignment error metrics.} Signal alignment is reported with the rank-$r$ and full-rank signal alignment metrics $\mathrm{Align}_r$ and $\mathrm{Align}_{\mathrm{full}}$ of \Cref{app:measurements}. Subspace alignment error panels report the $\sin\Theta$ principal-angle distance at fixed ranks $r\in$ \{1,5,10\}.
\end{enumerate}
All SVDs inside the \textbf{Analysis procedure} are exact \texttt{float32} decompositions. Newton--Schulz iteration is used only inside the training-time Muon optimizer and never inside the probe.

\subsection{Measurements}
\label{app:measurements}

Every probe run produces the same four measurements from the gradient buffer $\{G_t\}_{t=1}^{K}$ and the three pipelines of \Cref{app:probe-protocols}. Each measurement is tied to a specific theoretical claim:
\begin{enumerate}[leftmargin=1.5em,itemsep=2pt,topsep=2pt]
\item \emph{Per-step filtering ratio} --- the ratio of the $k$-th singular value of Pre-polar momentum buffer $M_K^{(\beta)}$ (\cref{eq:Mk-prepolar}) to that of the latest collected raw gradient $G_K$,
\begin{equation*}
  \mathrm{Filt}_k(\beta) \;\coloneqq\; \frac{\sigma_k\!\bigl(M_K^{(\beta)}\bigr)}{\sigma_k(G_K)}.
\end{equation*}
Both spectra are computed from the same gradient buffer at the final collection step: $G_K$ is the last raw mini-batch gradient (equivalently, the momentum buffer at $\beta=0$), and $M_K^{(\beta)}$ is the momentum over the same window. By construction $\mathrm{Filt}_k(0)=1$. \Cref{thm:spectral-gap} predicts that when $\beta$ is close enough to $1$, the tail filtering ratios ($k > r$) are suppressed more than the head ($k \le r$), opening a spectral gap between head and tail that widens with $\beta$. The per-step filtering ratio supports \Cref{thm:spectral-gap} qualitatively.
\item \emph{Noise-suppression ratio} --- residual operator-norm ratio $R(T)$ of raw gradient vs.\ Pre-polar momentum buffer with probe-side momentum coefficient $\beta$ (associated with the effective sample size $2T-1$). Explicitly,
\begin{equation*}
  R(T) \;\coloneqq\; \frac{\bigl\|G_K - \bar G\bigr\|_{\op}}{\bigl\|M_K^{(\beta)} - \bar G\bigr\|_{\op}},
  \qquad
  \bar G \;\coloneqq\; \frac{1}{K}\sum_{t=1}^{K} G_t,
\end{equation*}
where $\bar G$ is the in-buffer approximation of $G_t^{\mathrm{sig}}$. Subtracting $\bar G$ from both numerator and denominator approximately removes the coherent signal component $G_t^{\mathrm{sig}}$ from $G_t = G_t^{\mathrm{sig}} + \Xi_t$, so the numerator $\|G_K - \bar G\|_{\op} \approx \|\Xi_K\|_{\op}$ approximates the single-step perturbation operator norm, while the denominator $\|M_K^{(\beta)} - \bar G\|_{\op} \approx \|S_K\|_{\op}$ approximates the operator norm of the momentum-filtered perturbation $S_K$ from \Cref{prop:ema-concentration}. Their ratio $R(T)$ therefore measures how much the momentum reduces the perturbation operator norm relative to a single step (see \Cref{rem:RT-floor} for the derivation). The figure conventions plot $R(T)$ against $T=1/(1-\beta)$, with the theoretical prediction $(2T-1)^{1/4}$ (derived in \Cref{rem:RT-floor}) shown as the dashed guide. The choice of $\bar G$ as the signal reference is what distinguishes $R(T)$ from the per-step filtering ratio: $\mathrm{Filt}_k(\beta)$ uses the latest single-batch gradient $G_K$ as the denominator and reports per-index attenuation, while $R(T)$ uses the $\bar G$ as the reference and reports the perturbation operator-norm reduction relative to a single step, compared against the $(2T-1)^{1/4}$ guide.\footnote{On the synthetic simulation the planted signal $G_t^{\mathrm{sig}}$ is known exactly, so we replace $\bar G$ with $G_t^{\mathrm{sig}}$ and apply a zero-init bias correction to the momentum buffer: the synthetic $R(T)$ is $\|G_K - G_t^{\mathrm{sig}}\|_{\op} / \|M_K^{(\beta)} - (1-\beta^K)\,G_t^{\mathrm{sig}}\|_{\op}$. The bias factor $(1 - \beta^K)$ accounts for the momentum buffer starting from zero, which makes $\mathbb{E}[M_K^{(\beta)}] = (1-\beta^K)\,G_t^{\mathrm{sig}}$ under $G_t = G_t^{\mathrm{sig}} + \Xi_t$. Without this correction the synthetic denominator would be inflated by an $O(\beta^K)$ transient at large $\beta$.} The noise-suppression ratio is therefore an empirical check that $R(T)$ grows with the momentum window size $T$ at a rate no slower than $(2T-1)^{1/4}$ with high probability, as predicted by \Cref{prop:ema-concentration} and \Cref{thm:spectral-gap}.


\item \emph{Subspace alignment error} --- the $\sin\theta$ subspace distance from the top-$r$ singular subspaces of the reference to those of Pre-polar momentum buffer. Explicitly,
\[
  \sin\theta_r(A;\, B) \;\coloneqq\; \bigl\| \sin\Theta\!\bigl(U_r(A),\, U_r(B)\bigr) \bigr\|_2,
\]
with the right-subspace version $\sin\theta_r(A^\top;\, B^\top) = \|\sin\Theta(V_r(A),\, V_r(B))\|_2$ reported separately. In this paper, we also define $\sin\Theta_U \coloneqq \sin\theta_r(M_K^{(\beta)};\,\bar G)$ and $\sin\Theta_V \coloneqq \sin\theta_r(M_K^{(\beta)\top};\,\bar G^{\top})$ for convenience. The gradient signal reference is $\bar G$ on the CIFAR-10 and NanoGPT experiments at rank $r\in$ \{1,5,10\}.\footnote{On the synthetic simulation the reference is the planted top-$r$ singular subspace $(U_{\mathrm{true}},V_{\mathrm{true}})$ at $r\in\{1,2,3\}$. Restricting to $r \le r_\star$ is forced by the model: the planted bases have only $r_\star$ orthonormal columns, so a query at $r > r_\star$ is either ill-defined or pads with arbitrary directions from the noise null space.} The subspace alignment error supports \Cref{cor:direction}, which predicts that $\sin\Theta_U$ and $\sin\Theta_V$ decrease as the momentum window size $T$ grows at a rate no slower than $(2T-1)^{1/4}$ with high probability.
\item \emph{Signal alignment} --- the signal alignment comparison applied to Pre-polar $=\cO(M_K^{(\beta)})$, Post-polar $=\widetilde M_K^{(\beta)}$, and Polar-only $=\cO(G_K)$, reported through the following two metrics:

\emph{Rank-$r$ signal alignment.}
\[
  \mathrm{Align}_r(A;\, B)
  \;\coloneqq\;
  \frac{\bigl\|U_r(B)^\top A\, V_r(B)\bigr\|_F}{\sqrt{r}}
  \;\in\;[0,1].
\]
Larger values indicate stronger signal alignment. \Cref{thm:recovery} predicts that Pre-polar achieves higher $\mathrm{Align}_r$ than Post-polar and Polar-only for sufficiently large momentum window size $T$. On the CIFAR-10 and NanoGPT experiments the reference is $B = \bar G$ and the rank ladder is $r\in$ \{1,5,10\}.\footnote{On the synthetic simulation $B = U_{\mathrm{true}} V_{\mathrm{true}}^\top$ and the rank ladder is $r\in\{1,2,3\}$.}

\emph{Full-rank signal alignment.}
\[
  \mathrm{Align}_{\mathrm{full}}(A;\, B)
  \;\coloneqq\;
  \frac{\langle A,\cO(B)\rangle_F}
       {\min(m,n)}
  \;\in\;[-1,1].
\]
We report $\mathrm{Align}_{\mathrm{full}}$ as a full-rank cross-check on the CIFAR-10 and NanoGPT experiments only.\footnote{The synthetic simulation drops it because the relevant signal reference is the planted rank-$r_\star$ partial polar factor $U_{\mathrm{true}} V_{\mathrm{true}}^\top$, which has Frobenius norm $\sqrt{r_\star}$ and caps the corresponding normalized full-rank signal alignment at $r_\star/\min(m,n) = 0.03$ on our 100 $\times$ 100 synthetic simulation.} Larger values indicate stronger full-rank alignment. \Cref{thm:recovery} predicts that Pre-polar achieves higher $\mathrm{Align}_{\mathrm{full}}$ than Post-polar and Polar-only for sufficiently large momentum window size $T$.
\end{enumerate}
The probe-side momentum coefficient grids are measurement- and task-specific and are listed with the corresponding task definitions in \Cref{app:exp-tasks}. The default probe-side momentum coefficient $\beta=0.95$ is shared across all experiments.

\begin{remark}[Predicted $(2T-1)^{1/4}$ guide on $R(T)$]
\label{rem:RT-floor}
The dashed $(2T-1)^{1/4}$ guide in $R(T)$ panels comes from inverting the operator-norm corollary of \Cref{prop:ema-concentration}.
\\
\emph{Bridge.} Substituting $G_t = G_t^{\mathrm{sig}} + \Xi_t$ and writing $\bar G^{\mathrm{sig}} \coloneqq K^{-1}\sum_t G_t^{\mathrm{sig}}$, $\bar\Xi \coloneqq K^{-1}\sum_t \Xi_t$,
\[
G_K - \bar G \;=\; (G_K^{\mathrm{sig}} - \bar G^{\mathrm{sig}}) + (\Xi_K - \bar\Xi),
\quad
M_K^{(\beta)} - \bar G \;=\; (M_K^{\mathrm{sig}} - \bar G^{\mathrm{sig}}) + (S_K - \bar\Xi).
\]
The BVMZOS structure gives $\E\fnorm{\bar\Xi}^2 \le \eta/K$, while \Cref{prop:ema-concentration} gives $\E\fnorm{S_K}^2 \le \eta/(2T-1)$. Since $\snorm{\cdot} \le \fnorm{\cdot}$, the typical operator norms of $\bar\Xi$ and $S_K$ scale as $\sqrt{\eta/K}$ and $\sqrt{\eta/(2T-1)}$, so $\snorm{\bar\Xi}$ is negligible against $\snorm{S_K}$ once $K \gg 2T-1$. The signal residuals $G_K^{\mathrm{sig}} - \bar G^{\mathrm{sig}}$ and $M_K^{\mathrm{sig}} - \bar G^{\mathrm{sig}}$ vanish exactly when $\lambda_k(t)$ is time-invariant, and are dominated by their perturbation counterparts in our experiments otherwise. Hence
\[
\snorm{M_K^{(\beta)} - \bar G} \approx \snorm{S_K},
\qquad
\snorm{G_K - \bar G} \approx \snorm{\Xi_K}.
\]
\\
\emph{Lower bound.} Setting $u = (2T-1)^{1/4}$ in \Cref{prop:ema-concentration} gives
\[
  \snorm{S_K} \;\le\; \sqrt{\eta}\,(2T-1)^{-1/4}
  \qquad\text{with probability at least}\quad 1 - (2T-1)^{-1/2}.
\]
The numerator $\snorm{\Xi_K}$ is $T$-independent under \Cref{ass:decomp}(b). Dividing,
\[
  R(T) \;\gtrsim\; (2T-1)^{1/4}
\]
on the same event, up to a noise-distribution-dependent constant.

\emph{Sharper rate under sub-Gaussian projections.} Under the bilinear sub-Gaussian hypothesis of \Cref{prop:ema-subgaussian} (\Cref{app:prop1}), the same bridge yields the sharper $R(T) \gtrsim \sqrt{2T-1}$, attained by the synthetic Gaussian curves of \cref{fig:app-synth-edge}. The $(2T-1)^{1/4}$ floor uses only the second-moment bound of \Cref{ass:decomp}(b) and therefore remains valid under heavy-tailed gradient noise with finite variance.

\end{remark}

\subsection{Tasks}
\label{app:exp-tasks}

We describe the five experimental tasks below, organized into probe experiments (\Cref{app:synthetic-settings,app:cifar10-probe,app:nanogpt-probe}), where the \textbf{Analysis procedure} is applied to gradients collected from a fixed or live model, and end-to-end training runs (\Cref{app:nanogpt-end2end,app:llama350m-end2end}), where Pre-polar, Post-polar, and Polar-only are compared as full training optimizers.

\subsubsection{Synthetic Simulation}
\label{app:synthetic-settings}

\begin{table}[htbp]
  \centering\small
  \caption{Synthetic task summary.}
  \begin{tabular}{@{}ll@{}}
    \toprule
    Setting & Value \\
    \midrule
    Matrix size & 100 $\times$ 100 \\
    Signal model & low-rank spike + BVMZOS perturbation \\
    \bottomrule
  \end{tabular}
\end{table}

\textbf{Signal Generator.}
The synthetic simulation uses a rank-3 spiked model: a time-invariant low-rank signal plus a perturbation satisfying \Cref{ass:decomp}(b).

\textbf{Perturbation generator.} All synthetic simulations use the perturbation $\Xi_t = b\,\epsilon\,B_t + Z_t$, where $\epsilon\in\{-1,+1\}$ is a single Rademacher sign drawn once per trajectory, $\{B_t\}$ are deterministic dense matrices with $\fnorm{B_t}=1$ and $\inner{B_s}{B_t}_F=0$ for $s\neq t$, $Z_t$ are i.i.d.\ isotropic noise with entrywise variance $\sigma_n^2/2$ (Gaussian, or rescaled finite-variance Student-$t$ with four degrees of freedom for the heavy-tailed panel of \cref{fig:app-synth-filtering}), and $b = \sigma_n\sqrt{mn/2}$. This satisfies \Cref{ass:decomp}(b): mean zero $\E[\Xi_t]=0$, bounded variance $\E\fnorm{\Xi_t}^2=\sigma_n^2 m n$, and Frobenius orthogonality $\E\inner{\Xi_s}{\Xi_t}_F = b^2\inner{B_s}{B_t}_F = 0$ for $s\neq t$.

\textbf{Measurement scope.}
We report the same four measurements as the CIFAR-10 and NanoGPT probes (\Cref{app:measurements}). The synthetic stationary probe uses $K=1000$ collected gradients per trial over 10 random-seed trials, and the signal-strength sweep (\Cref{app:exp-tasks-signal-strength}) uses $K=500$.

\textbf{Probe-side $\beta$ grids.} For each of the following probes, we used different $\beta$ grids. We use a six-point grid \{0.3,0.6,0.9,0.95,0.97,0.99\} to report the filtered singular value spectrum (\Cref{fig:app-synth-filtering}). We use a fourteen-point grid \{0.05,0.1,0.2,0.3,0.4,0.5,0.6,0.7,0.8,0.9,0.95,0.97,0.99,0.995\} for the noise-suppression ratio (\Cref{fig:app-synth-edge}) and the subspace alignment error (\Cref{fig:app-synth-direction}). The Pre-polar / Post-polar / Polar-only $\beta$-sweep (\Cref{fig:frozen-beta-sweep}) uses the same fourteen-point grid. We use the eight-point grid \{0.5,0.7,0.8,0.9,0.93,0.95,0.97,0.99\} for the all-layer 3$\times$4 stationary grid (\Cref{fig:app-nanogpt-ordering-grid}) and the per-layer / per-checkpoint numerical summaries (\Cref{tab:frozen-numerical,tab:frozen-breakdown,tab:frozen-checkpoint}). For simulations that do not sweep $\beta$, we use $\beta=0.95$ as the default probe-side momentum coefficient.

\subsubsection{CIFAR-10 Probe}
\label{app:cifar10-probe}

The target weight matrix to be probed is the convolution layer \texttt{layer2.0.conv1} of a ResNet-18, whose weight tensor is matricized to shape $128 \times 576$. The stationary and trajectory probes follow the general protocol of \Cref{app:probe-protocols}.

\textbf{Stationary probe.}
The CIFAR-10 stationary probe collects mini-batch gradients after warmup on \texttt{layer2.0.conv1} from a ResNet-18 trained on CIFAR-10. In the language of \Cref{ass:decomp}, the signal proxy is the gradient mean over the collection buffer after warmup and the perturbation proxy is the residual around that mean. $K$ is chosen large enough to fully warm the momentum buffer at the largest probe-side $\beta=0.99$ ($T=100$). At $K=2000$, the kernel mass $1-\beta^K$ is within $2\times 10^{-9}$ of one per \Cref{app:initialization}.

\textbf{Trajectory probe.}
The CIFAR-10 trajectory probe maintains a sliding buffer of recent target-layer gradients during continued training after the warmup phase. The analysis interval, stored ordering metric, and direction-cut ranks follow the cross-experiment conventions of \Cref{app:probe-protocols,app:measurements}. We use $K=100$ (The sensitivity analysis with respect to $K$ is provided in \Cref{app:k-sweep}).

\textbf{Probe-side $\beta$ grids.} The four CIFAR-10 measurements use the following grids. We use a six-point grid \{0.3,0.6,0.9,0.95,0.97,0.99\} for the stationary per-step filtering ratio (\Cref{fig:app-cifar-spectrum-ratio}). We use a fourteen-point grid \{0.1,0.2,0.3,0.4,0.5,0.6,0.7,0.8,0.85,0.9,0.93,0.95,0.97,0.99\} for the stationary noise-suppression ratio (\Cref{fig:app-cifar-edge}) and the stationary subspace alignment error (\Cref{fig:app-cifar-direction}). We use an eight-point grid \{0.5,0.7,0.8,0.9,0.93,0.95,0.97,0.99\} for the stationary signal alignment ordering (\Cref{fig:app-cifar-ordering}). On the trajectory probe at $K=100$, we use the fourteen-point grid \{0.1,0.2,0.3,0.4,0.5,0.6,0.7,0.8,0.85,0.9,0.93,0.95,0.97,0.99\} for the noise-suppression ratio and the subspace alignment error (\Cref{fig:app-cifar-online-direction}), and the eight-point grid \{0.5,0.7,0.8,0.9,0.93,0.95,0.97,0.99\} for the ordering panel (\Cref{fig:app-cifar-online-ordering}). For experiments that do not sweep $\beta$, we use $\beta=0.95$ as the default probe-side momentum coefficient.

\subsubsection{NanoGPT Probe}
\label{app:nanogpt-probe}

The NanoGPT training code is adapted from modded-NanoGPT. All NanoGPT probe runs, the stationary probe across three representative attention layers, and the trajectory probe across every attention and MLP output projection, share a single training configuration, described below, except for the L40S end-to-end run configuration shown in \Cref{app:nanogpt-end2end} separately.

\textbf{Training.}
The shared NanoGPT training configuration is a 12-layer, 768-width GPT-2 style model with 6 heads, rotary position embeddings (RoPE)~\citep{su2024roformer}, RMSNorm, scaled the squared-ReLU activations, FineWeb-10B training data, global batch size 512 (device batch size 64, sequence length 1024), zero warmup, 1450 warmdown iterations, and a 5100-step budget. The Muon optimizer runs with momentum coefficient $\beta_{\mathrm{train}} = 0.8$, Nesterov momentum enabled, learning rate $0.05$, and five Newton--Schulz steps per update. 

The trajectory probe is applied to all 24 output projections of the model: the 12 attention output projections \texttt{h.\{0,\ldots,11\}.attn.c\_proj} (each 768$\times$768) and the 12 MLP output projections \texttt{h.\{0,\ldots,11\}.mlp.c\_proj} (each 768$\times$3072), all with buffer size $K=\Kheadline$ and analysis every 100 training steps. The stationary probe additionally covers three representative depths \texttt{h.0.attn.c\_proj}, \texttt{h.5.attn.c\_proj}, and \texttt{h.11.attn.c\_proj} at the saved checkpoints $\{$1000, 2000, 3000, 4000, 5000$\}$.

\textbf{Stationary probe.}
The NanoGPT stationary probe reuses stored checkpoints at training steps 1000, 2000, 3000, 4000, and 5000 and applies the protocol of \Cref{app:probe-protocols} to three representative attention output projections: \texttt{h.0.attn.c\_proj}, \texttt{h.5.attn.c\_proj}, and \texttt{h.11.attn.c\_proj}. The default protocol uses the sequential collection order of \Cref{app:probe-protocols}, with collection buffer of $K=500$ and $K=1000$ gradients. The stationary probe holds the weights fixed during collection, so $K$ is chosen large enough to warm the momentum buffer at the largest probe-side $\beta=0.99$ ($T=100$). At the step-3000 checkpoint we additionally rerun the same two windows under the shuffled collection order as the robustness cross-check of \Cref{app:thm1-supplement}. \Cref{fig:frozen-thm2} uses \texttt{h.0} as the representative slice, while the appendix keeps \texttt{h.5} and \texttt{h.11} as cross-layer checks.

\textbf{Trajectory probe.}
The NanoGPT trajectory probe is reported at two scopes: representative-layer evidence on a single attention output projection (\Cref{sec:experiments}), and all-layer evidence spanning every attention and MLP output projection (\Cref{app:thm2-supplement}). Both scopes share the same 3-seed run group, buffer size $K=\Kheadline$, analysis interval, and probe-side $\beta$ grids, differing only in the target layer. The representative-layer slice uses \texttt{h.0.attn.c\_proj}, matching the stationary-probe representative layer (\Cref{fig:ch1-frozen-c1-validation}). \Cref{app:thm2-supplement} aggregates over all 24 projection targets. The $K$-sweep in \Cref{app:k-sweep} verifies that Pre-polar dominance over Post-polar and Polar-only, and the Pre-polar to Polar-only alignment ratio, hold consistently for $K$ ranging from $2.5T$ to $10T$.

The probe stores per-step filtered singular-value spectra, $\sin\Theta$ subspace alignment errors at ranks $r\in$ \{1,5,10\}, and Pre-polar / Post-polar / Polar-only full-rank alignment values. Three training seeds \{1337, 1338, 1339\} are run for both instances. We report 3-seed mean and sample standard deviation.

\textbf{Probe-side $\beta$ grids.} The NanoGPT panels use the following grids. We use a six-point grid \{0.3,0.6,0.9,0.95,0.97,0.99\} for the stationary per-step filtering ratio (\Cref{fig:ch1-frozen-ratio}). We use a fourteen-point grid \{0.1,0.2,0.3,0.4,0.5,0.6,0.7,0.8,0.85,0.9,0.93,0.95,0.97,0.99\} for the stationary noise-suppression ratio (\Cref{fig:ch1-frozen-edge}) and the stationary subspace alignment error (\Cref{fig:ch1-frozen-c1-validation}). We use an eight-point grid \{0.5,0.7,0.8,0.9,0.93,0.95,0.97,0.99\} for the stationary Pre-polar / Post-polar / Polar-only signal-alignment comparison (\Cref{fig:frozen-beta-sweep}). For the trajectory subspace alignment error (\Cref{fig:online-cor1}) we use a twelve-point grid \{0.1,0.2,0.3,0.4,0.5,0.6,0.7,0.8,0.85,0.9,0.93,0.95\}, and for the trajectory Pre-polar / Post-polar / Polar-only signal-alignment comparison (\Cref{fig:online-thm2-a}) the six-point grid \{0.5,0.7,0.8,0.9,0.93,0.95\}. We display the full fourteen-point grid for the all-layer trajectory subspace alignment error (\Cref{fig:app-nanogpt-online-direction-attn,fig:app-nanogpt-online-direction-mlp}). For experiments that do not sweep $\beta$, we use $\beta=0.95$ as the default probe-side momentum coefficient.

\subsubsection{NanoGPT End-to-End Training}
\label{app:nanogpt-end2end}

The NanoGPT experiment is the first of the two end-to-end comparisons in \cref{fig:end2end}. The shared L40S training recipe (architecture, data, batch shape, schedule, precision, DDP layout, training steps) is given in \cref{tab:nanogpt-l40s}. Hidden two-dimensional matrix parameters in the transformer blocks are updated by the selected Muon pipeline (with five Newton--Schulz iterations per update), while token embeddings, the language modeling head, and scalar tensors are updated by an auxiliary fused AdamW optimizer with learning rate 0.008, betas (0.8, 0.95), and $\epsilon=10^{-10}$. Pre-polar and Post-polar additionally use the Muon momentum warmup that linearly interpolates from a fixed initial momentum 0.85 to the pipeline's best $\beta$ over the first 300 steps and enable Nesterov momentum. Polar-only carries no momentum buffer, skips the momentum warmup, and turns off Nesterov momentum. The pipeline-specific best Muon learning rate and $\beta$ are selected by the sweep below.

\paragraph{Hyperparameter sweep and best configurations.}
Before the seed-averaging stage, we ran a single-seed (\texttt{seed}=0) sweep at the full 5100-step training budget to pick the best Muon learning rate and momentum coefficient per pipeline. The main sweep covers three Muon learning rates \{0.025,\,0.05,\,0.075\} crossed with three momentum coefficients \{0.8, 0.9,\,0.95\} for Pre-polar and Post-polar pipelines (nine $(\mathrm{lr},\beta)$ cells per pipeline), and six learning rates \{0.01,\,0.015,\,0.02,\,0.025,\,0.05,\,0.075\} for Polar-only pipeline. The selected configurations reported in \cref{tab:nanogpt-end2end-best} are those that minimize the FineWeb-10B validation loss at step 5100 within the swept grid. The selected configurations were then rerun with three independent seeds (1-3) under identical settings, and the per-step three-seed mean and sample standard deviation feed the NanoGPT panel of \cref{fig:end2end}.

\begin{table}[htbp]
  \centering\small
  \caption{NanoGPT end-to-end best configurations selected from the single-seed (\texttt{seed}=0) sweep and used for the three-seed final comparison.}
  \label{tab:nanogpt-end2end-best}
  \begin{tabular}{@{}lcccc@{}}
    \toprule
    Variant & Pipeline & Muon LR & $\beta$ & Seeds \\
    \midrule
    Pre-polar & momentum $\rightarrow$ NS & 0.025 & 0.8 & 3 \\
    Post-polar & NS $\rightarrow$ momentum & 0.05 & 0.95 & 3 \\
    Polar-only & NS only & 0.02 & -- & 3 \\
    \bottomrule
  \end{tabular}
\end{table}

\subsubsection{LLaMA 350M End-to-End Training}
\label{app:llama350m-end2end}

The LLaMA 350M experiment is the second of the two end-to-end comparisons in \cref{fig:end2end}. The shared L40S training recipe (architecture, data, batch shape, schedule, precision, DDP layout, training steps) is given in \cref{tab:llama350m-l40s}. Hidden two-dimensional matrix parameters in the transformer blocks are updated by the selected Muon pipeline (with five Newton--Schulz iterations per update), while embeddings, the output head, normalization parameters, and other non-Muon tensors are updated by an auxiliary AdamW optimizer with learning rate $3\times10^{-4}$, betas (0.9,0.95), and $\epsilon=10^{-10}$. Pre-polar and Post-polar hold the Muon momentum coefficient constant at the pipeline's best $\beta$ throughout training and enable Nesterov momentum. No momentum warmup is applied here, in contrast to the NanoGPT recipe of \Cref{app:nanogpt-end2end}. Polar-only carries no momentum buffer and turns off Nesterov momentum. Validation cross-entropy is computed every 500 steps over $\sim$$10$M C4 tokens, and the reported final loss / perplexity in \cref{tab:llama350m-results} is the value at step 3000. The pipeline-specific best Muon learning rate and $\beta$ are selected by the sweep below.

\paragraph{Hyperparameter sweep and best configurations.}
Before the seed-averaging stage, we ran a single-seed (\texttt{seed}=0) sweep at the full 3000-step training budget to pick the best Muon learning rate and momentum coefficient per pipeline. The main sweep covers six Muon learning rates \{0.005,\,0.01,\,0.02,\,0.025,\,0.03,\,0.035\} crossed with four momentum coefficients \{0.8,\,0.9,\,0.95,\,0.975\} for Pre-polar and Post-polar pipelines, and the same six learning rates for Polar-only pipeline. The selected configurations reported in \cref{tab:llama350m-results} are those that minimize the C4 validation loss at step 3000 within the swept grid. The selected configurations were then rerun with three independent seeds (1-3). The per-step three-seed mean and sample standard deviation feed the LLaMA 350M panel of \cref{fig:end2end}.

\begin{table}[htbp]
  \centering\small
  \caption{LLaMA 350M end-to-end best configurations selected from the single-seed (\texttt{seed}=0) sweep and seed-averaged final validation loss / perplexity at step 3000. Final loss and perplexity are reported as mean $\pm$ sample standard deviation over completed runs.}
  \label{tab:llama350m-results}
  \begin{tabular}{@{}lccccc@{}}
    \toprule
    Variant & Pipeline & Muon LR & $\beta$ & Seeds & Final loss / PPL \\
    \midrule
    Pre-polar  & momentum $\rightarrow$ NS & 0.020 & 0.95 & 3 & 2.9696 $\pm$ 0.0002 / 19.48 \\
    Post-polar & NS $\rightarrow$ momentum & 0.020 & 0.95 & 3 & 3.1568 $\pm$ 0.0058 / 23.50 \\
    Polar-only & NS only                   & 0.035 & --     & 3 & 3.2402 $\pm$ 0.0076 / 25.54 \\
    \bottomrule
  \end{tabular}
\end{table}

\subsection{Signal-Strength Sensitivity}
\label{app:exp-tasks-signal-strength}

The signal-strength sweep of \Cref{app:signal-strength} reuses the synthetic, CIFAR-10, and NanoGPT probe protocols defined above with three task-specific configurations. All three sweeps share the probe-side momentum coefficient $\beta=0.95$ and the single-pass momentum over the stored collection order described in \Cref{app:probe-protocols}. The metric definitions are those of \Cref{app:measurements}.

\textbf{Synthetic $\lambda$ sweep (\Cref{fig:app-synth-signal-strength}).} The synthetic generator is the rank-3 spiked model shared with \cref{fig:app-synth-ordering}: $m=n=100$, planted singular values $\lambda \cdot [1, 8/12, 5/12]$ (so that $\lambda$ acts directly as the leading signal singular value), $\sigma_n=1$, BVMZOS perturbation, $K=500$ collection steps per trial, and 10 random-seed trials. The signal subspace bases $(U,V)$ are random orthogonal matrices fixed across $\lambda$ for reproducibility (seed 123). The reported metric is rank-3 subspace alignment against the planted $(U,V)$ subspaces, plotted with $\pm 1\sigma$ trial-variance bands. The swept grid is $\lambda \in $ \{0.5, 1.0, 1.5, 2.0, 3.0, 4.0, 5.0, 6.0, 8.0, 10.0, 12.0, 15.0, 18.0, 20.0\}.

\textbf{CIFAR-10 batch sweep (\Cref{fig:app-cifar-batch}).} The probe target weight matrix is layer \texttt{layer2.0.conv1} of the same ResNet-18 used in the default CIFAR-10 stationary probe. The network is warmed up for 500 steps, then frozen. Six independent probes are run with mini-batch sizes $b \in$ \{16, 32, 64, 128, 256, 512\}, each collecting $K=200$ stationary mini-batch gradients on the same frozen network. Per-batch the $K$ gradients are processed through the three pipelines at $\beta=0.95$ in collection order. The four reported panels are rank-1, rank-5, rank-10 subspace alignment against the empirical mean-gradient subspace, and full-rank signal alignment.

\textbf{NanoGPT batch sweep (\Cref{fig:app-nanogpt-batch}).} The probe target weight matrix is layer \texttt{h.0.attn.c\_proj} (matrix shape 768 $\times$ 768) of the NanoGPT step-3000 checkpoint. Six independent probes are run with mini-batch sizes $b \in $ \{16, 32, 64, 128, 256, 512\} at sequence length 1024, each collecting $K=500$ stationary mini-batch gradients with single-step accumulation. Per-batch the $K$ gradients are processed through the three pipelines at $\beta=0.95$ in collection order. The four reported panels are rank-1 subspace alignment, rank-5 and rank-10 subspace alignment, and full-rank signal alignment.

\section{Experimental Results of Spectral Gaps}
\label{app:thm1-supplement}

This section extends the validation of \Cref{thm:spectral-gap} from the NanoGPT probe of \Cref{sec:spectral-gap} to synthetic and CIFAR-10 stationary probes and to the remaining NanoGPT attention output projections and saved checkpoints. The full settings are shown in \Cref{app:synthetic-settings,app:cifar10-probe,app:nanogpt-probe}. All panels follow the curve, marker, and reference conventions of \cref{fig:ch1-frozen-t1-validation}. Across all panels, the same qualitative pattern repeats: the head of the momentum buffer's singular value spectrum tracks the signal singular values, the tail collapses toward the perturbation floor $(2T-1)^{-1/4}$ of \Cref{thm:spectral-gap} as $\beta$ grows, and the noise-suppression ratio $R(T)$ sits above the $(2T-1)^{1/4}$ rate predicted by \Cref{thm:spectral-gap}.

\paragraph{Synthetic and CIFAR-10 Stationary Probes.}
\Cref{fig:app-synth-filtering} shows the filtered singular value spectrum on the rank-3 spiked model under Gaussian-noise (left panel) and heavy-tailed Student-$t$-noise (right panel) BVMZOS perturbations, with planted singular values $\sigma_1=12$, $\sigma_2=8$, and $\sigma_3=5$ (black diamonds at $k=1,2,3$). As $\beta$ grows, the top-$3$ singular values of the momentum buffer concentrate near the true signal, while singular values at indices $k\ge 4$ are more suppressed, supporting \Cref{thm:spectral-gap}.
\Cref{fig:app-cifar-spectrum-ratio} reproduces this pattern on the CIFAR-10 stationary probe at \texttt{layer2.0.conv1} of a ResNet-18 (matricized to 128 $\times$ 576), warmup step 500, $K=2000$, with the filtered singular value spectra in~\cref{fig:app-cifar-spectrum} and the per-step filtering ratio $\mathrm{Filt}_k(\beta)$ (\Cref{app:measurements}) in~\cref{fig:app-cifar-ratio}. As $\beta$ grows, the per-step filtering ratios decrease, with the head indices suppressed less than the tail.
\Cref{fig:app-edge-pair} shows the noise-suppression ratio $R(T)$ on synthetic and CIFAR-10 stationary probe experiments. On synthetic (\Cref{fig:app-synth-edge}), where the planted $G_t^{\mathrm{sig}}$ replaces $\bar G$ in the denominator with a zero-init bias correction (\Cref{app:measurements}), $R(T)$ grows above the $(2T-1)^{1/4}$ rate under the BVMZOS perturbation. On CIFAR-10 (\Cref{fig:app-cifar-edge}), the empirical curve grows at the similar rate above the same floor across the full $\beta$ sweep, supporting \Cref{thm:spectral-gap}.

\begin{figure}[htbp]
  \centering
  \includegraphics[width=\linewidth]{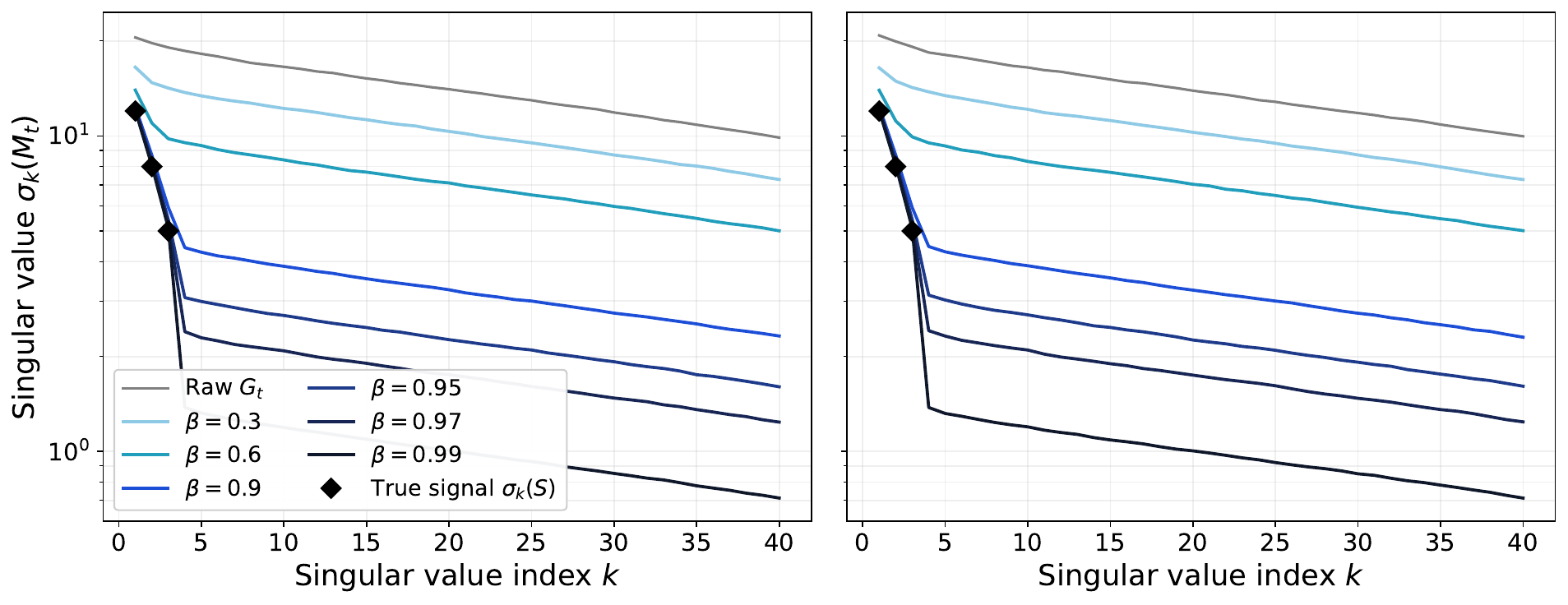}
  \caption{Synthetic stationary filtered singular value spectra under the rank-3 spiked model ($m=n=100$, $\sigma_n=1$, $K=1000$, 10-trial mean) with a BVMZOS perturbation. (Left). The rank-3 spiked model with the Gaussian noise. (Right) The rank-3 spiked model with the heavy-tailed Student-$t$ noise. The black diamonds at $k=1,2,3$ mark the planted signal singular values $\sigma_k\in\{12,8,5\}$. The experimental details are described in \Cref{app:synthetic-settings}.}
  \label{fig:app-synth-filtering}
\end{figure}

\begin{figure}[htbp]
  \centering
  \begin{minipage}[t]{0.48\linewidth}
    \centering
    \includegraphics[width=\linewidth]{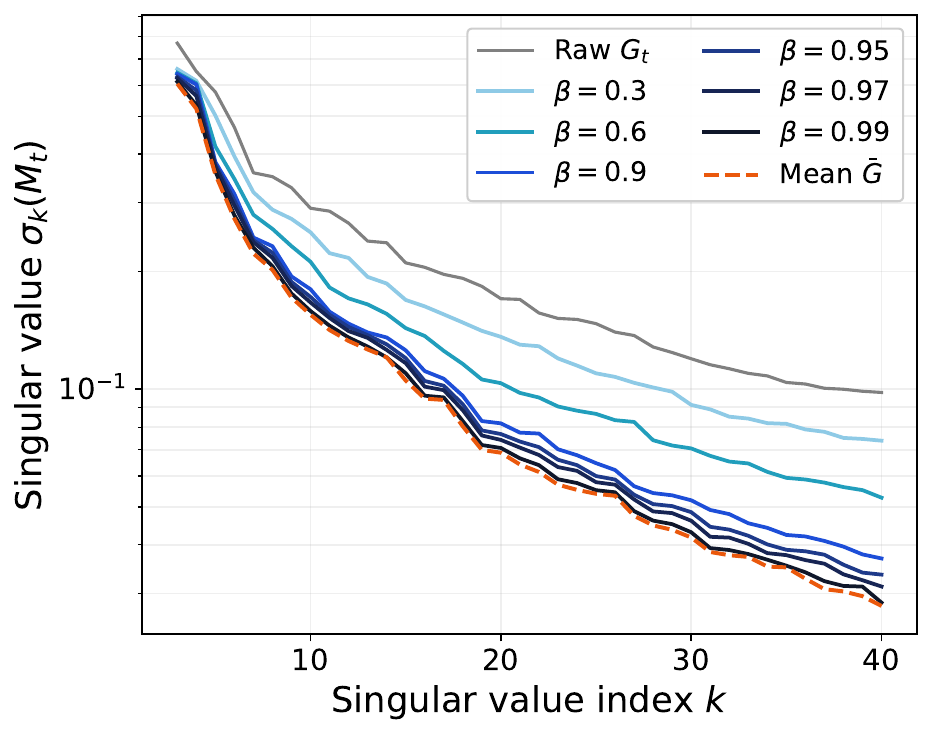}
    \subcaption{Filtered singular value spectra.}
    \label{fig:app-cifar-spectrum}
  \end{minipage}\hfill
  \begin{minipage}[t]{0.48\linewidth}
    \centering
    \includegraphics[width=\linewidth]{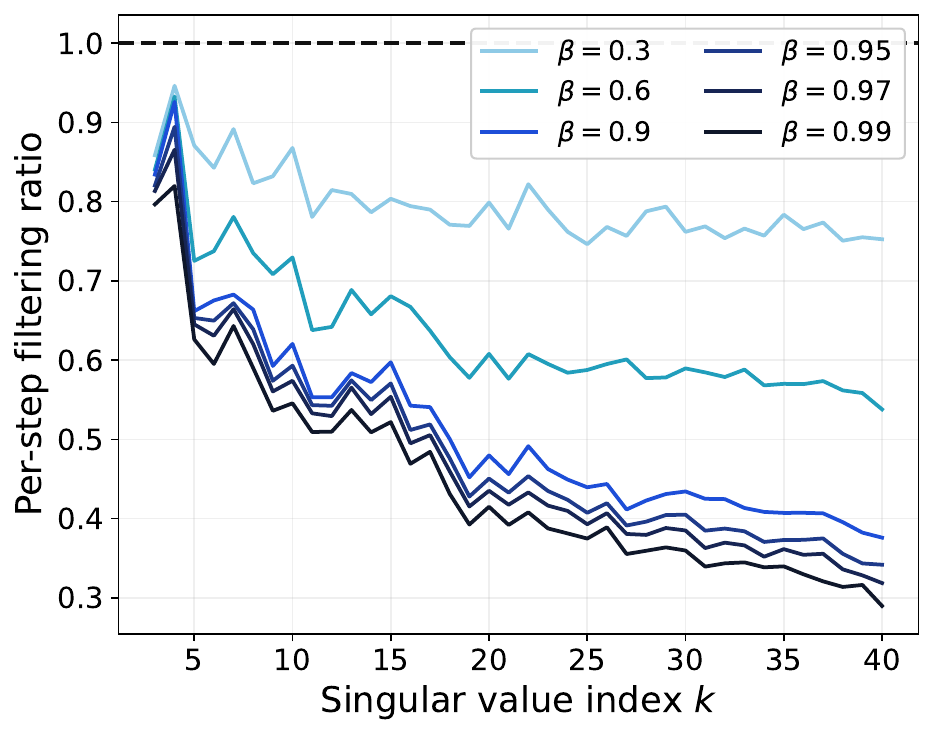}
    \subcaption{Per-step filtering ratio.}
    \label{fig:app-cifar-ratio}
  \end{minipage}
  \caption{CIFAR-10 stationary probe at \texttt{layer2.0.conv1} (128 $\times$ 576), warmup step 500, $K=2000$. Index range $k\in\{3,\dots,40\}$. Curve and reference conventions follow \cref{fig:ch1-frozen-spectrum,fig:ch1-frozen-ratio}.}
  \label{fig:app-cifar-spectrum-ratio}
\end{figure}

\begin{figure}[htbp]
  \centering
  \begin{minipage}[t]{0.48\linewidth}
    \centering
    \includegraphics[width=\linewidth]{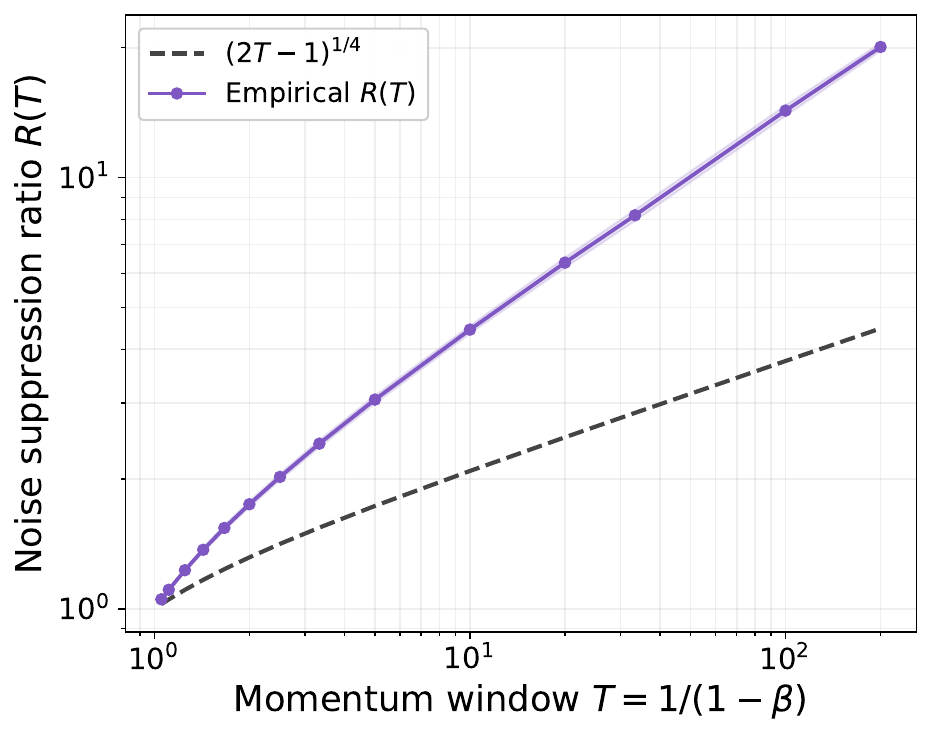}
    \subcaption{Synthetic noise-suppression ratio.}
    \label{fig:app-synth-edge}
  \end{minipage}\hfill
  \begin{minipage}[t]{0.48\linewidth}
    \centering
    \includegraphics[width=\linewidth]{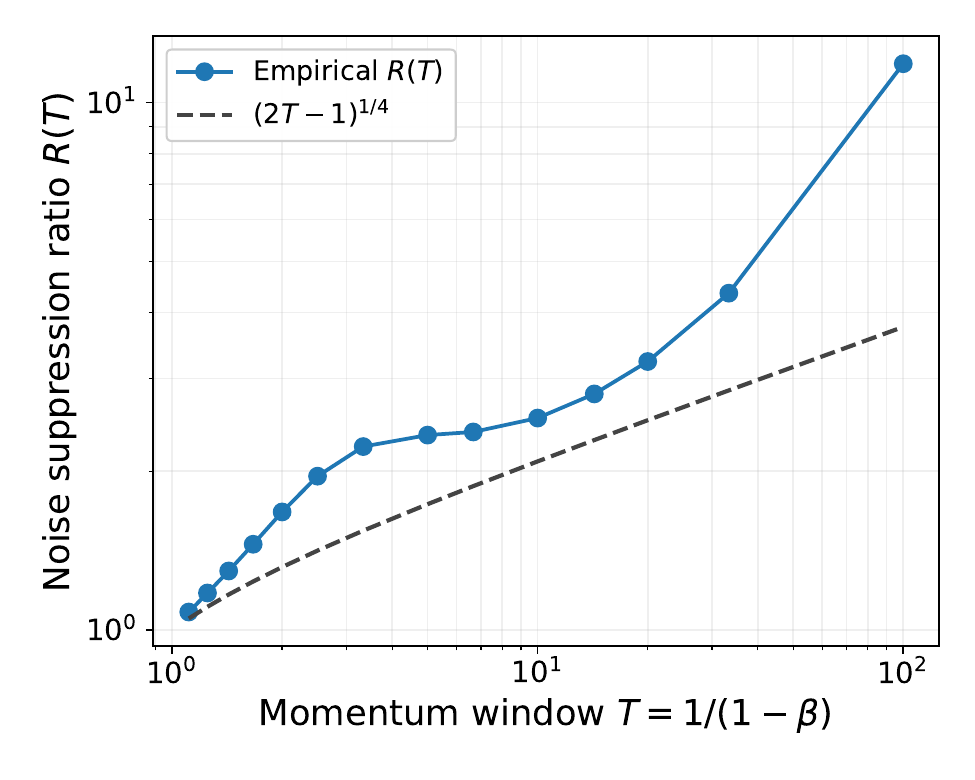}
    \subcaption{CIFAR-10 stationary noise-suppression ratio.}
    \label{fig:app-cifar-edge}
  \end{minipage}
    \caption{Noise-suppression ratio $R(T)$ (\Cref{app:measurements}) on (a) synthetic rank-3 spiked gradients ($m=n=100$, $\sigma_n=1$, 1000 steps, $10$ trials) under a BVMZOS perturbation and (b) CIFAR-10 stationary gradients at \texttt{layer2.0.conv1}. The noise-suppression ratio $R(T)$ in the synthetic simulation uses the planted signal $G_t^{\mathrm{sig}}$ in place of $\bar G$ with a zero-init bias correction (\Cref{app:measurements}). Dashed line: $(2T-1)^{1/4}$ floor.}
  \label{fig:app-edge-pair}
\end{figure}

\paragraph{NanoGPT Stationary Probes Across Layers and Checkpoints.}
\Cref{fig:app-nanogpt-spectra,fig:app-nanogpt-ratio} extend \cref{fig:ch1-frozen-spectrum,fig:ch1-frozen-ratio}, respectively, to a 3 $\times$ 5 grid over three attention output projections (\texttt{h.0}, \texttt{h.5}, \texttt{h.11}) as rows and five saved training checkpoints (steps 1000, 2000, 3000, 4000, and 5000) as columns, $K=500$ per cell. At every cell, as $\beta$ grows, the buffer spectrum moves toward the mean-gradient spectrum $\sigma_k(\bar G)$ with the head reaching $\sigma_k(\bar G)$ ahead of the tail, and the per-step filtering ratios decrease with the head suppressed less than the tail. This pattern, where the head filtering ratios are suppressed less than the tail, is consistent with the spectral gap predicted by \Cref{thm:spectral-gap}.
\Cref{fig:app-nanogpt-edge} extends \cref{fig:ch1-frozen-edge} to the early- and late-training checkpoints (steps 1000 and 5000) on the same three attention output projections. At both checkpoints, $R(T)$ grows at the rate above the dashed $(2T-1)^{1/4}$ floor. The depth ordering of \cref{fig:ch1-frozen-edge} (\texttt{h.0} furthest above the floor, \texttt{h.11} closest to it) is preserved across training.

\begin{figure}[htbp]
  \centering
  \includegraphics[width=\linewidth]{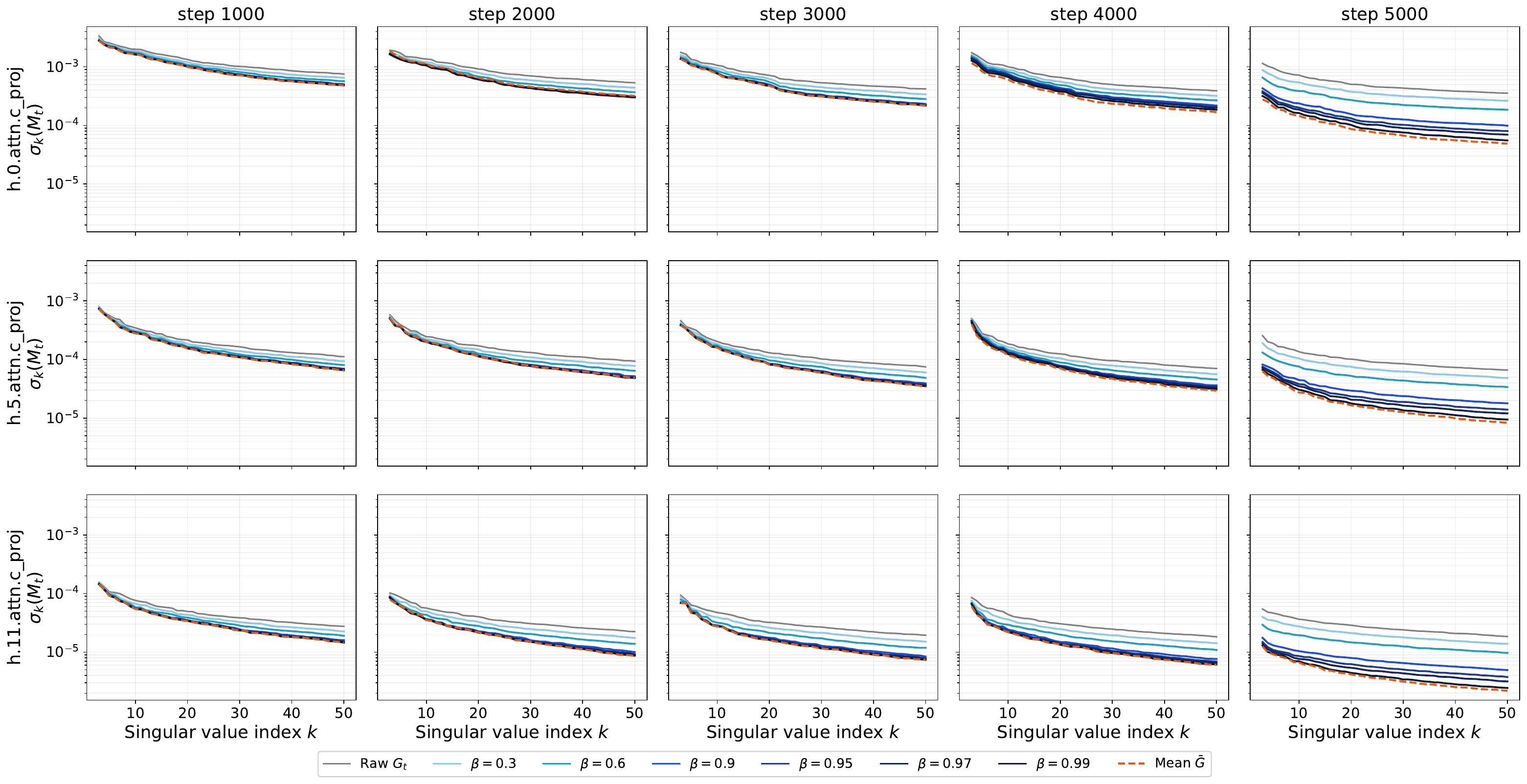}
  \caption{Stationary NanoGPT filtered singular value spectra over attention output projections \texttt{h.0}, \texttt{h.5}, \texttt{h.11} (rows) and training checkpoints 1000, 2000, 3000, 4000, and 5000 (columns), $K=500$. Mean-gradient spectrum $\sigma_k(\bar G)$ shown in dashed orange. Axes are shared across all fifteen cells.}
  \label{fig:app-nanogpt-spectra}
\end{figure}

\begin{figure}[htbp]
  \centering
  \includegraphics[width=\linewidth]{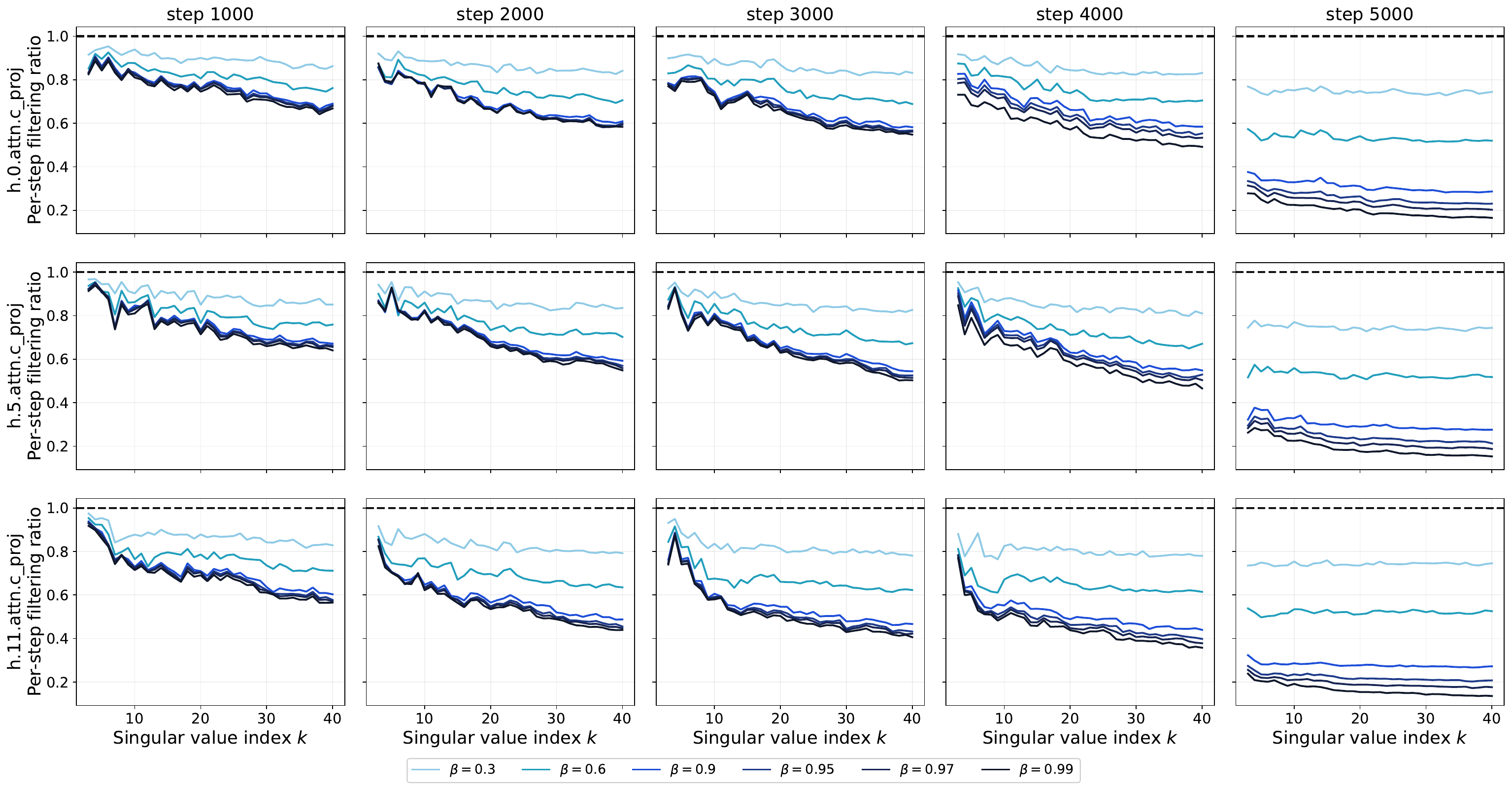}
  \caption{Stationary NanoGPT per-step filtering ratio $\mathrm{Filt}_k(\beta) = \sigma_k(M_K^{(\beta)})/\sigma_k(G_K)$ over the same $(\text{layer}, \text{step})$ grid as \cref{fig:app-nanogpt-spectra}. Dashed reference at $y=1$. Axes are shared across all fifteen cells.}
  \label{fig:app-nanogpt-ratio}
\end{figure}

\begin{figure}[htbp]
  \centering
  \includegraphics[width=\linewidth]{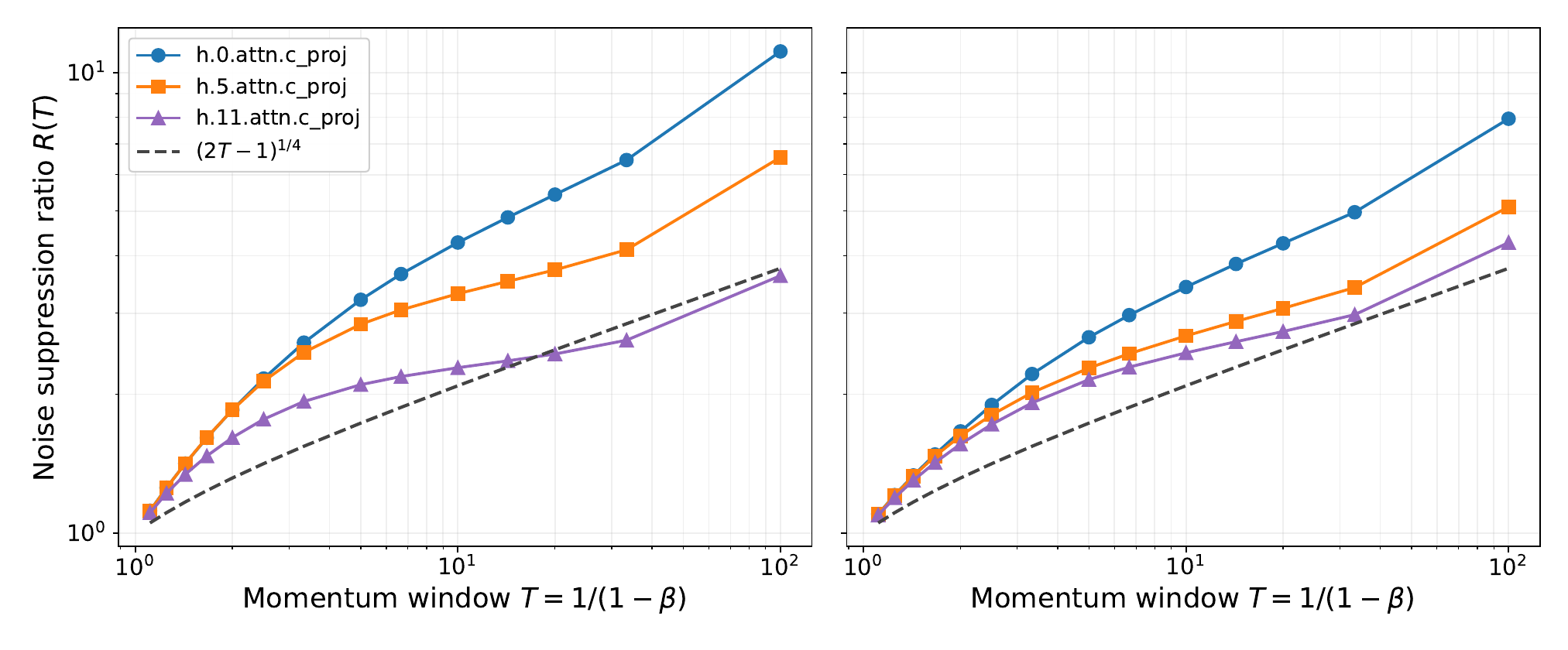}
  \caption{Stationary NanoGPT noise-suppression ratio $R(T)$ (\Cref{app:measurements}) at training checkpoints (a) step 1000 and (b) step 5000. Three attention output projections per panel. Dashed line: $(2T-1)^{1/4}$ floor.}
  \label{fig:app-nanogpt-edge}
\end{figure}


\section{Experimental Results of Subspace Alignment}
\label{app:cor1-supplement}

In this section, we extend the subspace alignment experiments (theoretically suggested by \Cref{cor:direction}) provided in \cref{fig:ch1-frozen-c1-validation,fig:online-cor1} for the representative NanoGPT layer, by additionally running the same experiments on synthetic and CIFAR-10 stationary probes, the remaining stationary NanoGPT layers, and the trajectory probes on CIFAR-10 and NanoGPT. All panels follow the curve, marker, and reference conventions of \cref{fig:ch1-frozen-c1-validation} for stationary probe and \cref{fig:online-cor1} for trajectory probe. Each dashed reference is the rate of $c_r\,(2T-1)^{-1/4}$ guide with a fitted $c_r$, which corresponds to the high-probability \Cref{cor:direction} bound. The signal reference is the planted top-$r$ singular subspace $(U_{\mathrm{true}},V_{\mathrm{true}})$ on the synthetic simulation (where ground truth is available) and the empirical mean gradient $\bar G$ on CIFAR-10 and NanoGPT. Across all panels, $\sin\Theta_U$ and $\sin\Theta_V$ decrease as the momentum window size $T$ grows.

\paragraph{Synthetic and CIFAR-10 Stationary Probes.}
\Cref{fig:app-synth-direction} shows the subspace alignment errors $\sin\Theta_U$ and $\sin\Theta_V$ on the rank-3 spiked model shared with \cref{fig:app-synth-filtering} at ranks $r\in\{1,2,3\}$, measured against the planted top-$r$ singular subspace $(U_{\mathrm{true}},V_{\mathrm{true}})$. Both errors decrease as $\beta$ grows at all ranks, supporting \Cref{cor:direction}.
\Cref{fig:app-cifar-direction} shows the subspace alignment errors $\sin\Theta_U$ and $\sin\Theta_V$ on the CIFAR-10 stationary probe at \texttt{layer2.0.conv1} of a ResNet-18 (matricized to 128 $\times$ 576), warmup step 500, $K=2000$ at ranks $r\in$ \{1,5,10\}. Both errors decrease as $\beta$ grows, supporting \Cref{cor:direction}.

\begin{figure}[htbp]
  \centering
  \includegraphics[width=0.85\linewidth]{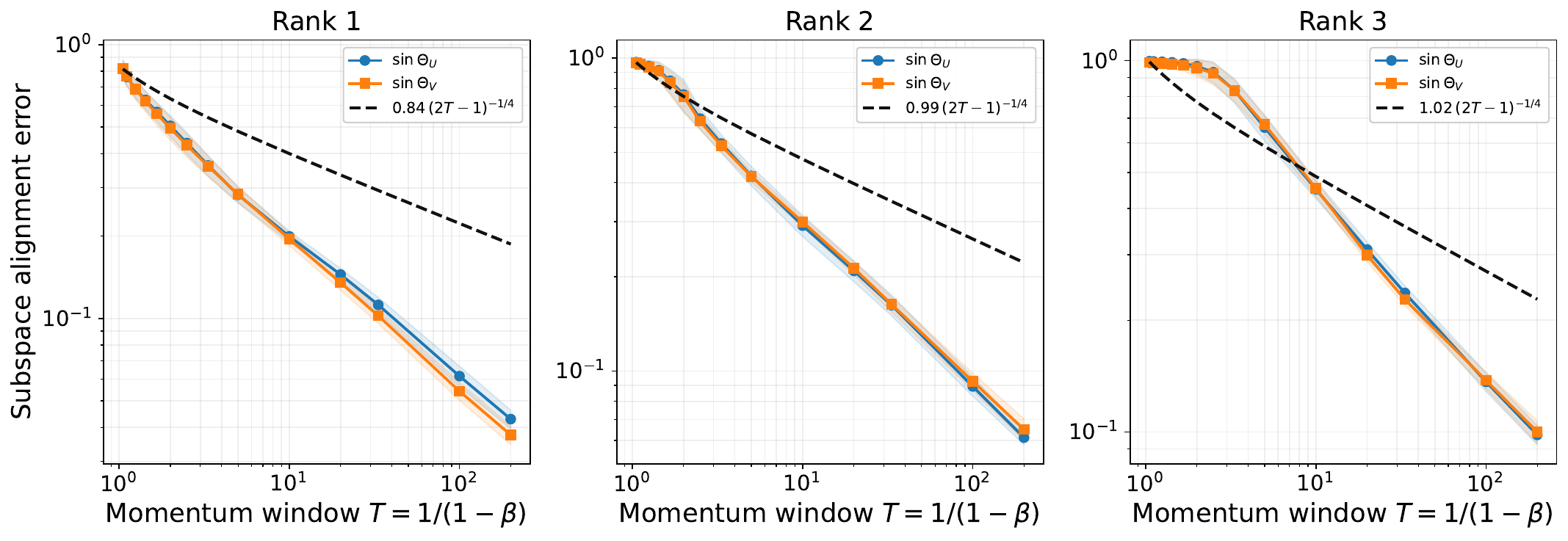}
  \caption{Subspace alignment error on the synthetic rank-3 spiked model under a BVMZOS perturbation. Panels report ranks $r\in\{1,2,3\}$. $\sin\Theta_U$ (blue) and $\sin\Theta_V$ (orange) are computed against the planted top-$r$ singular subspace $(U_{\mathrm{true}},V_{\mathrm{true}})$. Dashed line: fitted $c_r\,(2T-1)^{-1/4}$ guide. Shaded bands: $\pm 1$ trial standard deviation across 10 random-seed trials.}
  \label{fig:app-synth-direction}
\end{figure}

\begin{figure}[htbp]
  \centering
  \includegraphics[width=0.96\linewidth]{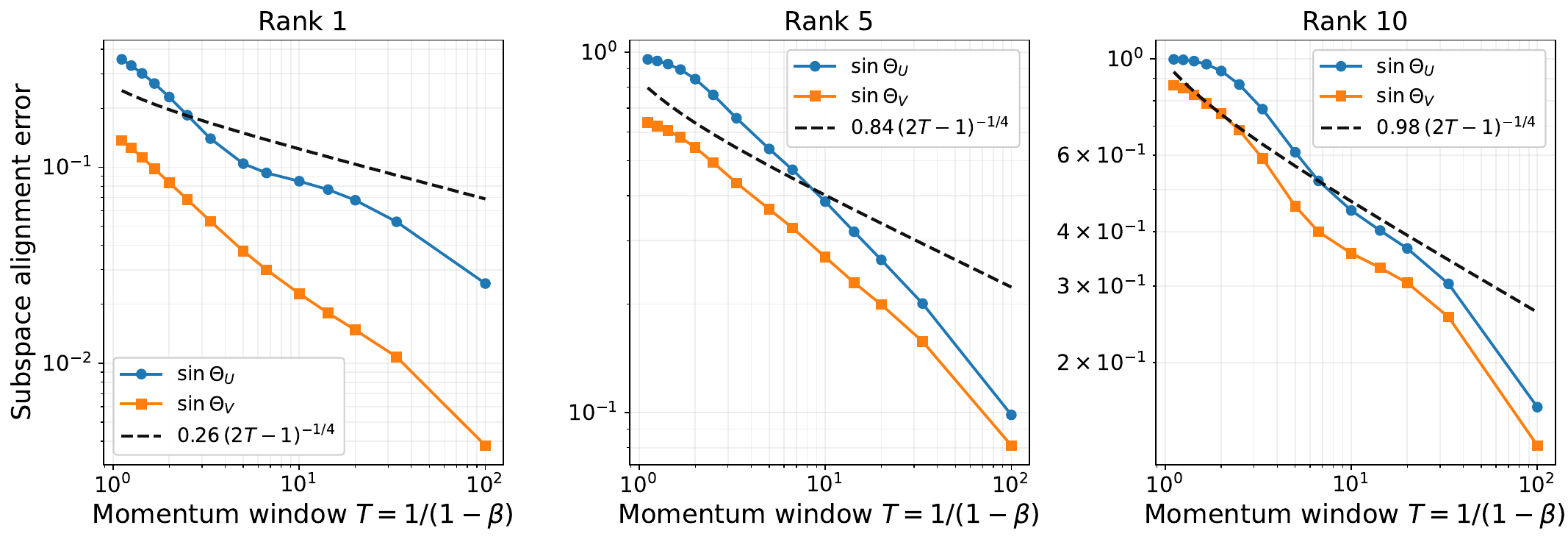}
  \caption{CIFAR-10 stationary subspace alignment error on \texttt{layer2.0.conv1} (128 $\times$ 576), warmup step 500, $K=2000$, at ranks $r\in$ \{1,5,10\}. Curve and reference conventions follow \cref{fig:ch1-frozen-c1-validation}.}
  \label{fig:app-cifar-direction}
\end{figure}

\paragraph{NanoGPT Stationary Probes Across Layers.}
\Cref{fig:app-nanogpt-direction} extends \cref{fig:ch1-frozen-c1-validation} to the two remaining attention output projections \texttt{h.5.attn.c\_proj} and \texttt{h.11.attn.c\_proj} at step~3000, $K=500$ at ranks $r\in$ \{1,5,10\}.

\begin{figure}[htbp]
  \centering
  \includegraphics[width=\linewidth]{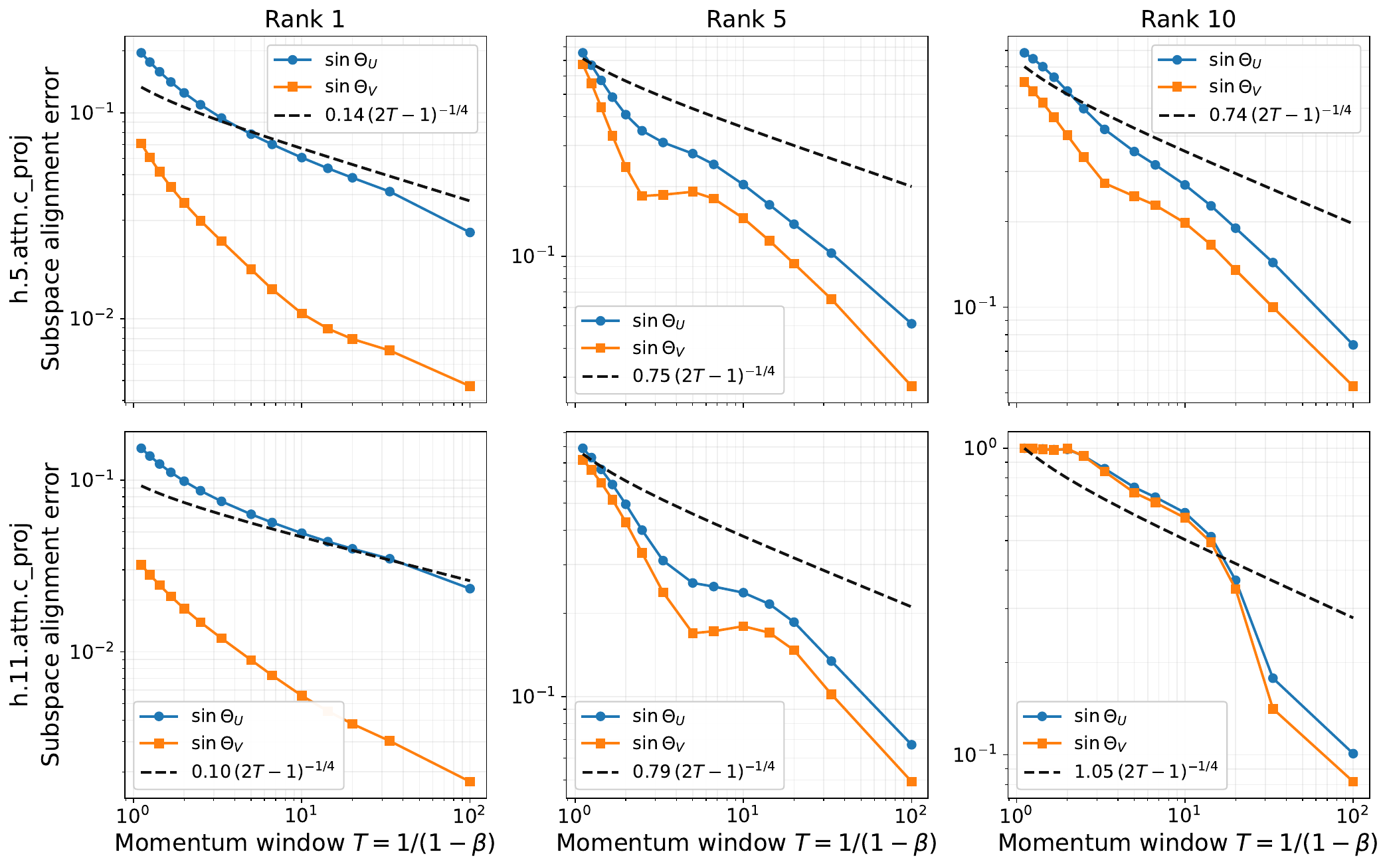}
  \caption{Stationary NanoGPT subspace alignment error on attention output projections \texttt{h.5.attn.c\_proj} (top) and \texttt{h.11.attn.c\_proj} (bottom) at checkpoint step~3000, $K=500$, at ranks $r\in$ \{1,5,10\} (columns). The representative \texttt{h.0.attn.c\_proj} panel is in the main text as \cref{fig:ch1-frozen-c1-validation}.}
  \label{fig:app-nanogpt-direction}
\end{figure}

\paragraph{CIFAR-10 Trajectory Probes.}
\Cref{fig:app-cifar-online-direction} shows the subspace alignment errors $\sin\Theta_U$ and $\sin\Theta_V$ on the CIFAR-10 trajectory probe at \texttt{layer2.0.conv1}, $K=100$, $I=100$, final analysis step $t=1500$, six-seed mean (seeds 42--47). Both errors decrease as $\beta$ grows at all ranks, confirming that the rank-$r$ subspace reliability bound survives the local-in-time relaxation of \Cref{ass:decomp}(a) and supporting \Cref{cor:direction}.

\begin{figure}[htbp]
  \centering
  \includegraphics[width=0.96\linewidth]{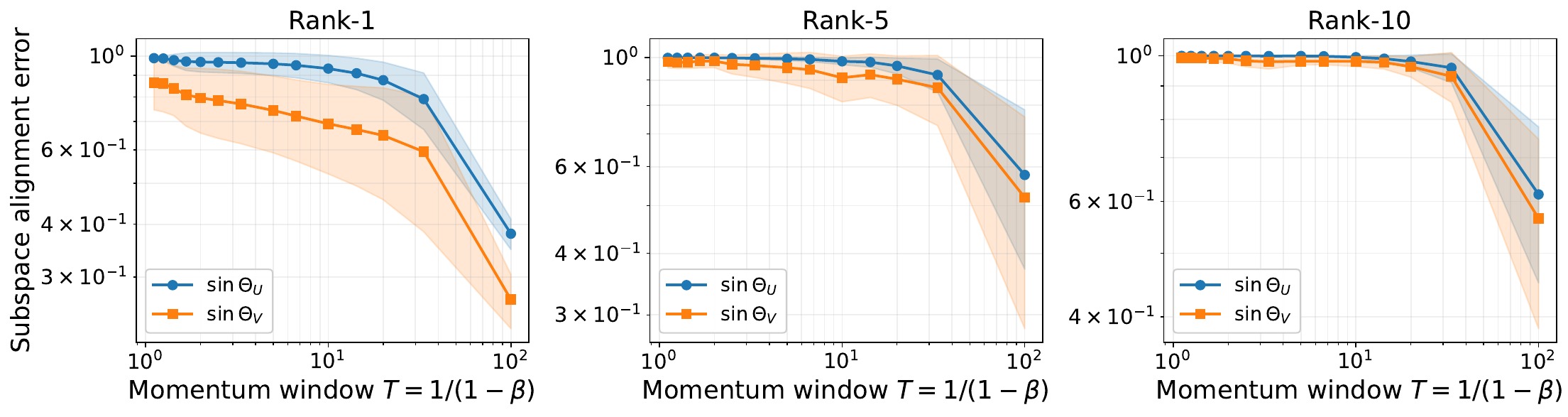}
  \caption{CIFAR-10 trajectory subspace alignment error at training step 1500 on \texttt{layer2.0.conv1}, six-seed mean (seeds 42--47, $K=100$), at ranks $r\in$ \{1,5,10\}. Solid lines: six-seed mean. Shaded bands: sample standard deviation across seeds.}
  \label{fig:app-cifar-online-direction}
\end{figure}

\paragraph{NanoGPT Trajectory Probes Across Layers.}
\Cref{fig:app-nanogpt-online-direction-attn,fig:app-nanogpt-online-direction-mlp} extend \cref{fig:online-cor1} to the two remaining attention output projections and the three MLP output projections at training step~3000, $K=\Kheadline$, $I=100$, three-seed mean with sample standard deviation bands across seeds.

\begin{figure}[htbp]
  \centering
  \includegraphics[width=\linewidth]{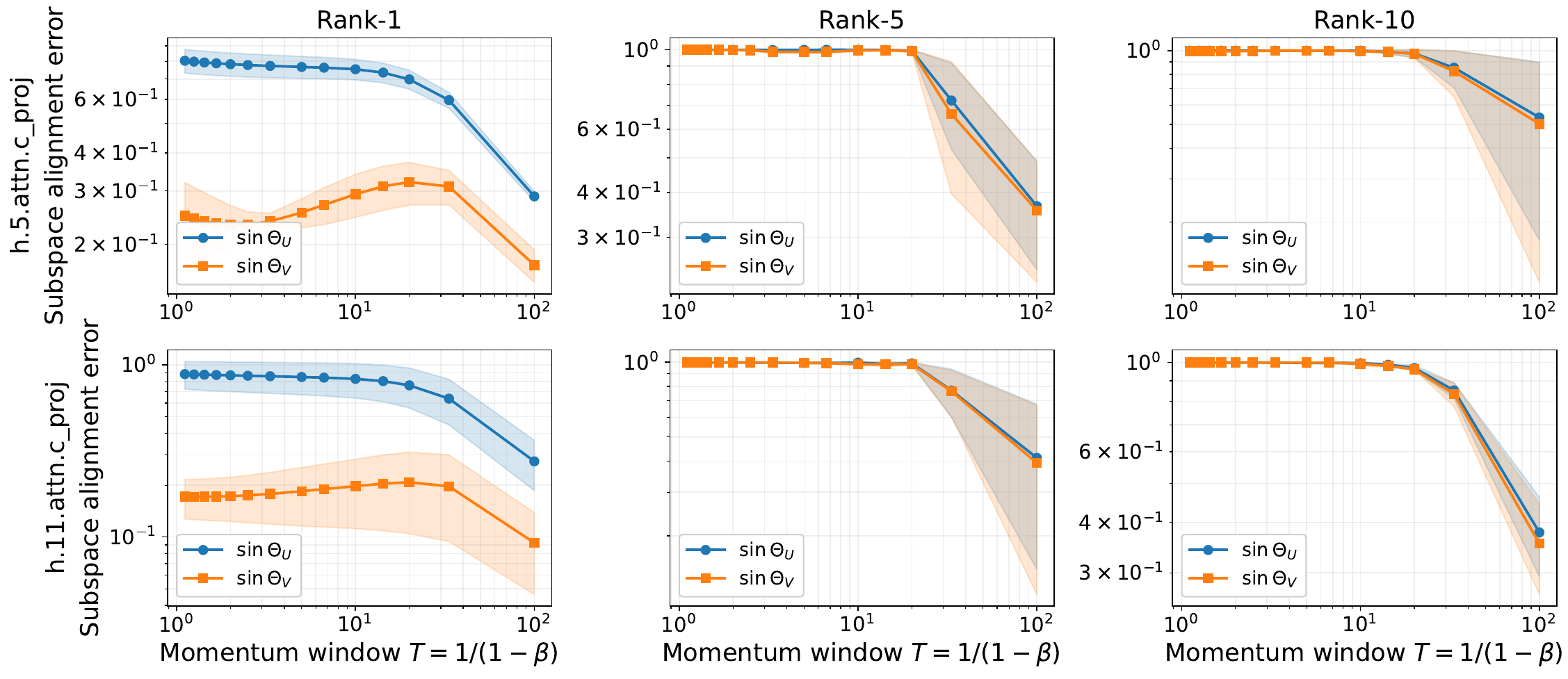}
  \caption{NanoGPT trajectory subspace alignment error on attention output projections \texttt{h.5.attn.c\_proj} (top) and \texttt{h.11.attn.c\_proj} (bottom) at training step~3000, trajectory buffer $K=\Kheadline$, at ranks $r\in$ \{1,5,10\} (columns). 3-seed mean with $\pm 1$ sample-standard-deviation bands.}
  \label{fig:app-nanogpt-online-direction-attn}
\end{figure}

\begin{figure}[htbp]
  \centering
  \includegraphics[width=\linewidth]{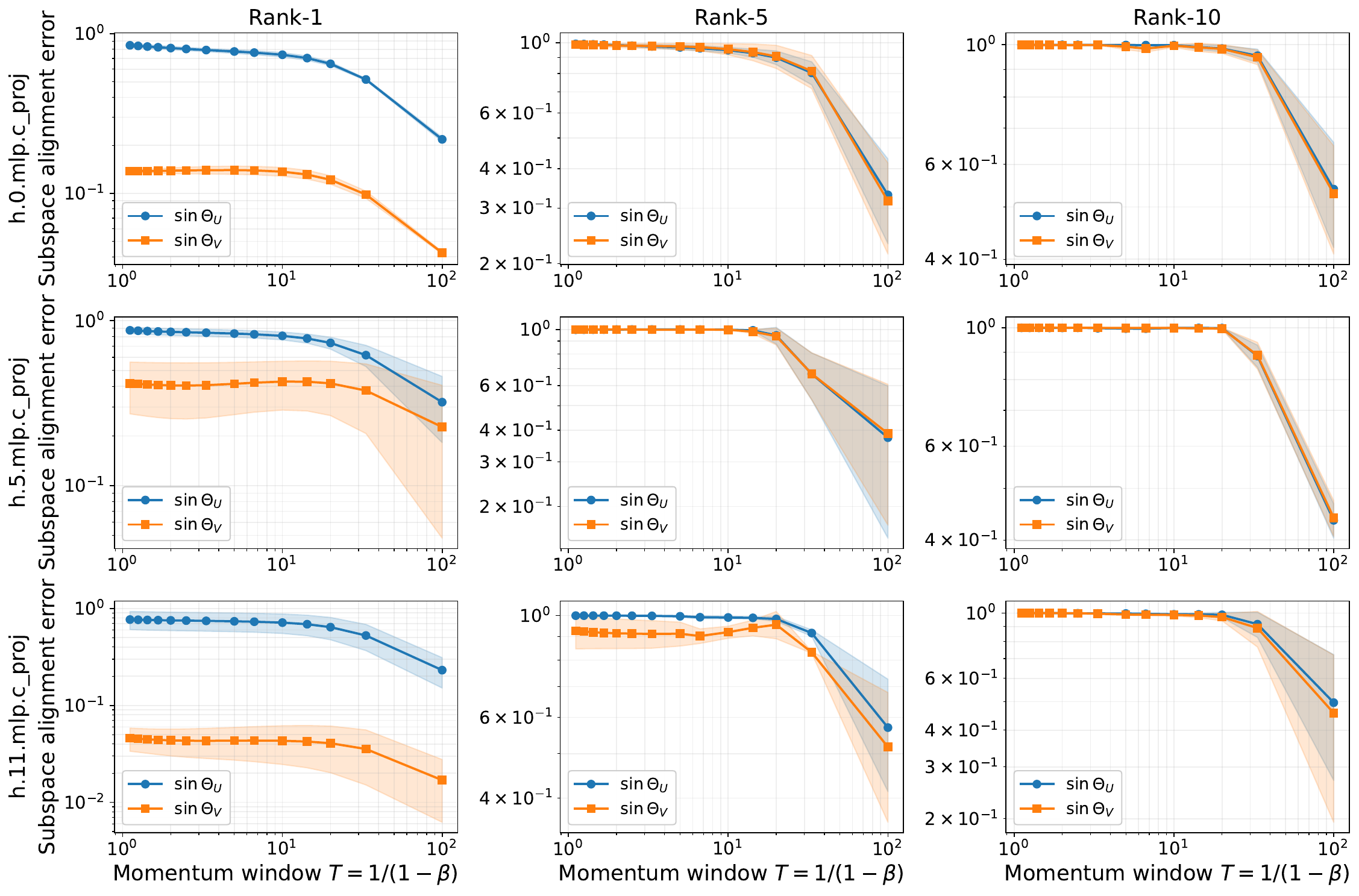}
  \caption{NanoGPT trajectory subspace alignment error on MLP output projections \texttt{h.0.mlp.c\_proj} (top), \texttt{h.5.mlp.c\_proj} (middle), \texttt{h.11.mlp.c\_proj} (rows) at training step~3000, trajectory buffer $K=\Kheadline$, at ranks $r\in$ \{1,5,10\} (columns). Same plotting conventions as \cref{fig:app-nanogpt-online-direction-attn}.}
  \label{fig:app-nanogpt-online-direction-mlp}
\end{figure}


\section{Experimental Results of Signal Alignment Ordering}
\label{app:thm2-supplement}

In this section, we extend the signal alignment experiments (theoretically suggested by \Cref{thm:recovery}) provided in \cref{fig:frozen-thm2,fig:online-thm2} for the representative NanoGPT layer, by additionally running the same experiments on synthetic and CIFAR-10 stationary probes, the remaining stationary NanoGPT layers and checkpoints, and the trajectory probes on CIFAR-10 and NanoGPT. All panels follow the curve, marker, and reference conventions of \cref{fig:frozen-thm2} for stationary probe and \cref{fig:online-thm2} for trajectory probe. The signal reference is the planted top-$r$ singular subspace $(U_{\mathrm{true}},V_{\mathrm{true}})$ on the synthetic simulation (where ground truth is available) and the empirical mean gradient $\bar G$ on CIFAR-10 and NanoGPT. \Cref{thm:recovery} predicts only Pre-polar dominance over both non-denoised pipelines. The relative position of Post-polar and Polar-only is layer- and rank-dependent on real gradients and is not predicted by the theorem. Across all panels, the Pre-polar signal alignment rises monotonically with $\beta$ and dominates the Post-polar and Polar-only signal alignments, while the Post-polar and Polar-only signal alignments stay nearly flat at low values.

\paragraph{Synthetic and CIFAR-10 Stationary Probes.}
\Cref{fig:app-synth-ordering} shows the signal alignment $\mathrm{Align}_r$ on the rank-3 spiked model shared with \cref{fig:app-synth-filtering} at ranks $r\in\{1,2,3\}$. The Pre-polar signal alignment rises monotonically with $\beta$ at every rank, while the Post-polar signal alignment declines or is nearly flat in $\beta$ and the Polar-only signal alignment is $\beta$-independent by construction, supporting \Cref{thm:recovery}.
\Cref{fig:app-cifar-ordering} supports \Cref{thm:recovery} on the CIFAR-10 stationary probe at \texttt{layer2.0.conv1} of a ResNet-18 (matricized to 128 $\times$ 576), warmup step 500, $K=2000$ at ranks $r\in$ \{1,5,10\} and the full-rank signal alignment. The Pre-polar signal alignment dominates both the signal alignments of non-denoised pipelines at every panel. The relative position of the Post-polar signal alignment and the Polar-only signal alignment varies across ranks on this layer.

\begin{figure}[htbp]
  \centering
  \includegraphics[width=0.95\linewidth]{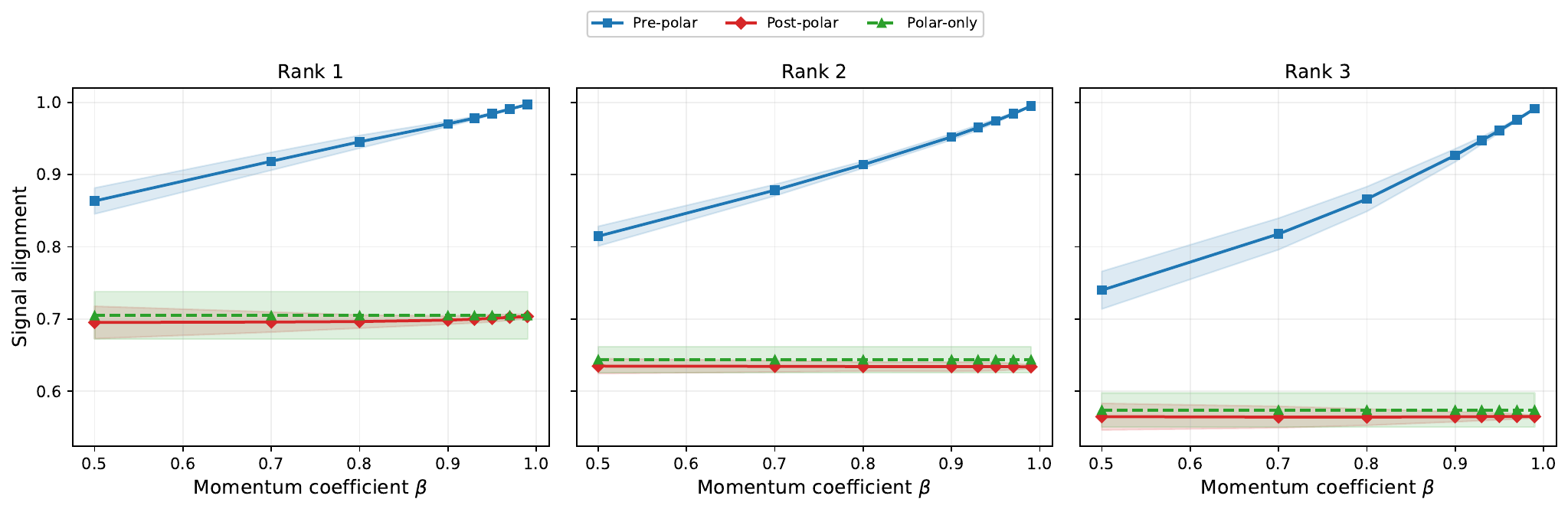}
  \caption{Synthetic signal alignment versus momentum coefficient $\beta$ on the rank-3 spiked model ($m=n=100$, $\sigma_n=1$, $K=1000$, 10 random-seed trials) under a BVMZOS perturbation. Panels report ranks $r\in\{1,2,3\}$ against the planted top-$r$ singular subspace $(U_{\mathrm{true}},V_{\mathrm{true}})$. Curve and reference conventions follow \cref{fig:frozen-thm2}. Shaded bands show trial standard deviation across the 10 trials.}
  \label{fig:app-synth-ordering}
\end{figure}

\begin{figure}[htbp]
  \centering
  \includegraphics[width=0.8\linewidth]{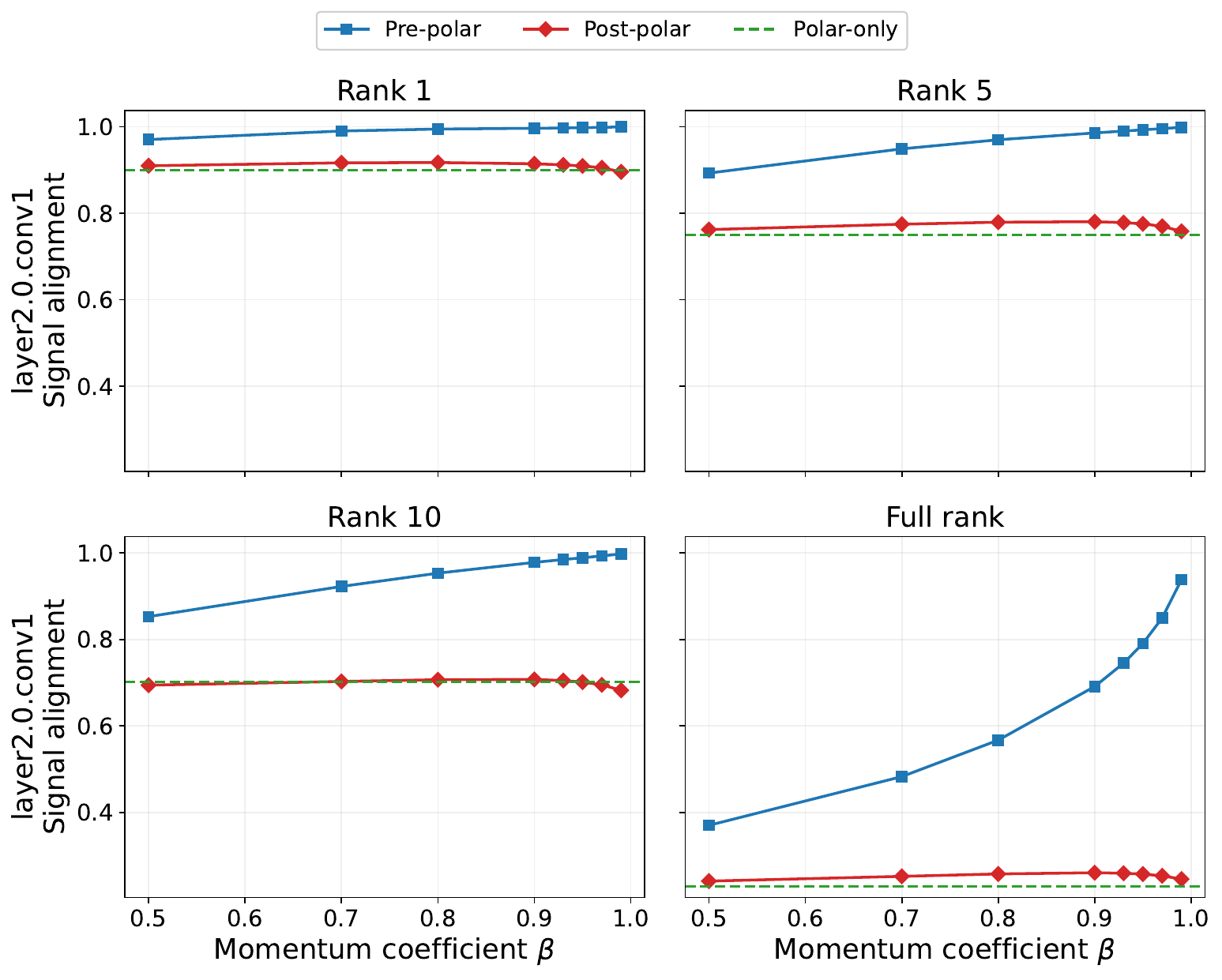}
  \caption{CIFAR-10 stationary signal alignment on \texttt{layer2.0.conv1} (128 $\times$ 576), warmup step 500, $K=2000$, at ranks $r\in$ \{1,5,10\} and the full-rank signal alignment. Curve and reference conventions follow \cref{fig:frozen-thm2}.}
  \label{fig:app-cifar-ordering}
\end{figure}

\paragraph{NanoGPT Stationary Probes Across Layers and Checkpoints.}
\Cref{fig:app-nanogpt-ordering-grid} extends \cref{fig:frozen-thm2} to a $3\times 4$ grid over three attention output projections (\texttt{h.0}, \texttt{h.5}, \texttt{h.11}) as rows and four ranks (rank-1, rank-5, rank-10, full-rank) as columns at step 3000, $K=500$. Pre-polar dominates both non-denoised pipelines at every panel. Rank-5 and rank-10 are the stable subspace ranks. Rank-1 is unstable on layers with a small $\sigma_1/\sigma_2$ gap.
\Cref{tab:frozen-numerical,tab:frozen-breakdown,tab:frozen-checkpoint} report the corresponding numerical summaries at $\beta=0.95$ across the three attention output projections, the four (collection order, $K$) settings, and the five checkpoints. The Pre-polar vs. Post-polar gap is positive at every collection-order, layer, and checkpoint, and grows from step 1000 to step 5000.

\begin{figure}[htbp]
  \centering
  \includegraphics[width=\linewidth]{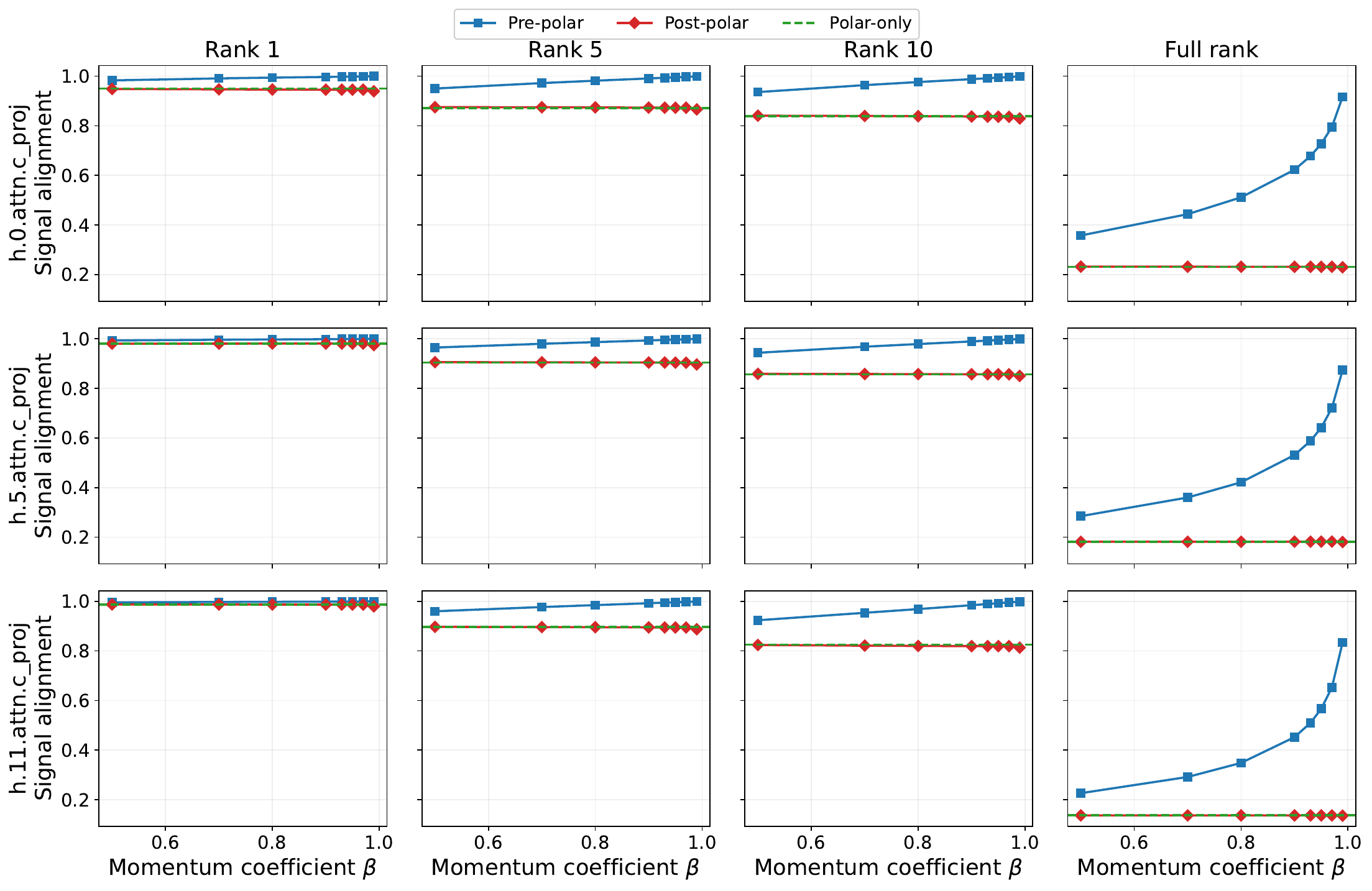}
  \caption{Stationary NanoGPT signal alignment over attention output projections \texttt{h.0} (top), \texttt{h.5} (middle), \texttt{h.11} (rows) and ranks $r\in$ \{1,5,10\} plus full rank signal alignment (columns) at checkpoint step 3000, $K=500$. Curve and reference conventions follow \cref{fig:frozen-thm2}. All twelve cells share the same 8-point $\beta$ grid \{0.5, 0.7, 0.8, 0.9, 0.93, 0.95, 0.97, 0.99\}.}
  \label{fig:app-nanogpt-ordering-grid}
\end{figure}

\begin{table}[htbp]
    \centering\small
    \caption{Fixed-rank stationary probe averages at step 3000 across the three attention output projections, at $\beta=0.95$. Gap columns report Pre-polar vs. Post-polar signal alignment at the indicated rank.}
    \label{tab:frozen-numerical}
    \begin{tabular}{@{}lccc@{}}
        \toprule
        Collection order & Gap r1 & Gap r5 & Gap r10 \\
        \midrule
        sequential, 500 gradients  & 0.0277 & 0.1057 & 0.1562 \\
        sequential, 1000 gradients & 0.0228 & 0.0963 & 0.1432 \\
        shuffled, 500 gradients    & 0.0296 & 0.1102 & 0.1619 \\
        shuffled, 1000 gradients   & 0.0291 & 0.1091 & 0.1595 \\
        \bottomrule
    \end{tabular}
\end{table}

\begin{table}[htbp]
    \centering\small
    \caption{Stationary rank-5 Pre-polar and Post-polar signal alignments and their gap per attention output projection at step 3000, $\beta=0.95$, one row per (collection order, layer) pair under collection orders $\{$sequential, shuffled$\}$ at $K\in\{500, 1000\}$ gradients.}
    \label{tab:frozen-breakdown}
    \begin{tabular}{@{}llccc@{}}
        \toprule
        Collection order & Layer & Pre r5 & Post r5 & Gap r5 \\
        \midrule
        sequential, 500 gradients & h.0 & 0.9953 & 0.8728 & 0.1224 \\
        sequential, 500 gradients & h.5 & 0.9966 & 0.9038 & 0.0928 \\
        sequential, 500 gradients & h.11 & 0.9963 & 0.8945 & 0.1018 \\
        sequential, 1000 gradients & h.0 & 0.9845 & 0.8771 & 0.1075 \\
        sequential, 1000 gradients & h.5 & 0.9905 & 0.9056 & 0.0849 \\
        sequential, 1000 gradients & h.11 & 0.9946 & 0.8981 & 0.0965 \\
        shuffled, 500 gradients & h.0 & 0.9903 & 0.8631 & 0.1272 \\
        shuffled, 500 gradients & h.5 & 0.9936 & 0.8962 & 0.0974 \\
        shuffled, 500 gradients & h.11 & 0.9952 & 0.8893 & 0.1059 \\
        shuffled, 1000 gradients & h.0 & 0.9918 & 0.8641 & 0.1277 \\
        shuffled, 1000 gradients & h.5 & 0.9949 & 0.8990 & 0.0959 \\
        shuffled, 1000 gradients & h.11 & 0.9953 & 0.8916 & 0.1036 \\
        \bottomrule
    \end{tabular}
\end{table}

\begin{table}[htbp]
    \centering\small
    \caption{Stationary Pre-polar vs. Post-polar signal alignment gap averaged across the three attention output projections at $\beta=0.95$, sequential collection order, $K=500$, at each of the five checkpoints.}
    \label{tab:frozen-checkpoint}
    \begin{tabular}{@{}lccc@{}}
        \toprule
        Checkpoint step & Gap r1 & Gap r5 & Gap r10 \\
        \midrule
        1000 & 0.0150 & 0.0613 & 0.0897 \\
        2000 & 0.0245 & 0.0959 & 0.1378 \\
        3000 & 0.0277 & 0.1057 & 0.1562 \\
        4000 & 0.0360 & 0.1197 & 0.1771 \\
        5000 & 0.3546 & 0.4921 & 0.5291 \\
        \bottomrule
    \end{tabular}
\end{table}

\paragraph{CIFAR-10 Trajectory Probes.} \Cref{fig:app-cifar-online-ordering} supports \Cref{thm:recovery} on the CIFAR-10 trajectory probe at \texttt{layer2.0.conv1} at the final analysis checkpoint (training step 1500, $K=100$). Panels report rank-5 signal alignment and full-rank signal alignment, both against $\cO(\bar G)$ of the same buffer. The Pre-polar signal alignment grows monotonically with $\beta$ on both ranks, while the Post-polar signal alignment declines or is nearly flat in $\beta$ and the Polar-only signal alignment is $\beta$-independent by construction.
\Cref{fig:app-cifar-online-history} extends the trajectory ordering across all 15 analysis checkpoints at $\beta=0.95$ ($T=20$, $I=100$). The Pre-polar vs. Post-polar full-rank alignment gap stays at 0.480 $\pm$ 0.011 across the trajectory and Pre-polar vs. Polar-only gap at 0.474 $\pm$ 0.012. Pre-polar advantage at $\beta=0.95$ persists across training steps, mirroring the NanoGPT trajectory result of \cref{fig:online-thm2-b} and validating that Pre-polar signal alignment's dominance over both the signal alignments of Post-polar and Polar-only baselines survives the non-stationarity introduced by live training.

\begin{figure}[htbp]
  \centering
  \includegraphics[width=0.96\linewidth]{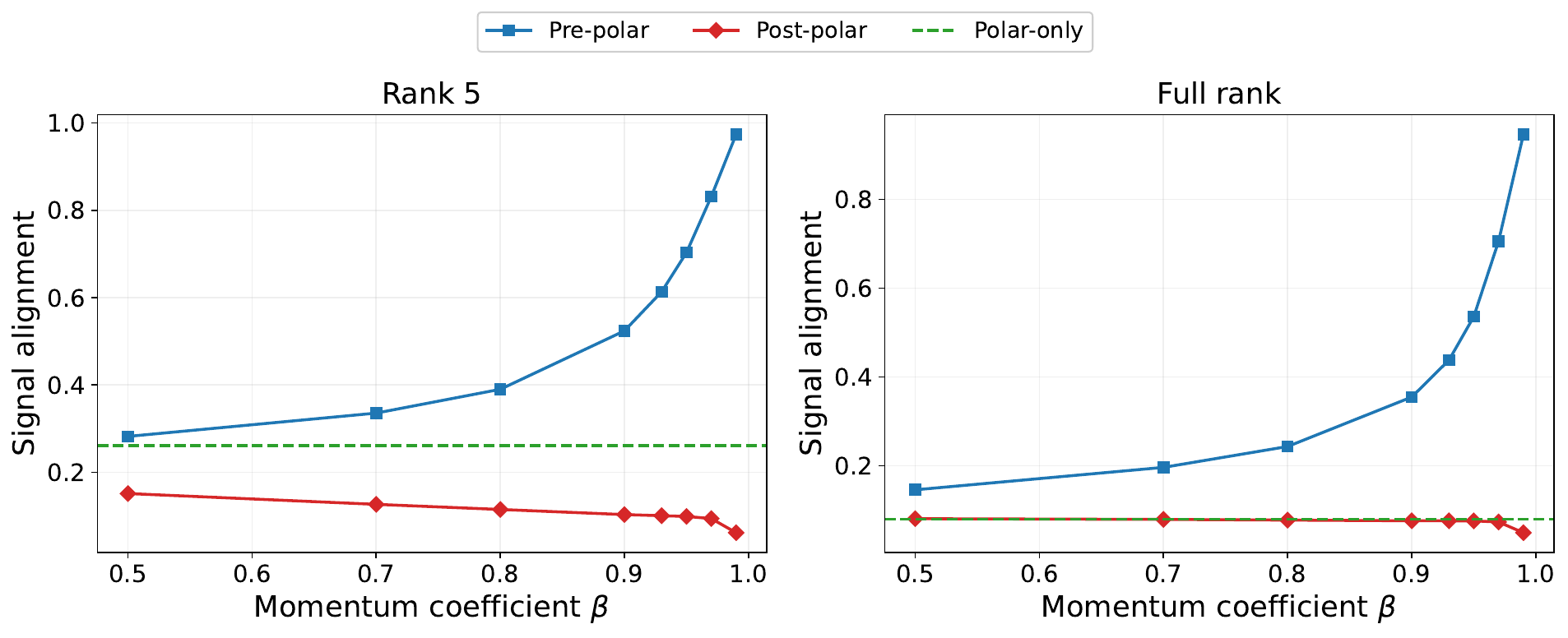}
  \caption{CIFAR-10 trajectory signal alignment versus $\beta$ at training step 1500, $K=100$, on \texttt{layer2.0.conv1}, at the rank-5 (left) and full-rank alignment (right) panels. Curve and reference conventions follow \cref{fig:frozen-thm2}.}
  \label{fig:app-cifar-online-ordering}
\end{figure}

\begin{figure}[htbp]
  \centering
  \includegraphics[width=0.5\linewidth]{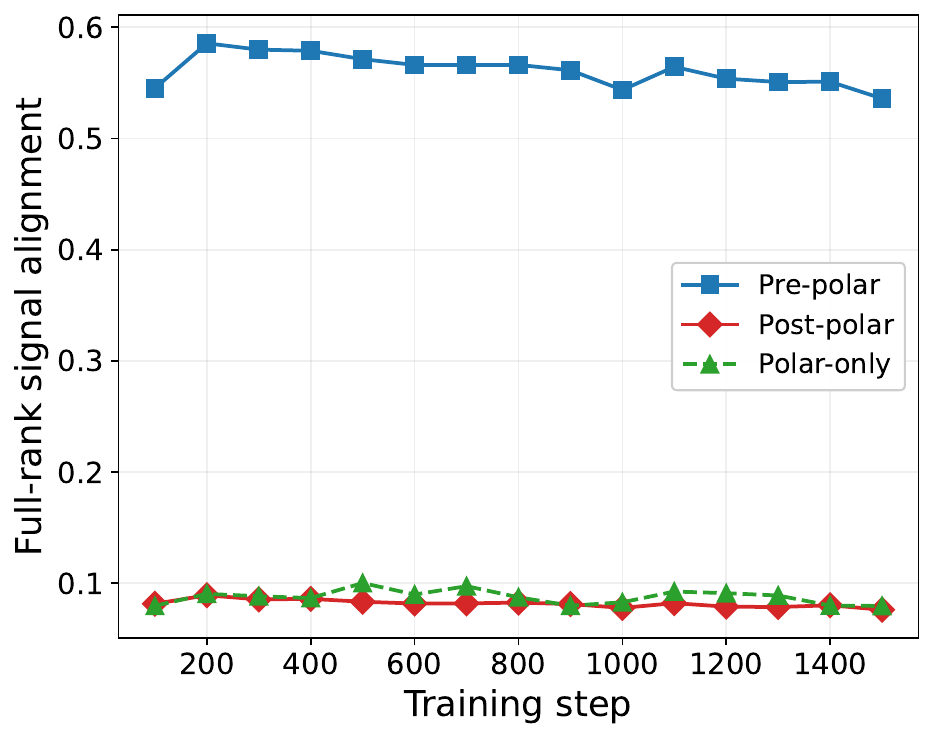}
  \caption{CIFAR-10 trajectory ordering history on \texttt{layer2.0.conv1} at $\beta=0.95$ (trajectory buffer $K=100$, analysis interval $I=100$, 15 checkpoints over training step 100--1500). Curve and reference conventions follow \cref{fig:frozen-thm2}.}
  \label{fig:app-cifar-online-history}
\end{figure}

\paragraph{NanoGPT Trajectory Probes Across Layers and Checkpoints.}
\Cref{fig:app-nanogpt-all-layer} extends \cref{fig:online-thm2-a} to all 24 projection targets of the GPT-2 backbone (12 attention + 12 MLP output projections) at the step-3000 checkpoint, three trajectory seeds at $K=50$, $\beta=0.95$. Every one of the 24 $\times$ 3 = 72 (layer, seed) triples satisfies Pre-polar $>$ Post-polar and Pre-polar $>$ Polar-only, with MLP output projections reaching a slightly larger absolute gap than attention output projections.
\Cref{tab:online-all-layer} reports the corresponding mean trajectory metrics aggregated over all 24 projection targets and three seeds at four common checkpoints. The Pre-polar vs. Post-polar full-rank signal alignment gap is essentially flat across the tracked checkpoints.
\Cref{fig:app-nanogpt-online-checkpoints} tracks the trajectory ordering across training on \texttt{h.0}, \texttt{h.5}, \texttt{h.11} at $K=50$, $\beta=0.95$, three-seed mean. Pre-polar $>$ Post-polar and Pre-polar $>$ Polar-only at every checkpoint and every layer. The mean trajectory gap is 0.528 on \texttt{h.0}, 0.563 on \texttt{h.5}, and 0.566 on \texttt{h.11}.

\begin{figure}[htbp]
  \centering
  \includegraphics[width=0.95\linewidth]{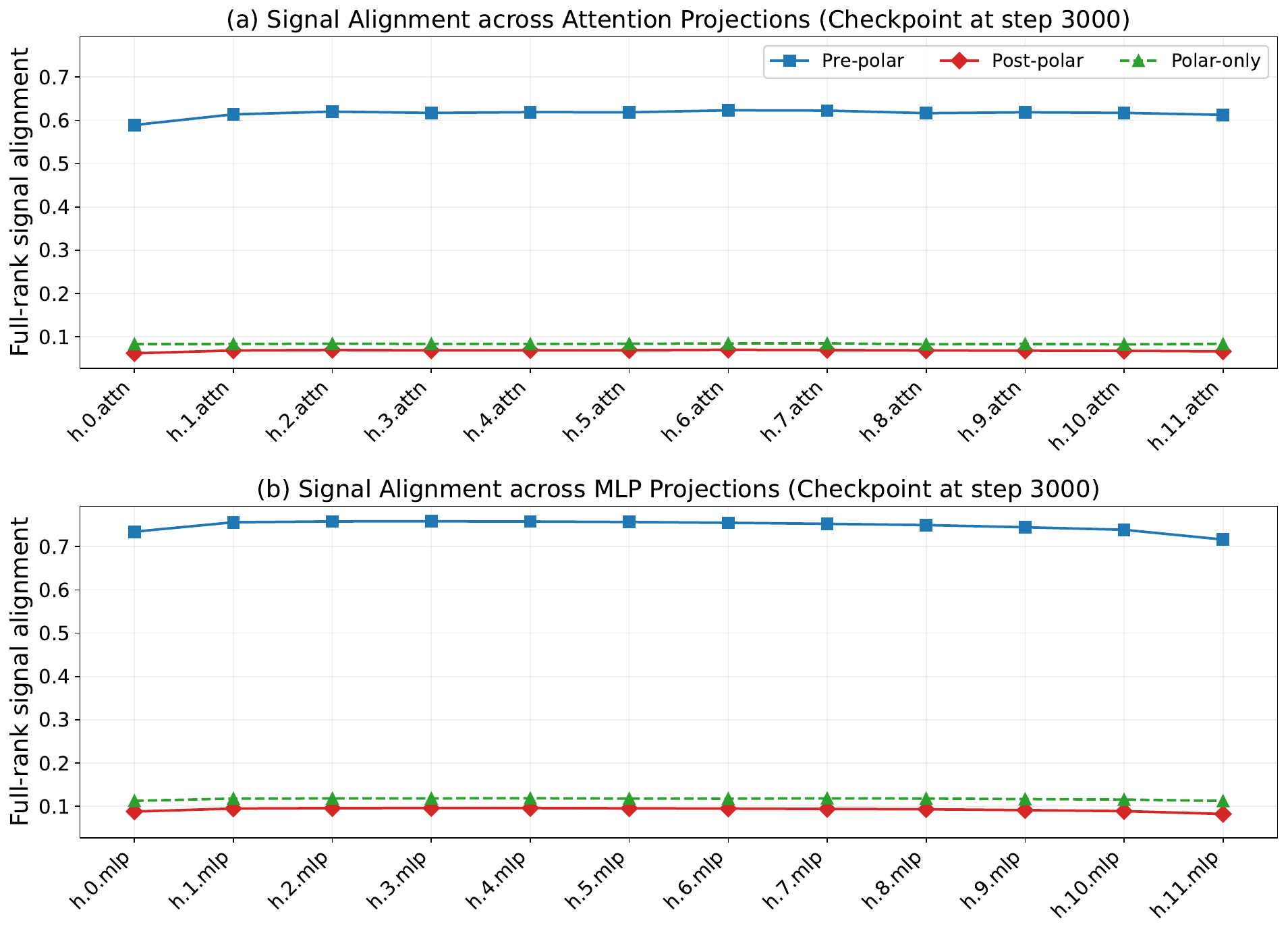}
  \caption{All-layer NanoGPT trajectory full-rank signal alignment across every attention and MLP output projection at training step 3000, $K=50$, $\beta=0.95$, aggregated over three seeds (1337, 1338, 1339). Curve and reference conventions follow \cref{fig:frozen-thm2}.}
  \label{fig:app-nanogpt-all-layer}
\end{figure}

\begin{table}[htbp]
    \centering\small
    \caption{Mean full-rank Pre-polar / Post-polar / Polar-only signal alignments against $\cO(\bar G)$ and their pairwise gaps, averaged over all 24 projection targets and three seeds (1337, 1338, 1339) at $\beta=0.95$, $K=50$, at four trajectory checkpoints.}
    \label{tab:online-all-layer}
    \begin{tabular}{@{}cccccc@{}}
        \toprule
        Step & Pre-polar align & Post-polar align & Polar-only align & Pre vs. Post & Pre vs. Polar-only \\
        \midrule
        100  & 0.780 & 0.105 & 0.125 & 0.675 & 0.655 \\
        1000 & 0.681 & 0.074 & 0.103 & 0.607 & 0.578 \\
        2000 & 0.683 & 0.078 & 0.094 & 0.606 & 0.589 \\
        3000 & 0.682 & 0.080 & 0.100 & 0.602 & 0.582 \\
        \bottomrule
    \end{tabular}
\end{table}

\begin{figure}[htbp]
  \centering
  \includegraphics[width=0.9\linewidth]{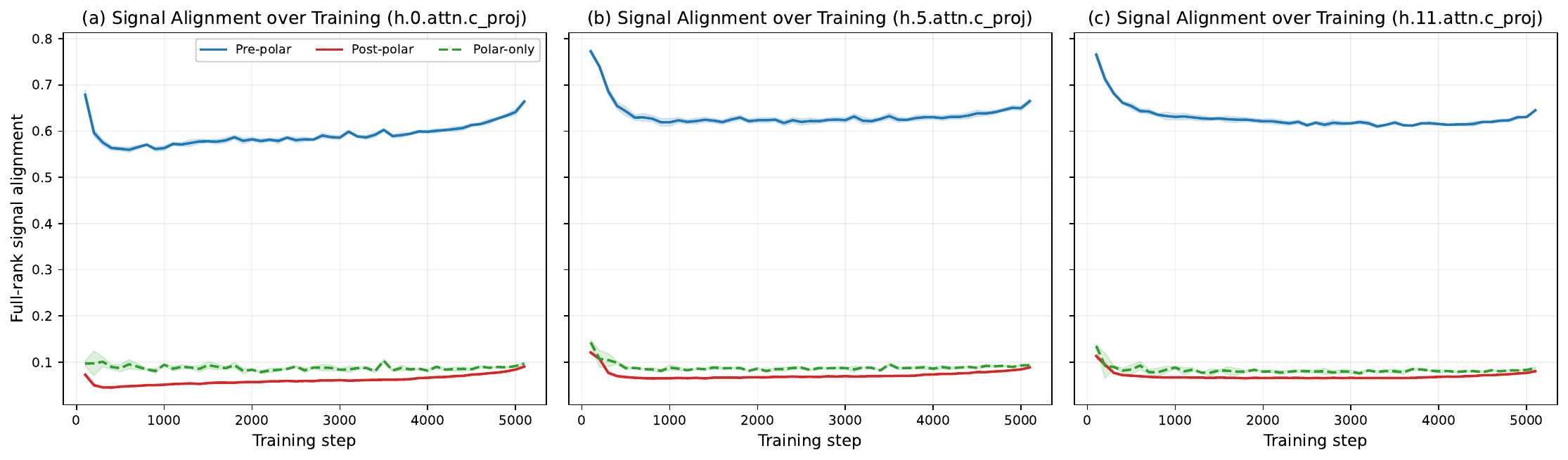}
  \caption{NanoGPT trajectory signal alignment over training on attention output projections \texttt{h.0}, \texttt{h.5}, and \texttt{h.11} at $K=\Kheadline$, $\beta=0.95$, three-seed mean with sample standard deviation bands across seeds. Curve and reference conventions follow \cref{fig:frozen-thm2}.}
  \label{fig:app-nanogpt-online-checkpoints}
\end{figure}


\section{Hyperparameter Sensitivity in Signal Strength}
\label{app:signal-strength}

This appendix sweeps the signal-to-noise ratio across three data sources at fixed momentum coefficient $\beta=0.95$ (full settings in \Cref{app:exp-tasks-signal-strength}). Synthetic $\lambda$ acts directly as the leading signal singular value. On CIFAR-10 and NanoGPT, we use the mini-batch size as a proxy for SNR: at a fixed checkpoint, the stochastic mini-batch gradient is an unbiased estimator of the full-batch gradient, and its sampling noise decreases as the mini-batch size increases. Thus, increasing the mini-batch size lowers the per-step gradient noise and raises the effective gradient SNR~\citep{smith2020generalization}. All panels follow the curve, marker, and reference conventions of \cref{fig:frozen-thm2}. Across all panels, the Pre-polar signal alignment dominates the signal alignments of both non-denoised pipelines: as SNR decreases (smaller $\lambda$ or smaller batch), the Post-polar and Polar-only signal alignments drop sharply while Pre-polar retains high signal alignment.

\paragraph{Synthetic $\lambda$ Sweep}
\Cref{fig:app-synth-signal-strength} extends \Cref{thm:recovery} on the rank-3 spiked model shared with \cref{fig:app-synth-ordering} to a sweep over the leading signal singular value $\lambda$. The Pre-polar signal alignment dominates the signal alignments of both non-denoised pipelines at every $\lambda$.

\begin{figure}[htbp]
  \centering
  \includegraphics[width=0.5\linewidth]{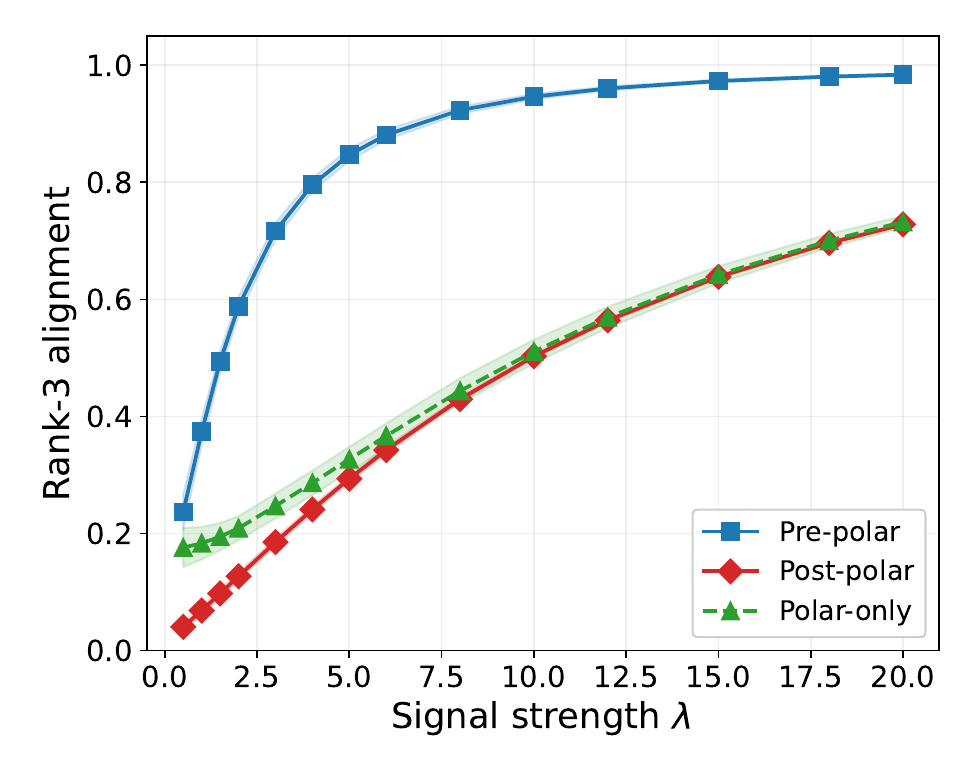}
  \caption{Synthetic rank-3 subspace alignment $\|U_3^\top A V_3\|_F / \sqrt{3}$ against the planted $(U_{\mathrm{true}},V_{\mathrm{true}})$ versus signal strength $\lambda$, on the rank-3 spiked model shared with \cref{fig:app-synth-ordering}. Pre-polar (blue squares, $\cO(M_K^{(\beta)})$), Post-polar (red diamonds, $\widetilde M_K^{(\beta)}$), Polar-only (green dashed, $\cO(G_K)$). Bands are the standard deviation across $10$ random-seed trials at $\beta=0.95$.}
  \label{fig:app-synth-signal-strength}
\end{figure}

\paragraph{CIFAR-10 Batch Sweep}
\Cref{fig:app-cifar-batch} extends \Cref{thm:recovery} to six independent CIFAR-10 stationary probes at \texttt{layer2.0.conv1} of a ResNet-18 warmed to step 500, sweeping the mini-batch size with $K=200$ collected gradients per probe. As the mini-batch size shrinks and the per-step gradient noise grows, Post-polar and Polar-only alignment drop sharply while Pre-polar retains a high alignment, because the momentum buffer averages out the per-step noise before the polar factor is applied. The Pre-polar signal alignment therefore dominates the signal alignments of both non-denoised pipelines in every panel and at every mini-batch size.

\begin{figure}[htbp]
  \centering
  \includegraphics[width=0.75\linewidth]{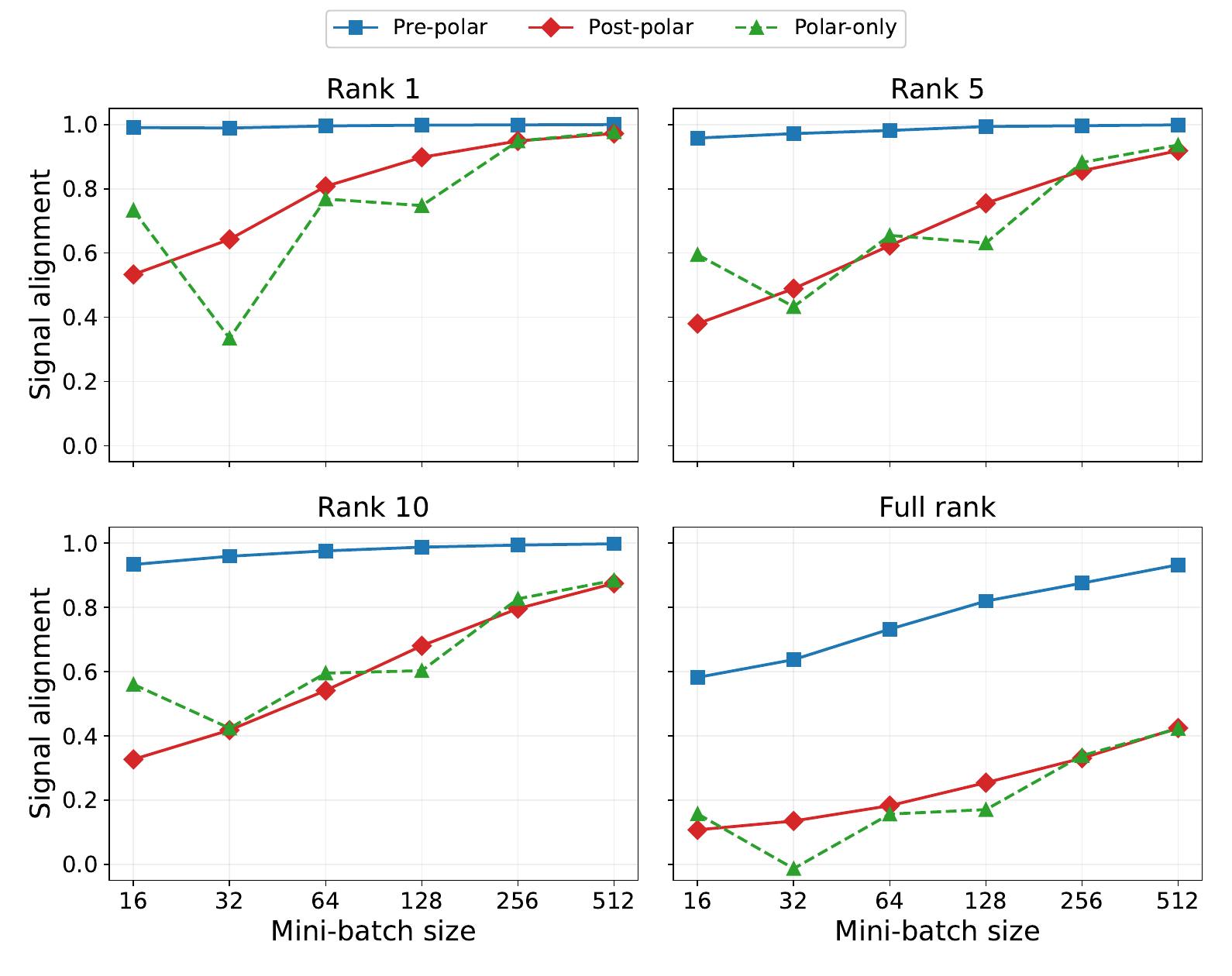}
  \caption{CIFAR-10 stationary signal alignment versus mini-batch size on \texttt{layer2.0.conv1} (128 $\times$ 576) of a ResNet-18, warmup step 500, $K=200$ per probe, $\beta=0.95$. Pre-polar (blue squares, $\cO(M_K^{(\beta)})$), Post-polar (red diamonds, $\widetilde M_K^{(\beta)}$), Polar-only (green dashed, $\cO(G_K)$). Panels are rank-1, rank-5, rank-10, and full-rank alignment against $\bar G$.}
  \label{fig:app-cifar-batch}
\end{figure}

\paragraph{NanoGPT Batch Sweep}
\Cref{fig:app-nanogpt-batch} extends \Cref{thm:recovery} to six independent NanoGPT stationary probes at \texttt{h.0.attn.c\_proj} (768 $\times$ 768) of the step-3000 checkpoint, sweeping the mini-batch size with $K=500$ collected gradients per probe experiment. The same phenomenon appears: as the mini-batch size shrinks and the per-step gradient noise grows, Post-polar and Polar-only alignment drop while Pre-polar retains a higher alignment because of the momentum filtering. The Pre-polar signal alignment dominates the signal alignments of both non-denoised pipelines in every panel and at every mini-batch size.

\begin{figure}[htbp]
  \centering
  \includegraphics[width=0.75\linewidth]{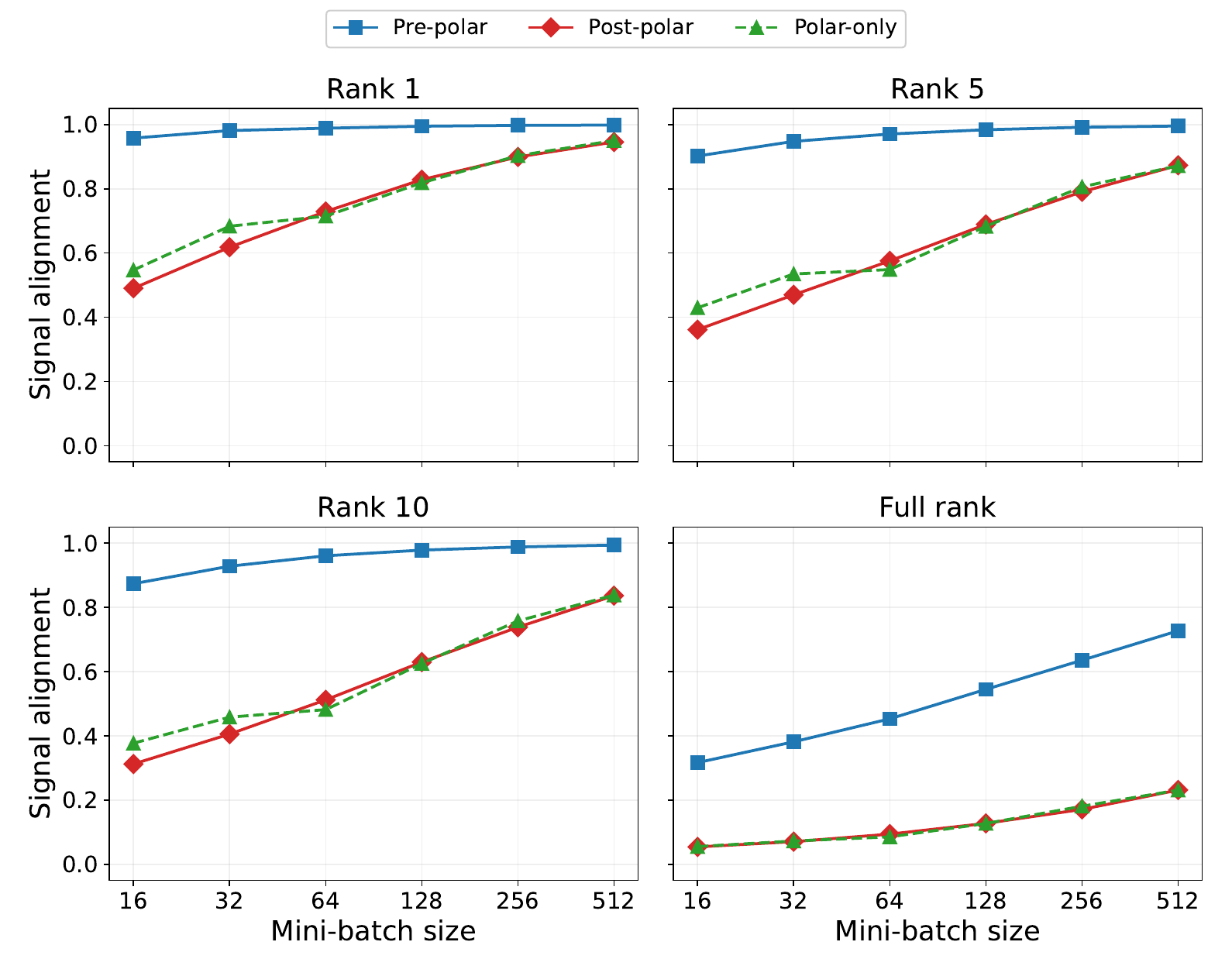}
  \caption{NanoGPT stationary signal alignment versus mini-batch size on \texttt{h.0.attn.c\_proj} (768 $\times$ 768) at the step-3000 checkpoint, $K=500$ per probe, $\beta=0.95$. Pre-polar (blue squares, $\cO(M_K^{(\beta)})$), Post-polar (red diamonds, $\widetilde M_K^{(\beta)}$), Polar-only (green dashed, $\cO(G_K)$). Panels are rank-1, rank-5, rank-10, and full-rank alignment against $\bar G$.}
  \label{fig:app-nanogpt-batch}
\end{figure}


\section{Hyperparameter Sensitivity in Buffer Size}
\label{app:k-sweep}

This appendix tests robustness of Pre-polar advantage to the trajectory buffer size. The two sweeps share $\beta=0.95$ ($T=20$) and vary $K$ from $2.5T$ to $10T$. The Pre-polar signal alignment dominance over the signal alignments of Post-polar and Polar-only holds at every $K$, and Pre-polar to Polar-only alignment ratio stays approximately constant at $6\times$ on both CIFAR-10 and NanoGPT. Absolute alignment values decrease with $K$ as $\bar G$ smooths over more training steps. The trajectory headline $K$ choices in CIFAR-10 and NanoGPT are robust within this range.

\paragraph{CIFAR-10 $K$-Sweep}
\Cref{tab:cifar-online-ksweep} reports four buffer sizes $K\in$ \{50,100,150,200\} at $\beta=0.95$ ($T=20$) on the CIFAR-10 trajectory probe of \texttt{layer2.0.conv1} at training step 1500. As $K$ grows from 50 to 200, the Pre-polar -- Post-polar full-rank signal alignment gap falls from 0.65 to 0.33, with positive Pre-polar advantage at every row. Pre-polar to Polar-only alignment ratio stays approximately constant at 6$\times$ across the sweep. The CIFAR-10 trajectory probe (\Cref{fig:app-cifar-online-history}) uses $K=100$.

\begin{table}[htbp]
    \centering\small
    \caption{CIFAR-10 trajectory $K$-sweep on \texttt{layer2.0.conv1} at training step 1500, $\beta=0.95$. Rank-5 columns are $\mathrm{Align}_5$ and full-align columns are $\mathrm{Align}_{\mathrm{full}}$, both against $\bar G$ (\Cref{app:measurements}). The Pre/Polar ratio is Pre-polar full-rank alignment divided by Polar-only full-rank alignment.}
    \label{tab:cifar-online-ksweep}
    \begin{tabular}{@{}cccccccc@{}}
        \toprule
        $K$ & Pre r5 & Post r5 & Gap r5 & Pre full-align & Post full-align & Polar-only full-align & Pre/Polar ratio \\
        \midrule
        50 & 0.8817 & 0.1354 & 0.7463 & 0.7479 & 0.1007 & 0.1192 & 6.27 \\
        100 & 0.6703 & 0.0947 & 0.5756 & 0.5420 & 0.0770 & 0.0827 & 6.55 \\
        150 & 0.5781 & 0.0855 & 0.4926 & 0.4452 & 0.0659 & 0.0676 & 6.59 \\
        200 & 0.5222 & 0.0732 & 0.4490 & 0.3837 & 0.0566 & 0.0629 & 6.10 \\
        \bottomrule
    \end{tabular}
\end{table}

\paragraph{NanoGPT $K$-Sweep}
\cref{tab:nanogpt-online-ksweep} reports four buffer sizes $K\in$ \{50,100,150,200\} at $\beta=0.95$ ($T=20$) on the NanoGPT trajectory probe of \texttt{h.0.attn.c\_proj}, single seed 1337. As $K$ grows from 50 to 200, the Pre-polar -- Post-polar full-rank signal alignment gap falls monotonically from 0.57 to 0.28, with positive Pre-polar advantage at every row. Pre-polar to Polar-only alignment ratio stays approximately constant at 6$\times$ across the sweep. The NanoGPT trajectory probe uses $K=\Kheadline$, matching the all-layer view of \Cref{app:thm2-supplement}.

\begin{table}[htbp]
    \centering\small
    \caption{NanoGPT trajectory $K$-sweep on \texttt{h.0.attn.c\_proj} at the final training step, single seed 1337, $\beta=0.95$. Settings beyond $K$ and seed match \Cref{app:exp-setup}. Column conventions follow \cref{tab:cifar-online-ksweep}.}
    \label{tab:nanogpt-online-ksweep}
    \begin{tabular}{@{}cccccccc@{}}
        \toprule
        $K$ & Pre r5 & Post r5 & Gap r5 & Pre full-align & Post full-align & Polar-only full-align & Pre/Polar ratio \\
        \midrule
        50 & 0.8586 & 0.1603 & 0.6983 & 0.6621 & 0.0911 & 0.1001 & 6.61 \\
        100 & 0.7037 & 0.1278 & 0.5758 & 0.4673 & 0.0705 & 0.0710 & 6.58 \\
        150 & 0.6382 & 0.1103 & 0.5279 & 0.3784 & 0.0591 & 0.0602 & 6.29 \\
        200 & 0.5901 & 0.1018 & 0.4883 & 0.3289 & 0.0524 & 0.0544 & 6.05 \\
        \bottomrule
    \end{tabular}
\end{table}

\end{document}

%% file: preamble.tex
\usepackage[linesnumbered,ruled]{algorithm2e}
\usepackage{amsmath}
\usepackage{amssymb}
\usepackage{amsthm}
\usepackage{bm}
\usepackage{booktabs}
\usepackage[font=footnotesize,labelfont=bf]{caption}
\usepackage[nameinlink,noabbrev]{cleveref}
\usepackage{color}
\usepackage{dsfont}
\usepackage[T1]{fontenc}
\usepackage{mathtools}
\usepackage{mathrsfs}
\usepackage{multirow}
\usepackage{pifont}
\usepackage{rotating}
\usepackage[group-separator={,},group-minimum-digits={3}]{siunitx}
\usepackage{subcaption}
\usepackage{tabularx}
\usepackage{tikz}
\usepackage{wrapfig}

\hypersetup{
  colorlinks=true,
  linkcolor={red!15!green!35!blue!95},
  citecolor={green!50!black},
  urlcolor={red!50!black}
}

\DontPrintSemicolon
\SetArgSty{textnormal}
\SetKwInOut{Input}{Input}
\SetKwInOut{Output}{Output}
\SetKwComment{Comment}{$\triangleright$\ }{}

\renewcommand{\tilde}{\widetilde}
\renewcommand{\hat}{\widehat}

\newcommand{\Ebb}{\mathbb{E}}

\newcommand{\Pbb}{\mathbb{P}}

\newcommand{\Rbb}{\mathbb{R}}

\renewcommand{\subset}{\subseteq}

\DeclareMathOperator*{\E}{\Ebb}
\DeclareMathOperator*{\Prob}{\Pbb}

\newcommand{\set}[1]{\left\lbrace{#1}\right\rbrace}

\renewcommand{\epsilon}{\varepsilon}
